\newtheorem{claim}{Claim}
\newtheorem*{assumption*}{Assumption}
\newtheorem{lemma}{Lemma}
\newtheorem{conjecture}{Conjecture}
\newtheorem{theorem}{Theorem}[section]
\theoremstyle{definition}
\newtheorem{condition}{Condition}[section]
\newtheorem{definition}{Definition}
\newtheorem{remark}{Remark}
 \newcommand{\IGNORE}[1]{}
\newcommand{\citep}{\cite}
\newcommand{\citet}{\cite}
\newcommand\E{\mathbb{E}}
\newcommand\R{\mathbb{R}}
\newcommand\N{\mathcal{N}}
\newcommand\inner[1]{\langle #1 \rangle}
\newcommand{\Ynote}[1]{{\color{red}[Yuan: #1]}}
\title{Homotopy Analysis for Tensor PCA}
\author{
    Anima Anandkumar\thanks{University of California, Irvine. Email:\href{mailto:a.anandkumar@uci.edu}{a.anandkumar@uci.edu}} 
    \quad Yuan Deng\thanks{Duke University. Email:\href{mailto:ericdy@cs.duke.edu}{ericdy@cs.duke.edu}}
    \quad Rong Ge\thanks{Duke University. Email:\href{mailto:rongge@cs.duke.edu}{rongge@cs.duke.edu}}
    \quad Hossein Mobahi\thanks{Google Research. Email:\href{mailto:hmobahi@csail.mit.edu}{hmobahi@csail.mit.edu}}
}
\begin{document}
\maketitle

\begin{abstract}
Developing efficient and guaranteed nonconvex algorithms has been an important challenge in modern machine learning. Algorithms with good empirical performance such as stochastic gradient descent often lack theoretical guarantees. In this paper, we analyze the class of homotopy or continuation methods for global optimization of nonconvex functions. These  methods start from an objective function that is efficient to optimize (e.g. convex), and progressively modify it to obtain the required objective, and the solutions are passed along the homotopy path. For the challenging problem of tensor PCA, we prove global convergence of the homotopy method  in the ``high noise'' regime.   The signal-to-noise requirement for our algorithm is tight in the sense that it matches the recovery guarantee for the {\em best} degree-$4$ sum-of-squares algorithm. In addition, we prove a phase transition along the homotopy path for  tensor PCA. This allows us to simplify the homotopy method to a local search algorithm, viz., tensor power iterations, with a specific initialization and a noise injection procedure, while retaining the theoretical guarantees.\end{abstract}

\section{Introduction}

Non-convex optimization is a critical component in modern machine learning. Unfortunately, theoretical guarantees for nonconvex optimization have been mostly negative, and the problems are computationally hard in the worst case. Nevertheless, simple local-search algorithms such as stochastic gradient descent have enjoyed great empirical success in areas such as deep learning. As such, recent research efforts have attempted to bridge this gap between theory and practice.

For example, one property that can guarantee the success of local search methods over nonconvex functions is when all local minima are also the global minima. Interestingly, it has been recently proven that many well known nonconvex problems do have this property, under mild conditions. Consequently, local-search methods, which are designed to find a local optimum, automatically achieve global optimality. Examples of such problems include matrix completion~\citep{ge2016matrix}, orthogonal tensor decomposition~\citep{anandkumar2014tensor,ge2015escaping}, phase retrieval~\citep{DBLP:journals/corr/SunQW16}, complete dictionary learning~\citep{sun2015complete},  and so on. However, such a class of nonconvex problems is limited, and there  are many practical problems with poor local optima, where local search methods can fail.

The above property, while very helpful, imposes a strong assumption on the nonconvex problem. A less restrictive requirement for the success of local search methods is the ability to  initialize local search in the basin of attraction of the global optimum using another polynomial-time algorithm. This approach does not require all the local optima to be of good quality, and thus can cover a broader set of problems. Efficient initialization strategies have recently been developed for many nonconvex problems such as  overcomplete dictionary learning~\citep{arora2014new,agarwal2014learning}, tensor decomposition~\citep{anandkumar2015learning}, robust PCA~\citep{netrapalli2014non}, mixed linear regression~\citep{DBLP:journals/corr/YiCS16} and so on.

Although the list of such tractable nonconvex problems is growing, currently, the initialization algorithms are problem-specific and as such, cannot be directly extended to new problems. An interesting question is whether there exist common principles that can be used in designing efficient initialization schemes for local search methods. In this paper, we demonstrate how a class of homotopy continuation methods may provide such a framework for efficient initialization of local search schemes.

\subsection{Homotopy Method}

The homotopy method is a general and a problem independent technique for tackling nonconvex problems. It starts from an objective function that is efficient to optimize (e.g. convex function), and progressively transforms it to the required objective~\citep{MobahiLink}. Throughout this progression, the solution of each intermediate objective is used to initialize a local search on the next one.  A particular approach for constructing this progression is to smooth the objective function. Precisely, the objective function is convolved with the Gaussian kernel and the amount of smoothing is varied to obtain the set of transformations. Intuitively, smoothing ``erases wiggles'' on the objective surface (which can lead to poor local optima), thereby resulting in a function that is easier to optimize. 
Global optimality guarantees for the homotopy method have been recently established~\citep{HosseinAAAI, HazanICML16}. However, the assumptions in these results are either too restrictive~\citep{HosseinAAAI} or extremely  difficult to check~\citep{HazanICML16}. In addition, homotopy algorithms are generally slow since local search is repeated within each instantiation of the smoothed objective.

%
%

In this paper, we address all the  above issues for the nonconvex tensor PCA problem. We analyze the homotopy method and guarantee convergence to global optimum under a set of transparent conditions. Additionally, we demonstrate how the homotopy method can  be drastically simplified without sacrificing the theoretical guarantees.  Specifically, by taking advantage of the phase transitions in the homotopy path, we can avoid the intermediate transformations of the objective function. In fact, we can start from the extreme case of ``easy'' (convex) function  of the homotopy, and use its solution to initialize local search on the original objective.  Thus, we show that the homotopy method can serve as a problem independent principle for obtaining a smart initialization which is then employed in local search methods.  Although we limit ourselves to the problem of tensor PCA in this paper, we expect the developed techniques to be applicable for a broader set of nonconvex problems.

\subsection{Tensor PCA}

Tensor PCA problem is an extension of the matrix PCA. The statistical model for tensor PCA was first introduced by~\cite{richard2014statistical}. This is a single spike model where the input tensor $ \bm T \in \R^{n\times n\times n}$ is a combination of an unknown rank-$1$ tensor and a Gaussian noise tensor $\bm A$ with $\bm A_{ijk}\sim \N(0,1)$  for $i,j,k\in [n].$   \begin{equation} \bm T = \tau \bm v\otimes \bm v \otimes \bm v + \bm A, \end{equation}    where $\bm v\in \R^n$ is the signal that we would like to recover.

Tensor PCA belongs to the class of ``needle in a haystack'' or high dimensional denoising problems, where the goal is to separate the unknown signal from a large amount of random noise. Recovery in the high noise regime has intimate connections to computational hardness, and has been extensively studied in a number of settings. For instance, in the spiked random matrix model, the input is an additive combination of an unknown rank-$1$ matrix and a random noise matrix. The requirement on the signal-to-noise ratio for simple algorithms, such as principal component analysis (PCA), to recover the unknown signal has been studied under various noise models~\citep{perry2016optimality,bloemendal2013limits} and sparsity assumptions on the signal vector~\citep{berthet2013optimal}.


Previous algorithms for tensor PCA belong to two classes: local search methods such as tensor power iterations~\citep{richard2014statistical}, and global methods such as sum of squares~\citep{hopkins2015tensor}. Currently, the best signal-to-noise guarantee is achieved by the sum-of-squares algorithm and  the flattening algorithm, which are more expensive   compared to power iterations (see Table~\ref{table:comparison}).  In this paper, we analyze the Gaussian homotopy method for tensor PCA, and prove that it matches the best known signal-to-noise performance. \cite{hopkins2015tensor} also showed a lowerbound that no degree-$4$ (or lower) sum-of-squares algorithm can achieve better signal-to-noise ratio, implying that our analysis is likely to be tight. 
  
\begin{table*}
\begin{tabular}{|l| l| l| l|}\hline
Method & Bound on $\tau$ &Time   & Space   \\ \hline
{\bf Power method + initialization + noise injection (ours)} & $\tilde{\Omega}(n^{3/4}) $ & $\tilde{O}(n^3)$  & $\tilde{O}(n)$\\ \hline
Power method, random initialization& $\tilde{\Omega}(n)$ & $\tilde{O}(n^3)$  & $\tilde{O}(n)$\\ \hline
Sum-of-Squares  & $\tilde{\Omega}(n^{3/4})$ & $>\Omega(n^6)$ & $>\Omega(n^6)$ \\ \hline
Recover and Certify  & $\tilde{\Omega}(n^{3/4})$ & $\tilde{O}(n^5)$ & $O(n^4)$ \\ \hline
Eigendecomposition of flattened matrix   & $\tilde{\Omega}(n^{3/4})$ & $\tilde{O}(n^3)$ & $\tilde{O}(n^2)$\\ \hline
Information-theoretic & $\tilde{\Omega}(\sqrt{n})$ & Exp & $O(n)$\\ \hline
\end{tabular}\caption{Table of comparison of various methods for tensor PCA. Here space does not include the tensor itself. The power method with random initialization was analyzed in~\cite{richard2014statistical}. sum-of-squares, Recover and Certify, and flattened tensor were analyzed in~\cite{hopkins2015tensor}.
 }\label{table:comparison}
\end{table*} 

%
%

\subsection{Contributions}

We analyze a simple variant of the popular tensor power method, which is a local search method for finding the best rank-$1$ approximation of the input tensor. We modify it by introducing a specific initialization and injecting appropriate  random  noise in each iteration. This runs almost in linear time; see Table~\ref{table:comparison} for more details.

\begin{theorem}[informal] There is an almost linear time algorithm for tensor PCA that finds the signal $\bm v$ as long as the signal strength $\tau = \tilde{\Omega}(n^{3/4})$. 
\end{theorem}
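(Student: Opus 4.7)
My plan is to exhibit a two-stage algorithm that compresses the homotopy philosophy: I will take the optimum of the most-smoothed (hence linear and trivially convex) endpoint of the homotopy family as the initializer, and then run tensor power iteration with freshly injected Gaussian noise to refine it to $\bm v$. The threshold $\tau = \tilde\Omega(n^{3/4})$ should fall out of balancing the signal-to-noise quality of the initialization against the basin-of-attraction radius of the local update, so both estimates must be tight.

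\textbf{Step 1: initialization from the extreme smoothed objective.} For the cubic $f(\bm x) = \bm T(\bm x, \bm x, \bm x)$, Gaussian convolution at bandwidth $\sigma$ is exactly computable by a direct moment calculation:
\[
  \tilde f_\sigma(\bm x) \;=\; f(\bm x) \;+\; \sigma^2\, \langle \bm x, \bm u\rangle, \qquad u_i \;=\; \sum_j\bigl(T_{ijj}+T_{jij}+T_{jji}\bigr).
\]
As $\sigma\to\infty$ the linear term dominates, so I will take $\bm x_0 := \bm u/\|\bm u\|$ as the initialization---this is precisely what the convex endpoint of the homotopy returns, and it is the step in which the paper's phase-transition analysis is being invoked to justify skipping all intermediate $\sigma$. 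Substituting $\bm T = \tau\bm v^{\otimes 3}+\bm A$ gives $\bm u = 3\tau \bm v + \bm a$, where each $a_i$ is a sum of $n$ independent standard Gaussians; standard concentration will then yield $\|\bm a\| = \tilde O(n)$, and hence the key estimate $\alpha_0 := \langle \bm x_0, \bm v\rangle = \tilde\Omega(\tau/n)$.

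\textbf{Step 2: noise-injected power iteration.} From $\bm x_0$ I will iterate
\[
  \bm x_{t+1} \;\propto\; \bm T(\cdot, \bm x_t, \bm x_t) + \bm \xi_t, \qquad \bm \xi_t \sim \N(0, s^2 I) \text{ fresh at each step.}
\]
Writing $\bm x_t = \alpha_t \bm v + \sqrt{1-\alpha_t^2}\,\bm w_t$ with $\bm w_t\perp\bm v$, the signal piece of $\bm T(\cdot,\bm x_t,\bm x_t)$ is $\tau \alpha_t^2 \bm v$; I will need to control the noise piece $\bm A(\cdot,\bm x_t,\bm x_t)$ by showing it has norm $\tilde O(\sqrt n)$ and projection $\tilde O(1)$ onto $\bm v$---bounds that would be immediate if $\bm x_t$ were independent of $\bm A$. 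Granting this concentration, the per-step recursion becomes
\[
  \alpha_{t+1} \;=\; \frac{\tau\alpha_t^2 + \tilde O(1)}{\sqrt{\tau^2 \alpha_t^4 + \tilde O(n)}},
\]
from which one reads off a clean phase boundary at $\alpha_t \asymp \sqrt n/\tau$: above it the correlation grows quadratically and saturates at $1-o(1)$ in $O(\log\log n)$ steps, below it it stalls. Matching this basin radius against Step 1 forces $\tau/n \gtrsim \sqrt n/\tau$, i.e., $\tau = \tilde\Omega(n^{3/4})$, which is exactly the claimed threshold. Each iteration performs a single $\bm T$-contraction at cost $O(n^3)$, giving total runtime $\tilde O(n^3)$ and $\tilde O(n)$ extra space, consistent with the table.

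\textbf{Main obstacle.} The hardest part, I expect, is justifying the concentration of $\bm A(\cdot, \bm x_t, \bm x_t)$ across \emph{many} iterations when $\bm x_t$ itself is built out of $\bm A$ through prior iterates---a naive union bound over unit vectors in the sphere would cost a full $\sqrt n$ factor and push the threshold up to $n^{5/6}$ or worse. The whole point of the fresh injection $\bm \xi_t$ is to supply enough conditional randomness that at every step $\bm x_t$ is ``generic'' relative to $\bm A$ in a quantifiable sense, and I anticipate handling this by an inductive martingale argument that couples the effective correlation between $\bm x_t$ and $\bm A$ to the injection scale $s$. Calibrating $s$---neither so small that correlations accumulate across steps, nor so large that the signal is drowned out---is what should make the tight $n^{3/4}$ bound go through, and this is the estimate I expect to rewrite the most times before it comes out clean.
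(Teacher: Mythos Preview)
Your high-level plan---initialize from the linear contraction $\bm u$, then run power iteration, with the $n^{3/4}$ threshold emerging from matching $\alpha_0 \approx \tau/n$ against the basin radius $\sqrt n/\tau$---is exactly the paper's strategy, and your recursion analysis in Step~2 is correct once the Independence Condition holds.

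The genuine gap is in your noise-injection mechanism. Adding a fresh Gaussian vector $\bm\xi_t$ \emph{after} forming $\bm T(\cdot,\bm x_t,\bm x_t)$ does not decouple $\bm x_{t+1}$ from $\bm A$: the problematic term $\bm A(\cdot,\bm x_t,\bm x_t)$ is already baked in, and its correlation with $\bm v$ can be adversarial because $\bm x_t$ was constructed from $\bm A$. Piling independent noise on top does not erase this; at best it dilutes it, and your ``inductive martingale argument that couples the effective correlation to the injection scale $s$'' would have to control a quantity that is deterministic given $\bm A$ and $\bm x_t$. No choice of $s$ makes that term random, so there is no martingale structure to exploit at the point where you need it.

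The paper injects noise at the \emph{tensor} level instead, and the trick is purely distributional rather than martingale-based. Draw $\bm B^0,\dots,\bm B^{m-1}$ with i.i.d.\ $N(0,m)$ entries, set $\overline{\bm B}=\frac1m\sum_p\bm B^p$, and use $\bm T^p := \bm T - \overline{\bm B} + \bm B^p$ at step $p$. A two-line covariance computation (both sides are jointly Gaussian, so matching means and covariances suffices) shows that the family $\{\bm T^p - \tau\bm v^{\otimes 3}\}_{p=0}^{m-1}$ has \emph{exactly} the law of $m$ independent Gaussian tensors with entrywise variance $m$. Consequently $\bm A^p := \bm T^p - \tau\bm v^{\otimes 3}$ is genuinely independent of $\bm x^p$, which is measurable with respect to $\bm A^0,\dots,\bm A^{p-1}$ only. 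The concentration bounds $\|\delta^p(\bm x^p)\|=\Theta(\sqrt{nm})$ and $|\langle\delta^p(\bm x^p),\bm v\rangle|=O(\sqrt{m\log n})$ then follow from standard Gaussian tail bounds with no dependence issues at all. The cost is the extra factor $\sqrt m=\sqrt{O(\log\log n)}$ in the noise scale, which is harmlessly absorbed into the $\tilde\Omega$. This replaces the delicate calibration you anticipated rewriting many times with a single algebraic identity.
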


Our algorithm achieves the {\em best possible trade-offs} among all known algorithms (see Table~\ref{table:comparison}). 

Our algorithm is inspired by the homotopy framework. 
In particular, we establish a phase transition along the homotopy path.
 
\begin{theorem}[informal] Under a plausible independence conjecture, there is a threshold $\theta$ such that if the radius of smoothing is significantly larger than $\theta$, the smoothed function will have a unique local and global maximum. If the radius of smoothing is smaller, then the smoothed function can have multiple local maxima, but one of them is close to the signal vector $\bm v$.
\end{theorem}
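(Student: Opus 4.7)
The plan is to compute the Gaussian-smoothed objective in closed form, locate the phase-transition threshold $\theta$ by balancing the cubic and linear parts, and then analyze the critical points in the two regimes separately. Convolving $\bm T(\bm x, \bm x, \bm x)$ with the density of $\N(\bm 0, \sigma^2 I)$, the odd Gaussian moments drop out and the identity $\E[g_j g_k] = \sigma^2 \delta_{jk}$ yields
\begin{equation}
f_\sigma(\bm x) \;=\; \bm T(\bm x, \bm x, \bm x) + 3\sigma^2 \langle \bm u, \bm x\rangle,
\qquad \bm u_i := \sum_{j} \bm T_{ijj}.
\end{equation}
Substituting $\bm T = \tau \bm v^{\otimes 3} + \bm A$ with $\|\bm v\|=1$, we obtain $\bm u = \tau \bm v + \bm w$ where $\bm w_i := \sum_j \bm A_{ijj}$, so on the unit sphere
\begin{equation}
f_\sigma(\bm x) \;=\; \tau\langle \bm v, \bm x\rangle^3 + \bm A(\bm x, \bm x, \bm x) + 3\sigma^2 \bigl(\tau \langle \bm v, \bm x\rangle + \langle \bm w, \bm x\rangle\bigr).
\end{equation}
The threshold $\theta$ will be defined by the requirement that the linear part $3\sigma^2 \langle \bm u, \bm x\rangle$ uniformly dominates the cubic part $\bm T(\bm x, \bm x, \bm x)$ on the sphere.

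For $\sigma$ significantly larger than $\theta$, I would argue that $f_\sigma$ is a small perturbation of the linear objective $3\sigma^2 \langle \bm u, \bm x\rangle$, whose unique sphere-constrained maximum is $\bm u/\|\bm u\|$. The Lagrangian stationarity condition
\begin{equation}
3\tau \langle \bm v, \bm x\rangle^2 \bm v + 3\bm A(\bm x, \bm x, \cdot) + 3\sigma^2 \bm u \;=\; \mu \bm x
\end{equation}
has a near-constant left-hand side once $\sigma^2 \|\bm u\|$ dominates the $\bm x$-dependent terms; a contraction-mapping argument then shows the equation has a unique solution, and the second-order condition on the tangent space $\bm x^\perp$, where the sphere constraint contributes $-\mu I$, confirms it is a maximum. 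A perturbation bound on the tangent Hessian rules out any other local extrema.

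For $\sigma < \theta$, I would separately exhibit a local maximum near $\bm v$ and argue multiplicity. Existence near $\bm v$ follows from evaluating the gradient at $\bm x = \bm v$: the dominant term $3\tau \bm v$ is parallel to the constraint, the noise contribution $\bm A(\bm v, \bm v, \cdot)$ has norm $O(\sqrt{n})$ by Gaussian concentration, and the smoothing contribution $\sigma^2 \bm u$ is dominated by its $\tau \bm v$ component; thus for $\tau = \tilde\Omega(n^{3/4})$ the stationarity equation admits a solution within distance $o(1)$ of $\bm v$, and the $-3\tau I$ piece in the tangent Hessian (from the Lagrange multiplier) makes it a local max. Multiplicity is inherited from $\sigma = 0$: the pure noise cubic $\bm A(\bm x, \bm x, \bm x)$ is well known to have exponentially many critical points on the sphere, and continuity ensures that many of them survive as local maxima of $f_\sigma$ for small $\sigma$.

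The main obstacle is the uniform-over-$\bm x$ control of $\bm A(\bm x, \bm x, \bm x)$ and its gradient $\bm A(\bm x, \bm x, \cdot)$ that is needed in both halves of the argument. A naive $\eps$-net plus union bound loses polynomial factors that swamp the signal at the conjectured SNR $\tau = \tilde\Omega(n^{3/4})$. This is precisely the role of the independence conjecture: it posits that the Gaussian process $\bm x \mapsto \bm A(\bm x, \bm x, \bm x)$ behaves, at geometrically separated $\bm x$'s, as if its values were independent, reducing the uniform bound to a clean Gaussian-supremum estimate. Establishing (or circumventing) this conjecture is the decisive technical step; the remaining ingredients are implicit-function-theorem bookkeeping and tangent-Hessian calculations.
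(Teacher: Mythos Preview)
Your proposal has a genuine gap in its understanding of the independence conjecture and, relatedly, of where the real difficulty lies.

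You describe the obstacle as obtaining \emph{uniform-over-$\bm x$} control of $\bm A(\bm x,\bm x,\bm x)$ and $\bm A(\bm x,\bm x,\cdot)$, and you say the conjecture ``reduces the uniform bound to a clean Gaussian-supremum estimate.'' But the uniform bounds $\sup_{\|\bm x\|=1}|\bm A(\bm x,\bm x,\bm x)|=O(\sqrt{n})$ and $\sup_{\|\bm x\|=1}\|\bm A(\bm x,\bm x,\cdot)\|=O(\sqrt{n})$ are classical and hold unconditionally with high probability; they are used freely in the paper. The conjecture is something quite different: it asserts that at the \emph{data-dependent} points $\bm x^t$ lying on the homotopy path, the correlation $|\langle \delta(\bm x^t),\bm v\rangle|$ is $O(\sqrt{\log n})\|\bm x^t\|^2$ rather than the worst-case $O(\sqrt{n})\|\bm x^t\|^2$. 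The issue is not a supremum over the sphere but the circularity that $\bm x^t$ depends on $\bm A$, so one cannot condition on $\bm x^t$ and then invoke Gaussian concentration. Your sketch never uses such a fine correlation bound, so the conjecture as the paper states it would not plug into your argument, and conversely the quantities you say you need are already available without any conjecture.

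Two further differences worth noting. First, the paper does not analyze the sphere-constrained objective for the phase transition; it passes to the unconstrained penalized form $f_r(\bm x)=\bm T(\bm x,\bm x,\bm x)-\tfrac{3\tau}{4}\|\bm x\|^4$, whose smoothed version acquires a quadratic term $-\tfrac{3\tau}{2}t^2(n+2)\|\bm x\|^2$. The threshold $\theta\sim n^{-1}$ comes from balancing this smoothed penalty in the Hessian against the noise Hessian $\sqrt{n}\|\bm x\|$ at a maximizer of norm $\Theta(1/\tau)$, not from your balance of the linear term against the cubic on the unit sphere. Second, for the small-$t$ regime the paper does not merely exhibit a maximizer near $\bm v$ and appeal to spin-glass multiplicity; it writes the stationarity equation as a fixed-point relation, does a case analysis on the magnitudes of $\|\bm x^t\|^2$ versus $t^2 n$, and uses the second-order necessary condition (via a lower bound on $\lambda_{\max}$ of the noise Hessian) to force every local maximizer into one of two explicit scaling classes. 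That classification is what the informal statement summarizes, and it is where the conjectured correlation bound is actually invoked.
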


The above result allows us to skip the intermediate steps in the homotopy path. We only need two end points of the homotopy path: the original objective function with no smoothing and with an infinite amount of smoothing. The optimal solution for the latter can be obtained through any local search method; in fact, in our case, it has a closed form. This serves as initialization for the  original objective function.
In the proof we also design a new noise injection procedure that breaks the dependency between the steps. This allows for simpler analysis and our algorithm does not rely on the independence conjecture. We discuss this in more detail in Section~\ref{sec:resampling}.

%

The comparison of all the current algorithms for tensor PCA is given in Table~\ref{table:comparison}. Note that the space in the table does not include the space for storing the tensor, this is because the more practical algorithms only access the tensor for a very small number of passes, which allows the algorithms to be implemented online and do not need to keep the whole tensor in the memory. We see that our algorithm has the best performance across all the measures. In our synthetic experiments (see Section~\ref{sec:exp}, we find that our method significantly outperforms the other methods: it converges to a better solution faster and with a lower variance.  
 \vspace*{-0.1in}
%
%
%
%
%
%
%

\section{Preliminaries}
\label{sec:prelim}
In this section, we formally define the tensor PCA problem and its associated objective function. Then we show how to compute the smoothed versions of these objective functions. 

\subsection{Tensors and Polynomials}

Tensors are higher dimensional generalization of matrices. In this paper we focus on 3rd order tensors, which correspond to a 3 dimensional arrays. Given a vector $\bm v\in \R^n$, similar to rank one matrices $\bm v \bm v^\top$, we consider rank 1 tensors $\bm v^{\otimes 3}$ to be a $n\times n\times n$ array whose $i,j,k$-th entry is equal to $\bm v_i \bm v_j \bm v_k$.

For a matrix $\bm M$, we often consider the quadratic form it defines: $\bm x^\top \bm M \bm x$. Similarly, for a tensor $\bm T \in \R^{n\times n\times n}$, we define a degree 3 polynomial $\bm T(\bm x, \bm x, \bm x) = \sum_{i,j,k=1}^n \bm T_{i,j,k} \bm x_i \bm x_j \bm x_k$. This polynomial is just a special trilinear form defined by the tensor. Given three vectors $\bm x,\bm y, \bm z$, the trilinear form $\bm T(\bm x, \bm y, \bm z) = \sum_{i,j,k=1}^n \bm T_{i,j,k} \bm x_i \bm y_j \bm z_k$. Using this trilinear form, we can also consider the tensor as an operator that maps vectors to matrices, or two vectors into a single vector. In particular, $\bm T(\bm x, :, :)$ is a matrix whose $i,j$-th entry is equal to $\bm T(\bm x, \bm e_i, \bm e_j)$ where $\bm e_i$ is the $i$-th basis vector. Similarly, $\bm T(\bm x, \bm y, :)$ is a vector whose $i$-th coordinate is equal to $\bm T(\bm x, \bm y, \bm e_i)$.

Since the tensor $\bm T$ we consider is not symmetric ($\bm A_{ijk}$ is not necessarily equal to $\bm A_{jik}$ or other permutations), we also define the symmetric operator 

$$\delta(\bm x) = \bm A(\bm x,\bm x,:)+\bm A(\bm x,:,\bm x)+\bm A(:,\bm x,\bm x).$$

\subsection{Objective Functions for Tensor PCA}

We first define the tensor PCA problem formally.

    \begin{definition}[Tensor PCA]
        Given input tensor $\bm T = \tau \cdot \bm v^{\otimes 3} + \bm A$, where $\bm v \in \mathbb R^n$ is an arbitrary unit vector, $\tau \geq 0$ is the signal-to-noise ratio, and $\bm A$ is a random noise tensor with iid standard Gaussian entries, recover the signal $\bm v$ approximately (find a vector $\|\bm x\| = 1$ such that $\inner{\bm x, \bm v} \ge 0.8$).
    \end{definition}

Similar to the Matrix PCA where we maximize the quadratic form, for tensor PCA we can focus on optimizing the degree 3 polynomial $f(\bm x) = \bm T(\bm x, \bm x, \bm x)$ over the unit sphere.
\begin{align}
\max_{\bm x} \quad &f(\bm x) = \tau \langle \bm v, \bm x \rangle^3 + \bm A(\bm x,\bm x,\bm x) \label{eq:obj}\\
& \|\bm x\| = 1 \nonumber 
\end{align}

The optimal value of this program is known as the spectral norm of the tensor. It is often solved in practice by tensor power method. \cite{richard2014statistical} noticed that:

\begin{theorem}
When $\tau \ge C\sqrt{n}$ for large constant $C$, the global optimum of (\ref{eq:obj}) is close to the signal $\bm v$.
\end{theorem}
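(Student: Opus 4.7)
The argument is the classical one from the spiked Gaussian tensor model. The plan is to compare the objective at the planted direction $\bm v$ against the objective at an arbitrary competitor $\bm x$ on the unit sphere, using a spectral norm bound on the noise $\bm A$.

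\textbf{Step 1: Lower bound at $\bm x=\bm v$.} At $\bm x=\bm v$ we get $f(\bm v)=\tau+\bm A(\bm v,\bm v,\bm v)$. Because $\bm v$ is a fixed unit vector, $\bm A(\bm v,\bm v,\bm v)$ is a scalar Gaussian of variance $\|\bm v^{\otimes 3}\|_F^2=1$, so with high probability $f(\bm v)\ge \tau-O(\sqrt{\log n})$.

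\textbf{Step 2: Upper bound on the noise spectral norm.} The main technical piece is to show
\[
\|\bm A\|_{\mathrm{op}}\;:=\;\sup_{\|\bm x\|=1}|\bm A(\bm x,\bm x,\bm x)|\;\le\; C'\sqrt{n\log n}
\]
with high probability. This follows from a standard $\epsilon$-net argument on the unit sphere $S^{n-1}$: for any fixed unit $\bm x$, $\bm A(\bm x,\bm x,\bm x)$ is centered Gaussian with variance $\le 1$, so Gaussian tails plus a union bound over an $\epsilon$-net of size $e^{O(n\log(1/\epsilon))}$ control the supremum over the net; then a Lipschitz discretization argument (using that $\bm x\mapsto \bm A(\bm x,\bm x,\bm x)$ has gradient bounded by $\|\bm A\|_{\mathrm{op}}\cdot O(1)$, which allows one to close the net-to-sphere gap) promotes the bound to the full sphere.

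\textbf{Step 3: Forcing alignment.} Let $\bm x^\star$ be a global maximizer and write $c=\langle\bm v,\bm x^\star\rangle$. Optimality $f(\bm x^\star)\ge f(\bm v)$ combined with Steps 1--2 gives
\[
\tau c^3+C'\sqrt{n\log n}\;\ge\;\tau-O(\sqrt{\log n}),
\]
so $c^3\ge 1-O(\sqrt{n\log n}/\tau)$. Taking $\tau\ge C\sqrt{n}$ for a sufficiently large constant $C$ (absorbing logarithmic factors or stating the conclusion up to $\tilde\Omega$) makes the right side larger than $0.8^3$, yielding $c=\langle\bm v,\bm x^\star\rangle\ge 0.8$, which is the desired conclusion. (Positivity of $c$ is automatic: if $c$ were negative with $|c|$ large, replacing $\bm x^\star$ by $-\bm x^\star$ would flip the sign of $\tau c^3$ and strictly increase the objective by much more than the noise term, contradicting optimality.)

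\textbf{Main obstacle.} The only nontrivial ingredient is the spectral norm bound in Step~2; the rest is essentially algebra. Although this bound is standard, one must be careful that the cubic form is not symmetric (so one uses the Gaussian tail for the trilinear scalar $\bm A(\bm x,\bm x,\bm x)$ directly, rather than appealing to a symmetric tensor norm bound), and that the discretization step is compatible with the cubic Lipschitz behavior on $S^{n-1}$. Once this is in place, the comparison in Step~3 goes through verbatim, giving the claimed $\tau=\Omega(\sqrt n)$ threshold (modulo a $\sqrt{\log n}$ factor, which the informal statement suppresses in the large-constant $C$).
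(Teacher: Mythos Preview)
The paper does not actually prove this theorem; it attributes the result to Richard and Montanari (2014) and states it without proof as background. So there is no ``paper's own proof'' to compare against. Your argument is the standard one and is correct in outline.

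Two small remarks. First, the $\epsilon$-net bound you sketch in Step~2 in fact gives $\|\bm A\|_{\mathrm{op}}=O(\sqrt{n})$ with no logarithmic factor: an $\epsilon$-net on $S^{n-1}$ with $\epsilon$ a fixed constant has size $e^{O(n)}$, and the Gaussian tail $\Pr[|\bm A(\bm x,\bm x,\bm x)|>t]\le 2e^{-t^2/2}$ then forces only $t=\Theta(\sqrt n)$ after the union bound; the discretization step costs a constant. (The paper itself invokes exactly this $O(\sqrt n)$ bound, citing Tomioka--Suzuki, in the proof of Lemma~\ref{lem:hessianprop}.) With this, the conclusion $c^3\ge 1-O(\sqrt n/\tau)$ matches the stated threshold $\tau\ge C\sqrt n$ without having to sweep a $\sqrt{\log n}$ under the rug.

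Second, your separate positivity argument is unnecessary: once you have $\tau c^3\ge \tau-\|\bm A\|_{\mathrm{op}}-O(\sqrt{\log n})>0$ for $\tau\ge C\sqrt n$ with $C$ large, $c>0$ is immediate. (Your sign-flip remark is also fine, since $f(-\bm x)=-f(\bm x)$ for a cubic form, but it is not needed.)
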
 

Unfortunately, solving this optimization problem is NP-hard in the worst-case~\citep{Hillar}. Currently, the best known algorithm uses sum-of-squares hierarchy and works when $\tau \ge Cn^{3/4}$. There is a huge gap between what's achievable information theoretically ($O(\sqrt{n})$) and what can be achieved algorithmically ($\Omega(n^{3/4})$).

\subsection{Gaussian Smoothing for the Objective Function}


Guaranteed homotopy methods rely on smoothing the objective function by the Gaussian kernel ~\citep{MobahiLink, HosseinAAAI}. More precisely, smoothing the objective (\ref{eq:obj}) requires convolving it with the Gaussian kernel.
Let $g : \mathcal X \times \mathcal \R^+ \to \mathbb R$ be a mapping such that
    \[
        g(\bm x, t) = [f \star k_{t}](\bm x)
    \]
    Here, $k_{t}$ is the Gaussian density function for $\mathcal{N}(\boldsymbol{0}, t^2 \boldsymbol{I}_n)$, satisfying 
    \[
        k_{t}(\bm x) = \dfrac{1}{(\sqrt{2 \pi}t)^n} \cdot e^{-\frac{\|\bm x\|_2^2}{2 t^2}}.
    \]
    
It is known that convolution of polynomials with the Gaussian kernel has a closed form expression \citep{ClosedGauss16}. In particular, the objective function of the Tensor PCA has the following smoothed form.

\begin{lemma} [Smoothed Tensor PCA Objective]
\label{lem:smooth} The smoothed objective has the form
$$ g(\bm x, t) = \tau \langle \bm v, \bm x \rangle^3 + t^2 \langle 3 \tau \bm v + \bm u, \bm x \rangle + \bm A(\bm x,\bm x,\bm x),$$
where the vector $\bm u$ is defined by $\bm u_j = \sum_{i=1}^n (\bm A_{iij}+\bm A_{iji} + \bm A_{jii})$.
Moreover, it is easy to compute vector $\bm z = 3\tau \bm v+\bm u$ given just the tensor $\bm T$, as $\forall j, \bm z_j=\sum_{i=1}^n (\bm T_{iij}+\bm T_{iji}+\bm T_{jii})$.
\end{lemma}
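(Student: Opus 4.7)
The plan is to unwind the convolution as a Gaussian expectation and then compute term by term. Since convolution of $f$ with a zero-mean Gaussian density of covariance $t^2 \boldsymbol I$ evaluated at $\bm x$ is nothing but $\E_{\bm y \sim \N(\boldsymbol 0, t^2 \boldsymbol I)}[f(\bm x + \bm y)]$, I would split $f = \tau \inner{\bm v, \cdot}^3 + \bm A(\cdot,\cdot,\cdot)$ and take expectations of the two pieces separately.

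For the signal piece, set $s = \inner{\bm v, \bm x}$ and $w = \inner{\bm v, \bm y}$. Because $\|\bm v\|=1$, $w \sim \N(0, t^2)$, so the one-dimensional moment computation $\E[(s+w)^3] = s^3 + 3 s\, \E[w^2] = s^3 + 3 t^2 s$ immediately yields $\tau \inner{\bm v, \bm x}^3 + 3\tau t^2 \inner{\bm v, \bm x}$.

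For the noise piece, expand $\E[(x_i+y_i)(x_j+y_j)(x_k+y_k)]$. Odd Gaussian moments vanish and $\E[y_a y_b] = t^2 \delta_{ab}$, leaving $x_i x_j x_k + t^2 (x_i \delta_{jk} + x_j \delta_{ik} + x_k \delta_{ij})$. Multiplying by $\bm A_{ijk}$ and summing, the cubic part reproduces $\bm A(\bm x,\bm x,\bm x)$, while the three delta-contractions contribute, respectively, $\sum_{i,j} \bm A_{ijj} x_i$, $\sum_{i,j} \bm A_{iji} x_j$, and $\sum_{i,k} \bm A_{iik} x_k$. Renaming the free index to $j$ in each sum, the coefficient of $x_j$ in the resulting linear form is exactly $\sum_{i}(\bm A_{jii} + \bm A_{iji} + \bm A_{iij}) = \bm u_j$, giving a contribution of $t^2 \inner{\bm u, \bm x}$. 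Combining everything reproduces the claimed formula. The main bookkeeping hazard is simply keeping the three index-pairings straight so that they line up with the definition of $\bm u$; no analytic subtlety arises because the Gaussian expectations are elementary.

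Finally, to see that $\bm z = 3\tau \bm v + \bm u$ can be read off directly from $\bm T$, substitute $\bm T_{ijk} = \tau \bm v_i \bm v_j \bm v_k + \bm A_{ijk}$ into $\sum_i (\bm T_{iij}+\bm T_{iji}+\bm T_{jii})$: the signal part contributes $3\tau \bm v_j \|\bm v\|^2 = 3\tau \bm v_j$ and the noise part is $\bm u_j$ by definition. This completes the derivation.
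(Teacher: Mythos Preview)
Your proposal is correct and follows essentially the same approach as the paper: both interpret the convolution as the expectation $\E_{\bm y \sim \N(\bm 0, t^2 \bm I)}[f(\bm x+\bm y)]$ and compute the result using the vanishing of odd Gaussian moments and $\E[y_a y_b]=t^2\delta_{ab}$. Your treatment is slightly more explicit in separating the signal and noise pieces and in the index bookkeeping, and for the second claim you substitute $\bm T=\tau\bm v^{\otimes 3}+\bm A$ directly rather than re-reading the linear term as an expectation, but these are cosmetic differences only.
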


The proof of this Lemma is based on interpreting the convolution as an expectation $\E_{y\sim N(\bm 0, \mathbf I_n)}[f(x+y)]$. We defer the detailed calculation to Appendix~\ref{app:lemma1}

\section{Tensor PCA by Homotopy Initialization}
\label{sec:simplealg}
In this section we give a simple smart initialization algorithm for tensor PCA. Our algorithm only uses two points in homotopy path \--- the infinite smoothing $t\rightarrow \infty$ and the no smoothing $t \rightarrow 0$. This is inspired by our full analysis of the homotopy path (see Section~\ref{sec:path}), where we show there is a {\em phase transition} in the homotopy path. When the smoothing parameter is larger than a threshold, the function behaves like the infinite smoothing case; when the smoothing parameter is smaller than the threshold, the function behaves like the no smoothing case.

    
    
Recall that the smoothed function $g(\bm x, t)$ is:
    \begin{equation}
         g(\bm x, t) = \tau \langle \bm v, \bm x \rangle^3 + t^2 \langle 3 \tau \bm v + \bm u, \bm x \rangle + \bm A(\bm x,\bm x,\bm x)  
    \end{equation}
    with $\bm u$ as a vector such that $\bm u_j = \sum_i \bm A_{iij} + \bm A_{iji} + \bm A_{jii}$. When $t \rightarrow \infty$, the solution of the smoothed problem has the special form $\bm x^{\dag} = \frac{3 \tau \bm v + \bm u}{\|3 \tau \bm v + \bm u\|}$. That is because the term $t^2 \langle 3 \tau \bm v + \bm u, \bm x \rangle$ dominates $g$ and thus its maximizer under $\|\bm x \|_2 = 1$ yields $\bm x^{\dag}$. 

Note that by Lemma~\ref{lem:smooth}, we can compute vector $\bm z$ $\bm z_j = \sum_{i=1}^n \bm T_{iij}+\bm T_{iji} + \bm T_{jii}$, and we know $\bm z = 3\tau \bm v+\bm u$. Therefore we know $\bm x^\dag = \frac{\bm z}{\|\bm z\|_2}$ can be computed just from the tensor. We use this point as an initialization, and then run power method on the original function. The resulting algorithm is described in Algorithm~\ref{algo:simple}. 
    
    In order to analyze the algorithm, we use the following {\em independence} condition, which states that the ``random''-looking vectors $\bm u$ and $\delta(\bm x^p) = \bm A(\bm x^p, \bm x^p, :)+\bm A(\bm x^p, :, \bm x^p)+\bm A(:, \bm x^p, \bm x^p)$ indeed have some properties satisfied by random vectors:
\begin{condition}\label{assump}[Independence Condition]
     The norm and correlation with $\bm v$ for the vectors $\bm u$ and $\delta(\bm x^p)$ are not far from expectation. More precisely: (1) $\|\bm u\|_2 = O(n \sqrt{m})$ and $|\langle \bm u, \bm v \rangle| = O(\sqrt {n m \log n})$; (2) for the sequence computed by Algorithm \ref{algo:simple}, $\bm x^0, \bm x^1, \cdots, \bm x^m$, $\forall 0 \leq p \leq m$, $\|\delta(\bm x^p)\|_2 = O(\sqrt{n m}) \|\bm x^p\|_2^2$ and $|\langle \delta(\bm x^p), \bm v \rangle| = O(\sqrt{m \log n}) \|\bm x^p\|_2^2$.
\end{condition}

Note that if in every step of the algorithm, the noise tensor $\bm A$ is {\em resampled} to be a fresh random tensor, independent of the previous step $\bm x^p$, then $\delta(\bm x^p)$ is just a random Gaussian vector. In this case the condition is trivially satisfied. Of course, in reality $\bm x^i$'s are dependent on $\bm A$. However, we are able to modify the algorithm by a {\em noise injection} procedure, that adds more noise to the tensor $\bm T$, and make the noise tensor ``look'' as if they were independent. The extra dependency on $m$ in Condition~\ref{assump} comes from noise injection procedure and will be more clear in Section~\ref{sec:resampling}.  We will first show the correctness of the algorithm assuming independence condition here, and in Section~\ref{sec:resampling} we discuss the noise injection procedure and prove the independence condition.

    \begin{theorem}
    \label{thm:converge}
When $\tau \ge Cn^{3/4}\log n$ for a large enough constant $C$, under the Independence Condition (Condition~\ref{assump}), Algorithm~\ref{algo:simple} finds a vector $\bm x^m$ such that $\inner{\bm x^m, \bm v} \geq 0.8$ in $O(\log \log n)$ iterations.    
    \end{theorem}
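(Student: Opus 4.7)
The plan is to combine two pieces: (i) an initialization analysis showing that $\bm x^0 = \bm z/\|\bm z\|_2$ already has nontrivial correlation $\alpha_0 := \inner{\bm x^0, \bm v}$ with the signal, and (ii) an inductive analysis of power iteration showing that once $\alpha_p > 0$, the correlation grows \emph{doubly-exponentially} in $p$ until it saturates near $1$. Throughout, the Independence Condition~\ref{assump} is the only tool used to bound the ``random'' vectors $\bm u$ and $\delta(\bm x^p)$; in the regime $\tau \geq Cn^{3/4}\log n$ the signal parts dominate these noise parts by a polylogarithmic margin, which is exactly what enables quadratic convergence.

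For the initialization, I would write $\bm z = 3\tau \bm v + \bm u$ and use Condition~\ref{assump} to bound $\|\bm u\|_2 = O(n\sqrt m)$ and $|\inner{\bm u,\bm v}| = O(\sqrt{nm\log n})$. For $\tau = \tilde\Omega(n^{3/4})$ with $m = O(\log\log n)$, the term $\|\bm u\|_2$ dominates $3\tau$, so $\|\bm z\|_2 = \Theta(n\sqrt{m})$, while $\inner{\bm z,\bm v} = 3\tau \pm O(\sqrt{nm\log n}) = \Theta(\tau)$. Hence $\alpha_0 = \Omega(\tau/(n\sqrt m))$.

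For a single power step, I would use the identity $\bm T(\bm x^p, \bm x^p, :) + \bm T(\bm x^p, :, \bm x^p) + \bm T(:, \bm x^p, \bm x^p) = 3\tau \alpha_p^2 \bm v + \delta(\bm x^p)$, and Condition~\ref{assump} to control $\|\delta(\bm x^p)\|_2 = O(\sqrt{nm})$ and $|\inner{\delta(\bm x^p), \bm v}| = O(\sqrt{m\log n})$, giving
\[
\alpha_{p+1} \;=\; \frac{3\tau\alpha_p^2 \pm O(\sqrt{m\log n})}{\sqrt{9\tau^2\alpha_p^4 + O(\tau\alpha_p^2\sqrt{m\log n}) + O(nm)}}.
\]
I would then split into two regimes. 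In Regime~A, when $\tau\alpha_p^2 \ll \sqrt{nm}$, the denominator is $\Theta(\sqrt{nm})$ and the numerator is dominated by the signal $3\tau\alpha_p^2$ (one has to verify $3\tau\alpha_p^2 \gg \sqrt{m\log n}$, which follows from $\alpha_0 = \Omega(\tau/(n\sqrt m))$ and the choice of $\tau$), yielding $\alpha_{p+1} \gtrsim \tau\alpha_p^2/\sqrt{nm}$. Setting $\gamma_p := \tau\alpha_p/\sqrt{nm}$ converts this into $\gamma_{p+1} \geq \gamma_p^2$, i.e., quadratic growth $\gamma_p \geq \gamma_0^{2^p}$. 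Plugging in the initialization, $\gamma_0 = \Omega(\tau^2/(n^{3/2}m)) = \Omega(\log^2 n/\log\log n) > 2$, so within $p = O(\log\log n)$ iterations $\gamma_p$ crosses the threshold $\sqrt{\tau}/(nm)^{1/4}$ that marks the boundary of Regime~A. In Regime~B, where $\tau\alpha_p^2 \gg \sqrt{nm}$, the same computation yields $\alpha_{p+1}^2 \geq 1 - O(nm/(\tau^2\alpha_p^4))$, and one or two more iterations push this above $0.64$, giving $\alpha_m \geq 0.8$.

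The main obstacle is the cross-over: the clean recurrences above hold only when one term strictly dominates, and I will need a uniform bound that interpolates smoothly across the transition so that the induction never breaks. A related subtlety is verifying that the $O(\sqrt{m\log n})$ additive error in the numerator of $\alpha_{p+1}$ stays negligible compared to $3\tau\alpha_p^2$ for \emph{every} $p$, including the very first power step when $\alpha_0$ is smallest; this is exactly where the extra $\log n$ factor in $\tau \geq Cn^{3/4}\log n$ is used. Once these technicalities are handled, the theorem reduces to the doubly-exponential count $p = O(\log\log n)$ derived above.
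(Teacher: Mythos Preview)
Your proposal is correct and follows essentially the same approach as the paper: bound the initial correlation via Condition~\ref{assump}(1), then use Condition~\ref{assump}(2) to control $\delta(\bm x^p)$ in each power step, split according to whether $3\tau\alpha_p^2$ dominates $\|\delta(\bm x^p)\|_2 = O(\sqrt{nm})$, and show doubly-exponential growth until saturation. The paper carries out exactly this two-regime case split; your ``cross-over'' worry is handled there by the one-line observation that once $3\tau\alpha_p^2 \ge \Delta\sqrt{nm}$ for a large constant $\Delta$, the very next iterate already has correlation $\ge 0.8$, so no delicate interpolation is needed.

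The one presentational difference is that the paper tracks an explicit induction hypothesis $\alpha_p \ge n^{-1/4}\log^{3^p c}n / m^{2^p - 1/2}$, whereas you rescale to $\gamma_p := \tau\alpha_p/\sqrt{nm}$ and obtain the clean quadratic recurrence $\gamma_{p+1} \gtrsim \gamma_p^2$ with $\gamma_0 = \Omega(\log^2 n/\log\log n)$. Your substitution is tidier and makes the $O(\log\log n)$ count transparent; the paper's explicit bound is messier (and in fact the claimed exponent $3^{p+1}c$ is slightly optimistic---squaring and multiplying by $\tau$ gives $(2\cdot 3^p+1)c$, not $3^{p+1}c$, though the argument still goes through). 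Either packaging proves the theorem.
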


    \begin{algorithm2e}[h] 
        \KwIn{Tensor $\bm T = \tau \cdot \bm v^{\otimes 3} + \bm A$;}
        \KwOut{Approximation of $\bm v$;}
        $m = O(\log \log n)$;\\

        $\forall~j, \bm x^0_j = \sum_i \bm T_{iij} + \bm T_{iji} + \bm T_{jii}$;\\
        $\bm x^0 = \bm x^0 / \|\bm x^0\|$; \hfill// Now $\bm x^0 = \bm x^\dag$\\
        \For {$k = 0$ to $m$} {
            $\bm x^{k+1} = \bm T(\bm x^{k},\bm x^{k},:) + \bm T(\bm x^k,:,\bm x^k) + \bm T(:,\bm x^k,\bm x^k)$; \\
            $\bm x^{k+1} = \bm x^{k+1} / \|\bm x^{k+1}\|$;
        }
        {\Return $\bm x^m$};
        \caption{Tensor PCA by Homotopy Initialization}
        \label{algo:simple}
    \end{algorithm2e}   

The main idea is to show the correlation of $\bm x^k$ and $\bm v$ increases in every step. In order to do this, first notice that the initial point $\bm x^\dag$ itself is equal to a normalization of $3\tau \bm v+\bm u$, where the norm of $\bm u$ and its correlation with $\bm v$ are all bounded by the Independence Condition. It is easy to check that $\inner{\bm x^0,\bm v} \gg n^{-1/4}$, which is already non-trivial because a random vector would only have correlation around $n^{-1/2}$. For the later iterations, let $\hat{\bm x}^k$ be the vector $\bm x^k$ before normalization and we have $\hat{\bm x}^{k+1} = 3 \tau \langle \bm v, \bm x^k \rangle^2 \bm v + \delta(\bm x^k)$. Notice that the first term is in the direction $\bm v$, and the Independence Condition bounds the norm and correlation with $\bm v$ for the second term. We can show that the correlation with $\bm v$ increases in every iteration, because the initial point already has a large inner product with $\bm v$. The detailed proof is deferred to Appendix~\ref{app:theorem3}.

\subsection{Noise Injection Procedure}
\label{sec:resampling}
\begin{algorithm2e}[h] 
        \KwIn{Tensor $\bm T = \tau \cdot \bm v^{\otimes 3} + \bm A$;}
        \KwOut{Approximation of $\bm v$;}
        $m = O(\log \log n)$;\\
        Sample $\bm B^0, \bm B^1, ..., \bm B^{m - 1} \in \R^{n\times n\times n}$ whose entries are $\mathcal N(0,m)$.\\
        Let $\overline{\bm B} = \frac{1}{m} \sum_{p=0}^{m-1} \bm B^p$.\\
        Let $\bm T^p = \bm T-\bar{\bm B}+\bm B^p$ \\
        $\forall~j, \bm x^0_j = \sum_i \bm T^0_{iij} + \bm T^0_{iji} + \bm T^0_{jii}$;\\
        $\bm x^0 = \bm x^0 / \|\bm x^0\|$;\\
        \For {$k = 0$ to $m-2$} {
            $\bm x^{k+1} = \bm T^{k+1}(\bm x^{k},\bm x^{k},:) + \bm T^{k+1}(\bm x^k,:,\bm x^k) + \bm T^{k+1}(:,\bm x^k,\bm x^k)$; \\
            $\bm x^{k+1} = \bm x^{k+1} / \|\bm x^{k+1}\|$;
        }
        {\Return $\bm x^{m-1}$};
        \caption{Tensor PCA with Homotopy Initialization and Noise Injection}
        \label{algo:resample}
    \end{algorithm2e}   

In order to prove the Independence Condition, we slightly modify the algorithm (see Algorithm~\ref{algo:resample}). In particular, we add more noise in every step as follows 
    \begin{itemize}
        \item Get the input tensor $\bm T = \tau \cdot \bm v^{\otimes 3} + \bm A$;
        \item Draw a sequence of $\bm B^p \in \mathbb R^{n \otimes 3}$ such that $\bm B_{ijk}^p \sim \mathcal N(0, m)$;
        \item Let $\bm T^p =  \bm T - \overline{\bm B} + \bm B^p$ with $\overline{\bm B} = \frac{1}{m} \sum_{p=0}^{m-1} \bm B^p$, run Algorithm \ref{algo:resample} by using $\bm T^p$ in the $p$-th iteration;
    \end{itemize}
    Intuitively, by adding more noise the new noise will overwhelm the original noise $\bm A$, and every time it looks like a fresh random noise. We prove this formally by the following lemma:

    \begin{lemma}
    \label{lem:resample}
        Let the sequence $\bm T^0, \cdots, \bm T^{m-1}$ be generated according to Algorithm~\ref{algo:resample}. Let $\bm Q^i = \tau \bm v^{\otimes 3}+\bm C^i$, where $\bm C^i$'s are tensors with independent Gaussian entries. Each entry in $\bm C^i$ is distributed as $N(0,m)$. The two sets of variables $\{\bm T^i\}$ and $\{\bm Q^i\}$ has the same distribution. 
    \end{lemma}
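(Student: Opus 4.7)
The plan is to shift out the deterministic rank-$1$ signal at the start. Writing $\bm C^p := \bm A - \overline{\bm B} + \bm B^p$, the identity $\bm T^p = \tau \bm v^{\otimes 3} + \bm C^p$ reduces the lemma to showing that $(\bm C^0, \dots, \bm C^{m-1})$ is equal in distribution to $m$ \emph{independent} tensors with iid $\mathcal N(0,m)$ entries. Since each $\bm C^p$ is a fixed linear combination of the independent Gaussians $\bm A, \bm B^0, \dots, \bm B^{m-1}$, the full tuple is jointly Gaussian with mean zero, so the entire question reduces to matching second moments.

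Next I would exploit the coordinate-wise independence structure. Because the entries of $\bm A$ and of each $\bm B^p$ are independent across distinct index triples, $\bm C^p_{ijk}$ and $\bm C^q_{i'j'k'}$ are independent whenever $(i,j,k)\ne(i',j',k')$. Consequently it suffices to fix one index $(i,j,k)$ and compute the $m\times m$ covariance matrix $\Sigma$ of $\bigl(\bm C^0_{ijk},\dots,\bm C^{m-1}_{ijk}\bigr)$, then verify $\Sigma = m\,I_m$. The building blocks are $\text{Var}(\bm A_{ijk})=1$, $\text{Var}(\bm B^p_{ijk})=m$, and—using $\overline{\bm B}_{ijk}=\frac{1}{m}\sum_{p}\bm B^p_{ijk}$ as an average of $m$ independent $\mathcal N(0,m)$ variables—$\text{Var}(\overline{\bm B}_{ijk}) = 1$ and $\text{Cov}(\overline{\bm B}_{ijk},\bm B^p_{ijk}) = \frac{1}{m}\text{Var}(\bm B^p_{ijk}) = 1$ for every $p$. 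Expanding $\bm C^p_{ijk}=\bm A_{ijk}-\overline{\bm B}_{ijk}+\bm B^p_{ijk}$ then gives diagonal entries $\text{Var}(\bm C^p_{ijk}) = 1+1+m-2 = m$ and off-diagonal entries $\text{Cov}(\bm C^p_{ijk},\bm C^q_{ijk}) = 1+1-1-1+0 = 0$ for $p\ne q$, which is exactly $\Sigma=m\,I_m$. Joint Gaussians with the same mean and covariance are identically distributed, and adding the deterministic signal $\tau\bm v^{\otimes 3}$ preserves equality in distribution, closing the argument.

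The one place where something could go wrong—and the reason the noise levels have been tuned specifically to $1$ for $\bm A$ and $m$ for each $\bm B^p$—is the cancellation of the off-diagonal covariance. The two terms $-\text{Cov}(\overline{\bm B},\bm B^p)$ and $-\text{Cov}(\overline{\bm B},\bm B^q)$ must exactly absorb both $\text{Var}(\overline{\bm B})$ and the constant $\text{Var}(\bm A)$ inherited from the original noise. So the main (and really the only) obstacle is being careful with this $p\ne q$ computation; once it is verified, the rest of the proof is a direct consequence of Gaussianity and coordinate-wise independence.
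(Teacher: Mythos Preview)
Your proposal is correct and follows essentially the same approach as the paper: both reduce the question to matching the mean and covariance of a jointly Gaussian family, then verify by direct computation that the variances equal $m$ and the cross-covariances vanish. Your version is a touch cleaner in that you subtract the signal first and precompute $\text{Var}(\overline{\bm B}_{ijk})=1$ and $\text{Cov}(\overline{\bm B}_{ijk},\bm B^p_{ijk})=1$, whereas the paper expands $\overline{\bm B}$ as a sum and tracks the coefficients $\tfrac{m-1}{m}$, $\tfrac{1}{m}$ explicitly, but the substance is identical.
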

    
    This Lemma states that after our noise injection procedure, the tensors $\bm T^0,...,\bm T^{m-1}$ look {\em exactly the same} as tensors where the noise $\bm A$ is sampled independently.
The basic idea for this lemma is that for two multivariate Gaussians to have the same distribution, we only need to show that they have the same first and second moments. We defer the details to Appendix~\ref{app:theorem3}.
   
    Using Lemma~\ref{lem:resample} we can create a sequence of $T^p$ such that its noise tensor $\bm A^p = \bm A - \overline{\bm B} + \bm B^p$ is redrawn independently and each element is according to $\mathcal N(0, m)$. Now, because each $\bm T^i$ behave as if it is drawn independently, we can prove the Independence Condition:
    
\begin{lemma}[Noise Injection]\label{lem:resamplingassump} Let $\bm T^p$ be generated according to Algorithm~\ref{algo:resample} and $\bm A^p = \bm T^p - \tau \bm v^{\otimes 3}$. Let $\bm u^0$ be a vector such that $\bm u^0_j = \sum_i \bm A_{iij}^0 + \bm A_{iji}^0 + \bm A_{jii}^0$, and $\delta^p(\bm x^p) = \bm A^p(\bm x^p, \bm x^p, :)+\bm A^p(\bm x^p, :, \bm x^p)+\bm A^p(:, \bm x^p, \bm x^p)$. With high probability\footnote{Throughout this paper by ``with high probability'' we mean the probability is at least $1-1/n^C$ for a large constant $C$.}, (1) $\|\bm u^0\|_2 = \Theta(n \sqrt{m})$ and $|\langle \bm u^0, \bm v \rangle| = O(\sqrt {n m \log n})$; (2) for the sequence computed by Algorithm \ref{algo:resample}, $\bm x^0, \bm x^1, \cdots, \bm x^{m-1}$, $\forall~ 0 \leq p \leq m-1$, $\|\delta^p(\bm x^p)\|_2 = \Theta(\sqrt{n m}) \|\bm x^p\|_2^2$ and $|\langle \delta^p(\bm x^p), \bm v \rangle| = O(\sqrt{m \log n}) \|\bm x^p\|_2^2$. As a result Condition~\ref{assump} is satisfied.
\end{lemma}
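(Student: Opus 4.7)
The plan is to apply Lemma~\ref{lem:resample} so that the tensors $\bm T^0,\dots,\bm T^{m-1}$ may be treated in distribution as $\{\tau\bm v^{\otimes 3}+\bm C^p\}_{p=0}^{m-1}$ with the $\bm C^p$ having i.i.d.\ $\N(0,m)$ entries that are jointly independent across all indices and all $p$. Equivalently the noise tensors $\bm A^0,\dots,\bm A^{m-1}$ are mutually independent, and any intermediate iterate is independent of every noise tensor that was not used in its construction. This decoupling is the sole purpose of the noise-injection construction, and once we have it every claim in the lemma reduces to Gaussian concentration.

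Part~(1) is immediate: each coordinate $\bm u^0_j=\sum_i(\bm A^0_{iij}+\bm A^0_{iji}+\bm A^0_{jii})$ is (up to a handful of overlapping $i=j$ terms) a sum of roughly $3n$ independent $\N(0,m)$ variables, so $\bm u^0_j$ is mean-zero Gaussian of variance $\Theta(nm)$. Standard $\chi^2$ concentration yields $\|\bm u^0\|_2^2=\Theta(n^2 m)$ with high probability, and since $\bm v$ is a fixed unit vector independent of $\bm A^0$, the inner product $\langle\bm u^0,\bm v\rangle$ is itself a mean-zero Gaussian of variance $\Theta(nm)$, giving the $O(\sqrt{nm\log n})$ tail bound.

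For part~(2), I handle the two bounds separately. The norm bound $\|\delta^p(\bm x^p)\|_2=\Theta(\sqrt{nm})\|\bm x^p\|_2^2$ follows from a \emph{uniform} operator-norm estimate: standard random-tensor bounds give $\sup_{\|\bm x\|=1}\|\bm A^p(\bm x,\bm x,:)\|_2=O(\sqrt{nm})$ w.h.p., so the upper bound holds at $\bm x=\bm x^p$ no matter how adversarial the dependence between $\bm x^p$ and $\bm A^p$; a matching lower bound follows from the typical norm of a Gaussian vector of variance $\Theta(m)$ per coordinate. The inner-product bound $|\langle\delta^p(\bm x^p),\bm v\rangle|=O(\sqrt{m\log n})\|\bm x^p\|_2^2$ is the delicate step and really requires independence: conditioning on the iterate (which, once indices are matched up via Lemma~\ref{lem:resample}, is independent of the noise tensor entering $\delta^p$), the expression $\bm A^p(\bm x,\bm x,\bm v)+\bm A^p(\bm x,\bm v,\bm x)+\bm A^p(\bm v,\bm x,\bm x)$ is a mean-zero Gaussian of variance $O(m\|\bm x\|_2^4)$, giving the claim via a Gaussian tail inequality. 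Union-bounding over the $m=O(\log\log n)$ values of $p$ costs only a negligible factor.

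The main technical point --- and indeed the raison d'\^etre of the noise injection --- is precisely this independence step: without Lemma~\ref{lem:resample}, $\bm x^p$ is a complicated function of the noise that cannot be decoupled from $\delta(\bm x^p)$, and the best one could do via a supremum over $\bm x$ would yield $O(\sqrt{nm})$ rather than $O(\sqrt{m\log n})$, off by a factor of $\sqrt{n/\log n}$ that Algorithm~\ref{algo:simple} cannot afford. The price paid for buying independence is inflating every noise entry from variance $1$ to variance $m$, which is exactly the source of every $\sqrt{m}$ factor in the conclusion.
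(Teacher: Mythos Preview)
Your proposal is correct and follows essentially the same approach as the paper: invoke Lemma~\ref{lem:resample} so that each noise tensor is independent of the iterate it is applied to, then reduce every claim to standard Gaussian concentration ($\chi^2$ for norms, Gaussian tails for inner products) after computing the relevant variances. Your one deviation---obtaining the upper half of the $\Theta(\sqrt{nm})$ norm bound via a uniform tensor operator-norm estimate rather than conditional Gaussianity---is a harmless alternative, and your closing remark on why independence is indispensable specifically for the $O(\sqrt{m\log n})$ inner-product bound makes explicit a point the paper leaves tacit.
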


This Lemma is now true because by Lemma~\ref{lem:resample}, we know the noise tensors $\bm A^p$ is independent of $\bm A^0,...,\bm A^{p-1}$. As a result $\bm A^p$ is {\em independent} of $\bm x^p$! This lemma then follows immediately from standard concentration inequalities. We defer the full proof to Appendix~\ref{app:theorem3}.

The noise injection technique is mostly a technicality that we need in order to make sure different steps are independent. This is standard in analyzing nonconvex optimization algorithms. As an example, previous works on alternating minimization for matrix completion \citep{jain2013low} relied on the availability of different subsamples in different iterations to obtain the theoretical guarantees. Our noise injection procedure is very similar, however this is the first application of this idea for the case of Gaussian noise. The main usage of the noise injection is to get rid of the dependence of the noise matrix between different iterations. Moreover, this technique is designed to simplify the proof and rarely used in the real applications. In practice, an algorithm without noise injection, like Algorithm \ref{algo:simple}, usually performs well enough.

Combining Lemma~\ref{lem:resamplingassump} and Theorem~\ref{thm:converge}, we know Algorithm~\ref{algo:resample} solves the tensor PCA problem when $\tau \ge Cn^{3/4}\log n$.

\begin{remark}[Estimation of the variance in practice]
In the above analysis, we assume the variance of entries of $\bm A$ is $1$. In practice, we can estimate the variance $\sigma^2$ of entries of $\bm A$ from $\bm T$ by computing its Frobenius norm. Note that when $\tau$ is large, the simple power method already performs well. The interesting case is when $\tau$ is small, say $\tau < n$. In this case, the square of the Frobenius norm of $\tau \bm v^{\otimes 3} = \tau^2$ while the square of the Frobenius norm of the noise matrix $\bm A$ in expectation is $\sigma^2 n^3$ with variance $\sigma^2 O(n^3)$. Therefore, we can get a good estimation of $\sigma^2$ by computing the square of the Frobenius norm of $\bm A$ divided by $n^3$.
\end{remark}

\section{Characterizing the Homotopy Path}

    \begin{figure*}[htbp]
        \begin{minipage}[b]{0.33\linewidth}
            \centering
            \includegraphics[width=0.9\linewidth]{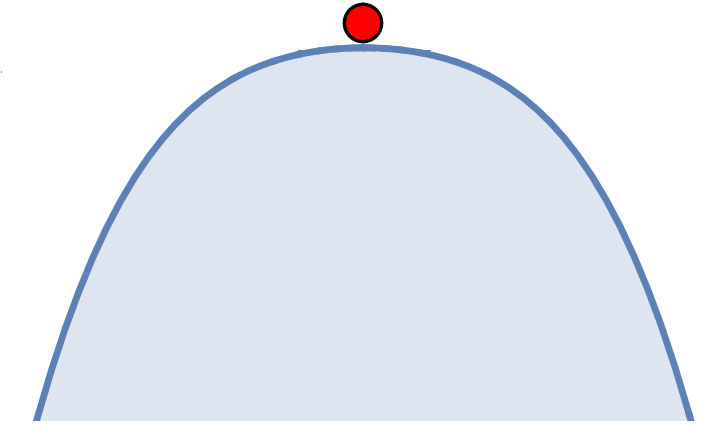}\\
            {(a) $t \gg n^{-1}$}
        \end{minipage}
        \begin{minipage}[b]{0.33\linewidth}
            \centering
            \includegraphics[width=0.9\linewidth]{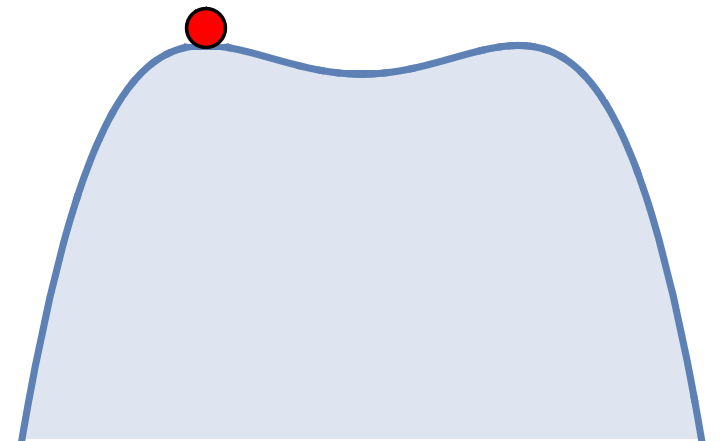}\\
            {(b) $t \approx n^{-1}$}
        \end{minipage}
        \begin{minipage}[b]{0.33\linewidth}
            \centering
            \includegraphics[width=0.9\linewidth]{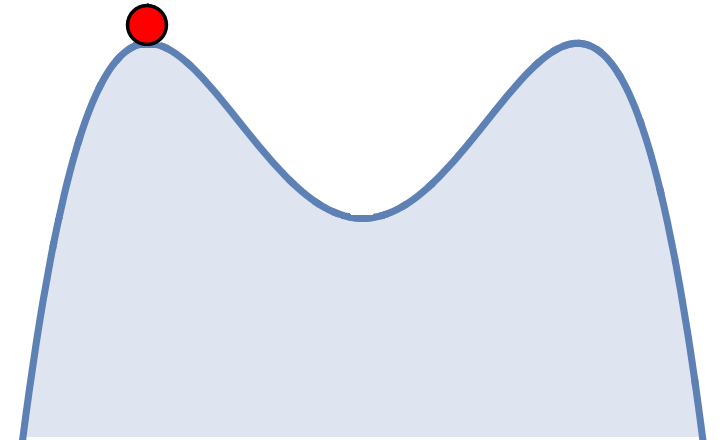}\\
            {(c) $t \ll n^{-1}$}
        \end{minipage}
        \caption{Phase Transition for a 1-d function}
        \label{fig:phasetransition}
    \end{figure*}

This section analyzes the behavior of the smoothed objective function $g$ as $t$ varies. Under a plausible conjecture, we prove that a phase transition occurs: when $t$ is large $g(\bm x ,t)$ behaves very similarly to $g(\bm x ,\infty)$ and when $t$ is small $g(\bm x ,t)$ behaves very similarly to $g(\bm x ,0)$.
This motivates the algorithms in the previous section, as the phase transition suggests the most important regimes are very large $t$ and $t=0$.

In this section we first describe how the homotopy method works in more details. Then we present an alternative objective function of Tensor PCA and derive its smoothed version. Finally, we prove that when $t \gg n^{-1}$, the smoothed function retains its maximizer around $\bm x^\dag$. However, when $t \ll n^{-1}$, the configuration of critical points change, with only one of the critical points being close to the solution $\bm v$. Importantly, we can find our way from the vicinity of $\bm x^\dag$ toward this critical point via the dominant curvature direction of the function.

\label{sec:path}
\subsection{Homotopy}

In the homotopy method, we start from the maximizer of the function with large amount of smoothing $t \rightarrow \infty$. We earlier denoted this maximizer as $\bm x^\dag$. Then we continuously decrease the amount of smoothing $t$, while following the maximizer throughout this process, until reaching $t = 0$. We call the path taken by the maximizer the homotopy path. It is formally defined as follows.

    \begin{definition}[Homotopy Path]
        A homotopy path $\bm x(t)$ is a continuous function $\bm x : \mathcal T \to \mathcal X$ satisfying $\lim_{t \to \infty} \bm x(t) = \bm x^{\dag}$ and $\forall ~ t \geq 0$, $\nabla g(\bm x(t), t) = 0$, where the gradient $\nabla$ is w.r.t. to the first argument of $g$.
    \end{definition}
    
    In practice, to search a homotopy path, one computes the initial point $\bm x^{\dag}$ by analytical derivation or numerical approximation as $\arg \max_{\bm x} g(\bm x, t)$ and then successively minimizes the smoothed functions over a finite sequence of decreasing numbers $t_0$ to $t_m$, where $t_0$ is sufficiently large, and $t_m=0$. The resulted procedure is listed in Algorithm \ref{algo:homotopy}.
   
    \begin{algorithm2e}[h] 
        \KwIn{$f: \mathcal X \to \mathbb R$, a sequence $t_0 > t_1 > \cdots > t_m = 0$.}
        \KwOut{A (good) local maximizer of $f$.}
        $\bm x^{t_0} = \mbox{global maximizer of }g(\bm x, t_0)$; \\
        \For {$k = 1$ to $m$} {
            $\bm x^{t_k} =$ Local maximizer of $g(\bm x; t_k)$, initialized at $\bm x^{t_{k - 1}}$.
        }
        \Return $\bm x^{t_m}$.
        \caption{Homotopy Method}
        \label{algo:homotopy}
    \end{algorithm2e}   
    

\subsection{Alternative Objective Function and Its Smoothing}

   Turning a constrained problem into an unconstrained problem can facilitate the computation of the effective gradient and Hessian of $g(\bm x, t)$. In this section, we consider the alternative objective function: we modify $f(\bm x)$ by adding the penalty term $-\frac {3 \tau} 4 \|\bm x\|_2^4$:
    \[
        f_r(\bm x) = \tau \langle \bm v, \bm x \rangle^3 + \bm A(\bm x,\bm x,\bm x) - \frac {3 \tau} 4 \|\bm x\|_2^4
    \]

Thus we consider the following unconstrained optimization problem,
\begin{equation}
\max \quad f_r(\bm x) = \bm T(\bm x, \bm x, \bm x) - \frac{3\tau}{4}\|\bm x\|_2^4. \label{eq:obj2}
\end{equation}

If we fix the magnitude $\|\bm x\| = 1$, the function $f_r(\lambda \bm x)$ is $\lambda^3 \bm T(\bm x, \bm x, \bm x) - \frac{3\tau}{4} \lambda^4$. The optimizer of this is an increasing function of $\bm T(\bm x, \bm x, \bm x)$. Therefore the maximizer of (\ref{eq:obj2}) is exactly in the same direction as the constrained problem (\ref{eq:obj}). The $3\tau/4$ factor here is just to make sure the optimal solution has roughly unit norm; in practice we can choose any coefficient in front of $\|\bm x\|^4$ and the solution will only differ by scaling. 


    Moreover, note that if in the absence of noise tensor $\bm A$, then
    \[
        \nabla f_r(\bm x) = 3 \tau \langle \bm v, \bm x \rangle^2 \bm v - \frac {3 \tau} 4 \cdot 4 \|\bm x\|_2^2 \bm x \\
    \]
    To get the stationary point, we have
    \[
        \bm x = \frac {3 \tau} 4 \cdot \frac {\langle \bm v, \bm x \rangle^2}{\frac {3 \tau} 4 \cdot \|\bm x\|_2^2} \bm v = \bm v 
    \]
    Therefore, the new function $f_r(\bm x)$ is defined on $\mathbb R^n$ and the maximizer of $\mathbb R^n$ is close to $\bm v$. We also compute the smoothed version of this problem:
   
    \begin{lemma}[Smoothed Alternative Objective]\label{lem:smoothalternative}
    The smoothed version of the alternative objective is
    \begin{align*}
        g_r(\bm x, t) = \tau \langle \bm v, \bm x \rangle^3 + t^2 \langle 3 \tau \bm v + \bm u, \bm x \rangle + \bm A(\bm x,\bm x,\bm x) - \frac {3 \tau} 4 \left(\|x\|_2^4 + 2 t^2 (n+2) \|x\|_2^2 + t^4 (n^2+2n)\right)
    \end{align*}
    Its gradient and Hessian are equal to
\begin{equation}\label{eq:gradient}
        \nabla g_r(\bm x, t) = 3 \tau \langle \bm v, \bm x \rangle^2 \bm v + t^2 (3 \tau \bm v + \bm u) 
         + \delta(\bm x) - 3 \tau (\|\bm x\|_2^2 \bm x + t^2 (n+2) \bm x).
    \end{equation}
    and
    \begin{align}
        \nabla^2 g_r(\bm x, t) &~=  -3 \tau ((\|\bm x\|_2^2 + t^2 (n+2)) \bm I - 2 \langle \bm v, \bm x \rangle \bm v \bm v^T + 2 \bm x \bm x^T) \nonumber \\
        &~+ P_{sym}[\bm A(\bm x, :, :) + \bm A(:, \bm x, :) + \bm A(:, :, \bm x)]. \label{eq:hessian}
    \end{align}  
    Here $P_{sym} \bm M = \frac{\bm M+\bm M^\top}{2}$ is the projection to symmetric matrices.
    \end{lemma}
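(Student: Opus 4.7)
The plan is to write $g_r(\bm x, t) = \E_{\bm y \sim \N(\bm 0, t^2 \bm I_n)}[f_r(\bm x + \bm y)]$ and evaluate this expectation term by term. Since $f_r$ differs from the original objective $f$ only by the quartic penalty $-\tfrac{3\tau}{4}\|\bm x\|_2^4$, the smoothing of the cubic part is already handled by Lemma~\ref{lem:smooth}: it contributes precisely $\tau \langle \bm v, \bm x\rangle^3 + t^2 \langle 3\tau \bm v + \bm u, \bm x\rangle + \bm A(\bm x,\bm x,\bm x)$. Thus the genuine new calculation is just $\E[\|\bm x + \bm y\|_2^4]$.

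First I would expand $\|\bm x + \bm y\|_2^2 = \|\bm x\|_2^2 + 2\langle \bm x, \bm y\rangle + \|\bm y\|_2^2$ and square. After taking expectations, every term with an odd power of $\bm y$ vanishes by symmetry, leaving $\|\bm x\|_2^4$, $4\,\E[\langle \bm x, \bm y\rangle^2]$, $\E[\|\bm y\|_2^4]$, and $2\|\bm x\|_2^2 \,\E[\|\bm y\|_2^2]$. Substituting the standard Gaussian moments $\E[\langle \bm x, \bm y\rangle^2] = t^2\|\bm x\|_2^2$, $\E[\|\bm y\|_2^2] = t^2 n$, and the chi-squared fourth moment $\E[\|\bm y\|_2^4] = t^4 n(n+2)$ yields
\[
\E[\|\bm x + \bm y\|_2^4] \;=\; \|\bm x\|_2^4 + 2t^2(n+2)\|\bm x\|_2^2 + t^4(n^2+2n).
\]
Multiplying by $-\tfrac{3\tau}{4}$ gives exactly the remaining terms in the stated closed form for $g_r$.

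For the gradient and Hessian, I would simply differentiate the closed-form $g_r$ piece by piece. The cubic $\tau\langle\bm v,\bm x\rangle^3$ contributes $3\tau\langle\bm v,\bm x\rangle^2\bm v$ and $6\tau\langle\bm v,\bm x\rangle\bm v\bm v^\top$; the linear-in-$\bm x$ term contributes $t^2(3\tau\bm v + \bm u)$ to the gradient and nothing to the Hessian; $\bm A(\bm x,\bm x,\bm x)$ contributes $\delta(\bm x)$ to the gradient by definition, and a further differentiation of each of the three slices of $\delta(\bm x)$ produces the symmetrization of $\bm A(\bm x,:,:)+\bm A(:,\bm x,:)+\bm A(:,:,\bm x)$ in matrix form. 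Finally, differentiating $-\tfrac{3\tau}{4}\bigl(\|\bm x\|_2^4 + 2t^2(n+2)\|\bm x\|_2^2\bigr)$ yields $-3\tau\bigl(\|\bm x\|_2^2 + t^2(n+2)\bigr)\bm x$ for the gradient and $-3\tau\bigl[(\|\bm x\|_2^2 + t^2(n+2))\bm I + 2\bm x\bm x^\top\bigr]$ for the Hessian. Assembling the pieces and factoring out the common $-3\tau$ reproduces equations \eqref{eq:gradient} and \eqref{eq:hessian}.

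The only mildly delicate step is the index bookkeeping for the Hessian of $\bm A(\bm x,\bm x,\bm x)$ and matching it to the $P_{sym}$ symmetrization convention used in the lemma; all other steps are standard Gaussian-moment computations and elementary differentiation, with no probabilistic estimates or inequalities needed.
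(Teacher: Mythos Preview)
Your proposal is correct and follows essentially the same approach as the paper: write the smoothing as an expectation, invoke Lemma~\ref{lem:smooth} for the cubic part by linearity, compute $\E[\|\bm x+\bm y\|_2^4]$ via the obvious expansion and standard Gaussian moments, and then differentiate the resulting polynomial. The paper's write-up is in fact terser than yours---it stops after the $\E[\|\bm x+t\bm y\|_2^4]$ calculation and simply asserts that the gradient and Hessian follow ``since it is a polynomial''---so your piece-by-piece differentiation is a welcome elaboration rather than a deviation.
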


The proof of this Lemma is very similar to Lemma~\ref{lem:smooth} and is deferred to Appendix~\ref{app:path}.

\subsection{Phase Transition on the Homotopy Path}

Notice that when $t\to \infty$, the dominating terms in $g_r(\bm x,t)$ are $t^2$ terms (the only $t^4$ term is a constant). Therefore, $g_r(\bm x,t)$ forms a quadratic function, so it has a unique global maximizer equal to $\frac{3\tau \bm v+\bm u}{3 \tau (n+2)}$, denoted as $\bm x^\dag$. Notice that this vector has different norm compared to the $\bm x^\dag$ in previous section.

Before we state the Theorem, we need a counterpart of the Independence Condition. We call this the Strong Independence Conjecture:

   \begin{conjecture}\label{assump2}[Strong Independence Conjecture]
        Suppose $\bm T = \tau \bm v^{\otimes 3} + \bm A$ and $\bm u_j = \bm A_{iij}+\bm A_{iji}+\bm A_{jii}$, $\delta(\bm x) = \bm A(\bm x,\bm x,:)+\bm A(\bm x,:,\bm x)+\bm A(:, \bm x,\bm x)$ be defined as before. With high probability, (1) $\|\bm u\|_2 = \Theta(n)$ and $|\langle \bm u, \bm v \rangle| = O(\sqrt {n\log n})$; (2) for all $\bm x^{t_k}$ on the homotopy path, $\|\delta(\bm x^{t_k})\|_2 = \Theta(\sqrt{n}) \|\bm x^{t_k}\|_2^2$ and $|\langle \delta(\bm x^{t_k}), \bm v \rangle| = O(\sqrt{\log n}) \|\bm x^{t_k}\|_2^2$, 
\end{conjecture}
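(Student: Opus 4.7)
My plan is to handle part (1), which is essentially routine, separately from part (2), which is the substantive content and the reason this statement is posed as a conjecture rather than a lemma.

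For part (1), I first observe that each coordinate $\bm u_j = \sum_i (\bm A_{iij} + \bm A_{iji} + \bm A_{jii})$ is a linear combination of independent standard Gaussians and is therefore itself Gaussian. A short check shows that for $j \neq k$, $\bm u_j$ and $\bm u_k$ share no entries of $\bm A$ (the entries appearing in $\bm u_j$ all have the pattern ``two coordinates equal and the third equal to $j$''), so they are independent Gaussians with variance $3n + O(1)$. Consequently $\|\bm u\|_2^2$ is a chi-squared with $n$ degrees of freedom scaled by $3n + O(1)$, which concentrates at $\Theta(n^2)$ with standard deviation $O(n^{3/2})$; this yields $\|\bm u\|_2 = \Theta(n)$ with probability at least $1 - n^{-C}$ by a standard Laurent--Massart bound. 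The inner product $\langle \bm u, \bm v\rangle$ is a single Gaussian with variance $\bm v^\top \mathrm{Cov}(\bm u)\bm v = \Theta(n)$, so a Gaussian tail bound gives $|\langle \bm u, \bm v\rangle| = O(\sqrt{n\log n})$ with high probability.

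For part (2), I first note that for any \emph{fixed} unit vector $\bm x$, the vector $\delta(\bm x)$ is centered Gaussian with coordinatewise variance $\Theta(\|\bm x\|_2^4)$, and a direct computation gives $\mathrm{Var}(\langle \delta(\bm x),\bm v\rangle) = O(\|\bm x\|_2^4)$. So for a fixed $\bm x$, Gaussian concentration immediately yields $\|\delta(\bm x)\|_2 = \Theta(\sqrt n)\|\bm x\|_2^2$ and $|\langle \delta(\bm x),\bm v\rangle| = O(\sqrt{\log n})\|\bm x\|_2^2$ with high probability. The delicate point is that each $\bm x^{t_k}$ depends on $\bm A$ in a complicated way, so $\delta(\bm x^{t_k})$ is \emph{not} Gaussian given the path. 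A naive epsilon-net over the unit sphere costs $e^{\Omega(n)}$ points and overwhelms the Gaussian tail, so the approach must exploit structure. The plan is an inductive argument showing that the homotopy iterates essentially lie in a low-dimensional ``signal subspace'' $S = \mathrm{span}(\bm v,\bm u)$ up to a small perturbation: from the gradient formula \eqref{eq:gradient} and the fact that $\bm x^{t_0} \propto 3\tau\bm v + \bm u \in S$, the only way $\bm x^{t_{k+1}}$ can leave $S$ is through the $\delta(\bm x^{t_k})$ contribution, whose norm is precisely what we are bounding. If this inductive bound can be closed, one takes a fine epsilon-net only on a small tubular neighborhood of $S$, which has $\mathrm{poly}(n)$ cardinality and admits a union bound.

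The main obstacle is closing this inductive loop without circularity: the bound on $\|\delta(\bm x^{t_{k+1}})\|_2$ is precisely what is needed to show that $\bm x^{t_{k+1}}$ stays near $S$, which in turn is what justifies applying the net bound to $\delta(\bm x^{t_{k+1}})$. One cannot straightforwardly unwind this using concentration alone, because the event controlling the noise on step $k+1$ is correlated with the geometry of the path up to step $k$. A cleaner route, but one that effectively changes the problem, would mimic the noise injection trick of Section~\ref{sec:resampling} in the continuous setting: replace $\bm A$ at each homotopy time $t_k$ by a freshly resampled noise tensor $\bm A^{t_k}$ independent of $\bm x^{t_k}$; by (an analogue of) Lemma~\ref{lem:resample} this preserves the joint distribution of any finite collection of tensors while removing the dependence altogether. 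This is exactly the route taken in Algorithm~\ref{algo:resample} and yields Lemma~\ref{lem:resamplingassump} unconditionally, while leaving the \emph{original} statement of Conjecture~\ref{assump2} open. I therefore expect any rigorous proof of Conjecture~\ref{assump2} as stated, without modifying the path, to require a delicate leave-one-out or decoupling argument tailored to the specific geometry of the tensor PCA homotopy; this is the hardest step and the reason the authors leave the statement as a conjecture.
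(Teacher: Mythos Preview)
Your assessment is correct and matches the paper's own treatment: this statement is a \emph{conjecture}, and the paper does not prove it. The paper explicitly writes ``Although we cannot use noise injection to prove the strong independence conjecture, similar conjectures are often used to get intuitions about optimization problems,'' and uses it only as a heuristic assumption to motivate Theorem~\ref{thm:homotopypath}.

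Your handling of part (1) is fine and essentially reproduces the calculation the paper carries out (with $m=1$) inside the proof of Lemma~\ref{lem:resamplingassump}: the $\bm u_j$ are independent Gaussians of variance $3n+6$, so $\|\bm u\|_2 = \Theta(n)$ by chi-squared concentration and $|\langle \bm u,\bm v\rangle| = O(\sqrt{n\log n})$ by a scalar Gaussian tail bound. Your diagnosis of part (2) is also exactly right: for a fixed $\bm x$ the bounds are immediate, the obstruction is the dependence of $\bm x^{t_k}$ on $\bm A$, a raw $\epsilon$-net over the sphere is too large, and the noise-injection workaround changes the problem rather than proving the conjecture as stated. The paper does not attempt the low-dimensional-net or leave-one-out route you sketch; it simply leaves the statement open and proves the unconditional Theorem~\ref{thm:converge} via Algorithm~\ref{algo:resample} instead.
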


Intuitively, this assumes that the noise is not adversarially correlated with the signal $\bm v$ on the entire homotopy path. The main difference between the strong independence conjecture and the weak independence conjecture is that they apply to different algorithms with different number of iterations. The strong independence conjecture applies to the general Homotopy method, which may have a large number of iterations, and thus a conjecture that depends on the number of iterations does not provide us any useful properties. We use the strong independence conjecture to analyze the general Homotopy method to gain intuitions in order to design our algorithm. The weak conjecture is for our Algorithm \ref{algo:simple}, which only has $O(\log \log n)$ rounds, and can be satisfied using the noise injection technique. Although we cannot use noise injection to prove the strong independence conjecture, similar conjectures are often used to get intuitions about optimization problems~\citep{donoho2009message,javanmard2013state,choromanska2015loss}.

\begin{theorem}\label{thm:homotopypath}
Assuming the Strong Independence Conjecture (Conjecture~\ref{assump2}), when $\tau = n^{3/4}\log n$,
\begin{enumerate}
\item When $t \ge Cn^{-1}$ for a large enough constant $C$, there exists a local maximizer $\bm x^t$ of $g_r(\bm x, t)$ such that $\|\bm x^t - \bm x^\dag\|_2 = o(1) \|\bm x^\dag\|_2$;
\item When $t < n^{-1}\log^{-2} n$, we know there are two types of local maximizers $\bm x^t$:
\begin{itemize}
\item $\|\bm x^t\|_2 = \Theta(1)$ and $\inner{\bm v, \bm x^t} = \Theta(1)$. This corresponds to a local maximizer near the true signal $\bm v$.
\item $\|\bm x^t\|_2 = \Theta(n^{-\frac 1 4} \log^{-1} n)$ and $\inner{\bm v, \bm x^t} = O(n^{-\frac 1 2} \log^{-1} n)$. These local maximizers have poor correlation with the true signal.
\end{itemize}
\item When $t < n^{-1}\log^{-2} n$, let $\bm b$ be the top eigenvector of $\nabla^2 (g_r(\bm x^\dag, t))$, we know $\sin\theta(\bm b, \bm v) \le 1/\log^2 n$.
\end{enumerate}
\end{theorem}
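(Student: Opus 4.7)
The three statements hinge on analyzing the smoothed gradient and Hessian in \eqref{eq:gradient}--\eqref{eq:hessian} in two regimes of $t$, and the Strong Independence Conjecture is what lets us treat $\bm u$ and the map $\delta$ as generic quantities with controlled magnitudes throughout. My overall plan is to use a Newton-type perturbation argument to produce the local maximizer near $\bm x^\dag$ for large $t$ (Part 1), then split a candidate critical point into components parallel and perpendicular to $\bm v$ to enumerate all critical points for small $t$ (Part 2), and finally apply a Davis-Kahan-style comparison on the Hessian at $\bm x^\dag$ (Part 3).

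For Part 1, the key observation is a cancellation at $\bm x = \bm x^\dag$: because $3\tau\bm v + \bm u = 3\tau(n+2)\bm x^\dag$ by the definition of $\bm x^\dag$, the two $t^2$ terms in \eqref{eq:gradient} cancel exactly, leaving $\nabla g_r(\bm x^\dag, t) = 3\tau\langle\bm v,\bm x^\dag\rangle^2\bm v + \delta(\bm x^\dag) - 3\tau\|\bm x^\dag\|_2^2\,\bm x^\dag$. With $\|\bm x^\dag\|_2 = \Theta(1/\tau)$ and $\langle\bm v,\bm x^\dag\rangle = \Theta(1/n)$ from Conjecture~\ref{assump2}, this has norm $O(\tau/n^2) = O(\log n/n^{5/4})$. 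On the Hessian side, for $t \ge Cn^{-1}$ the identity term in \eqref{eq:hessian} dominates: $\nabla^2 g_r(\bm x,t) \preceq -\Omega(\tau t^2 n)\bm I = -\Omega(C^2 \log n/n^{1/4})\bm I$ throughout a ball of radius $\|\bm x^\dag\|_2/2$ around $\bm x^\dag$ (the rank-one signal, $\bm x\bm x^\top$ and noise pieces being lower order there). A Banach fixed-point argument applied to $\bm x \mapsto \bm x - (\nabla^2 g_r(\bm x^\dag,t))^{-1}\nabla g_r(\bm x,t)$ then produces a critical point $\bm x^t$ at distance $O(1/(C^2 n))$ from $\bm x^\dag$, which is $o(1)\|\bm x^\dag\|_2$.

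For Part 2, write a candidate critical point as $\bm x = \alpha\bm v + \bm y$ with $\bm y\perp\bm v$ and split $\nabla g_r(\bm x,t) = 0$ into the scalar equation along $\bm v$ and the vector equation along $\bm v^\perp$:
\begin{align*}
3\tau\alpha^2 + \langle\delta(\bm x),\bm v\rangle + t^2(3\tau + \langle\bm u,\bm v\rangle) &= 3\tau\bigl(\|\bm x\|_2^2 + t^2(n+2)\bigr)\alpha,\\
P_{\bm v^\perp}\delta(\bm x) + t^2 P_{\bm v^\perp}\bm u &= 3\tau\bigl(\|\bm x\|_2^2 + t^2(n+2)\bigr)\bm y.
\end{align*}
For $t < n^{-1}\log^{-2} n$, Conjecture~\ref{assump2} combined with the upper bound on $t$ makes every $t$-dependent correction strictly subdominant. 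I split into two cases on the scale of $\alpha$. If $\alpha = \Theta(1)$, the parallel equation forces $\alpha \approx 1 - \|\bm y\|_2^2$ modulo lower order, and the perpendicular equation with $\|\delta(\bm x)\|_2 = \Theta(\sqrt n)\|\bm x\|_2^2$ yields $\|\bm y\|_2 = O(\sqrt n/\tau) = O(n^{-1/4}/\log n)$, producing a near-$\bm v$ solution; existence is established by a contraction on the perpendicular equation (whose Lipschitz constant in $\bm y$ is $O(\sqrt n/\tau) = o(1)$) combined with the parallel equation solved for $\alpha$. If $\alpha = o(1)$, the parallel equation forces $\alpha \approx \|\bm x\|_2^2$, and then the perpendicular equation pins the norm via $\sqrt n\|\bm x\|_2^2 \sim 3\tau\|\bm x\|_2^3$, giving $\|\bm x\|_2 = \Theta(\sqrt n/\tau) = \Theta(n^{-1/4}/\log n)$ and hence $\alpha = O(n^{-1/2}/\log n)$.

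For Part 3, at $\bm x^\dag$ the Hessian \eqref{eq:hessian} decomposes as the rank-one positive term $6\tau\langle\bm v,\bm x^\dag\rangle\bm v\bm v^\top$ of eigenvalue $6\tau/(n+2) = \Theta(\log n/n^{1/4})$ along $\bm v$, plus the identity term of operator norm $3\tau(\|\bm x^\dag\|_2^2 + t^2(n+2)) = O(1/(n^{1/4}\log^3 n))$ for $t < n^{-1}\log^{-2} n$, plus the $\bm x^\dag(\bm x^\dag)^\top$ correction of norm $O(1/\tau)$, plus the symmetric noise operator which, by Conjecture~\ref{assump2} applied to the slices of $\bm A$, has operator norm $O(\sqrt n\|\bm x^\dag\|_2) = O(1/(n^{1/4}\log n))$. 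Isolating the signal piece on one side and bundling everything else into a perturbation of norm $O(1/(n^{1/4}\log n))$, the gap between the top eigenvalue and the rest is $\Theta(\log n/n^{1/4})$, and Davis-Kahan yields $\sin\theta(\bm b,\bm v) = O(1/\log^2 n)$. The main obstacle is Part 2: establishing existence of the near-$\bm v$ critical point demands a genuine contraction argument since $\delta(\alpha\bm v + \bm y)$ is quadratic in $\bm y$, and ruling out critical points with intermediate scale (between $n^{-1/4}/\log n$ and $1$) requires careful book-keeping of error terms from $\delta$, $\bm u$, and the $t^2$ corrections so that none of them dominate inside the narrow transitional window.
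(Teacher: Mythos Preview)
Your Part~3 matches the paper's argument (Davis--Kahan with the same decomposition), and your Part~2 decomposition into $\bm v$-parallel and $\bm v$-perpendicular components is a reasonable alternative to the paper's approach. However, Part~1 has a genuine gap, and Part~2 is missing an ingredient.

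\textbf{Part 1.} Your Banach/Newton argument does not close. First, the residual $\|\nabla g_r(\bm x^\dag,t)\|_2$ is not $O(\tau/n^2)$: the term $\delta(\bm x^\dag)$ contributes $\Theta(\sqrt n)\|\bm x^\dag\|_2^2=\Theta(1/(n\log^2 n))$, which dominates. More seriously, the Hessian is \emph{not} uniformly negative definite on a ball of radius $\tfrac12\|\bm x^\dag\|_2=\Theta(1/\tau)$. In such a ball $\langle\bm v,\bm x\rangle$ can be as large as $\Theta(1/\tau)$, so the rank-one piece $6\tau\langle\bm v,\bm x\rangle\bm v\bm v^\top$ has size $\Theta(1)$, overwhelming the identity term $3\tau t^2(n{+}2)=\Theta(C^2 n^{-1/4}\log n)$ when $t\approx C/n$. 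Equivalently, the Hessian has Lipschitz constant $L=\Theta(\tau)$ (driven by that rank-one term), and the Newton--Kantorovich parameter $\beta^2 L\eta$ with $\beta=\Theta(n^{1/4}/(C^2\log n))$ and $\eta=\Theta(1/(n\log^2 n))$ is $\Theta(n^{1/4}/(C^4\log^3 n))\gg 1$, so no contraction is available. The paper avoids this by not doing a one-shot Newton step: it restricts to the slab $\{\langle\bm v,\bm x\rangle\le 10/n\}\cap\{\|\bm x-\bm x^\dag\|\le\tfrac12\|\bm x^\dag\|\}$, where the rank-one term is only $O(\tau/n)=O(n^{-1/4}\log n)$ and the Hessian \emph{is} negative definite for large $C$, and then runs a continuation in $t$ (from $t$ polynomially large down to $C/n$), using the first-order fixed-point identity to show $\langle\bm v,\bm x^{t_i}\rangle\le 2/n$ at every step so the iterate never leaves the slab. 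The slab constraint on $\langle\bm v,\bm x\rangle$ is exactly what tames the rank-one term, and you need some mechanism like this.

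\textbf{Part 2.} Your scale analysis from the first-order equations is in the right spirit, but the paper's classification also uses the \emph{second-order} necessary condition $\nabla^2 g_r(\bm x^t,t)\preceq 0$. Via a random-matrix lemma giving $\lambda_{\max}\bigl(P_{\mathrm{sym}}[\bm A(\bm x,{:},{:})+\cdots]\bigr)=\Theta(\sqrt n)\|\bm x\|_2$, this forces $\tau(\|\bm x^t\|_2^2+t^2 n)\gtrsim\sqrt n\,\|\bm x^t\|_2$; in particular it yields the lower bound $\|\bm x^t\|_2=\Omega(n^{-1/4}/\log n)$ in the main case and rules out the regime $t^2 n\ge\|\bm x^t\|_2^2$ altogether. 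Your purely first-order argument does not obviously exclude very small-norm stationary points (where $t^2 n$ dominates the denominator), and ``ruling out intermediate scales'' is precisely where the second-order input enters in the paper. Your contraction sketch for existence of the near-$\bm v$ maximizer is plausible but is not part of the paper's statement (the paper only classifies the possible types, it does not construct the good maximizer in this lemma).
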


Intuitively, this theorem shows that in the process of homotopy method, if we consider a continuous path in the sense that $t_{k+1} - t_k$ is close to $0$ for all $k$, then (1) at the beginning, $\bm x^k$ is close to $\bm x^\dag$; (2) at some point $k^*$, $\bm x^{k^*}$ is a saddle point in the function $g(\bm x, t_{k^*+1})$ and from the saddle point we are very likely to follow the Hessian direction to actually converge to the good local maximizer near the signal. This phenomenon is illustrated in Figure~\ref{fig:phasetransition}:

Figure~\ref{fig:phasetransition}(a) has large smoothing parameter, and the function has a unique local/global maximizer. Figure~\ref{fig:phasetransition}(b) has medium smoothing parameter, the original global maximizer now behaves like a local minimizer in one dimension, but it in general could be a saddle point in high dimensions. The Hessian at this point leads the direction of the homotopy path. In Figure~\ref{fig:phasetransition}(c) the smoothing is small and the algorithm should go to a different maximizer.

\section{Experiments}
\label{sec:exp}
    \begin{figure*}[!htbp]
        \begin{minipage}[b]{0.33\linewidth}
            \centering
            \includegraphics[width=0.9\linewidth]{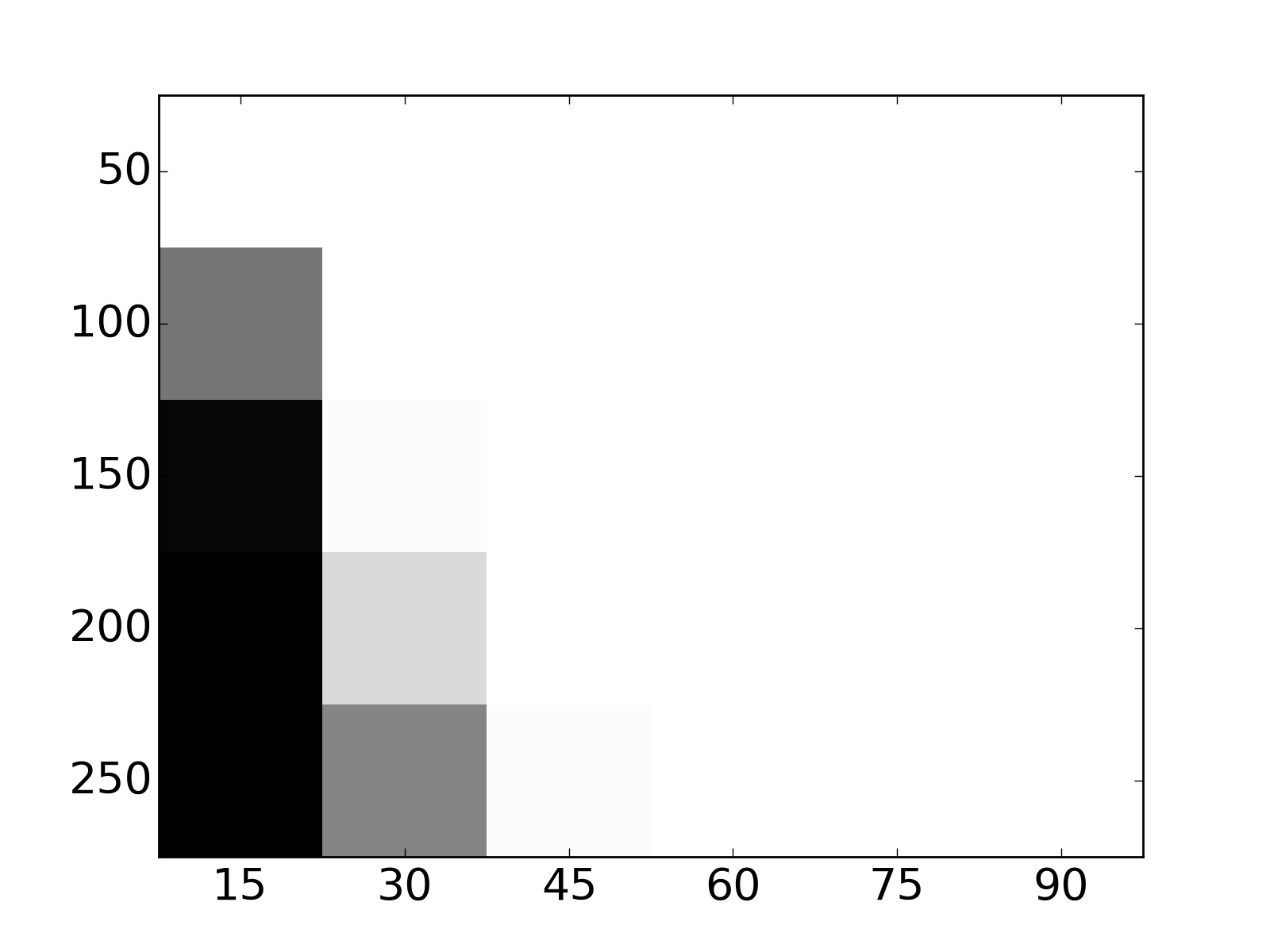}
            {Homotopy PCA}
            \label{fig:homotopy}
        \end{minipage}
        \begin{minipage}[b]{0.33\linewidth}
            \centering
            \includegraphics[width=0.9\linewidth]{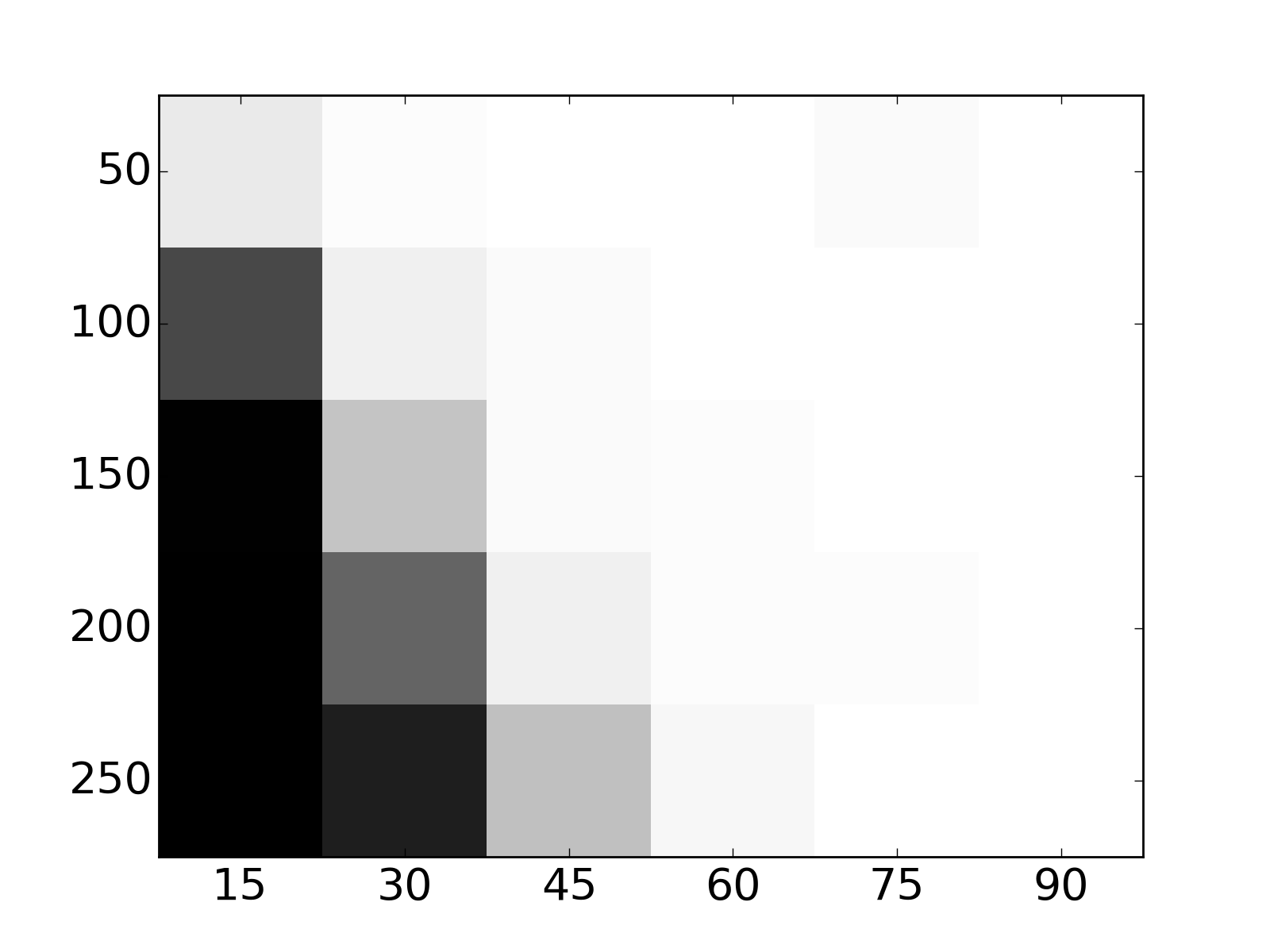}
            {Power Method}
            \label{fig:power}
        \end{minipage}
        \begin{minipage}[b]{0.33\linewidth}
            \centering
            \includegraphics[width=0.9\linewidth]{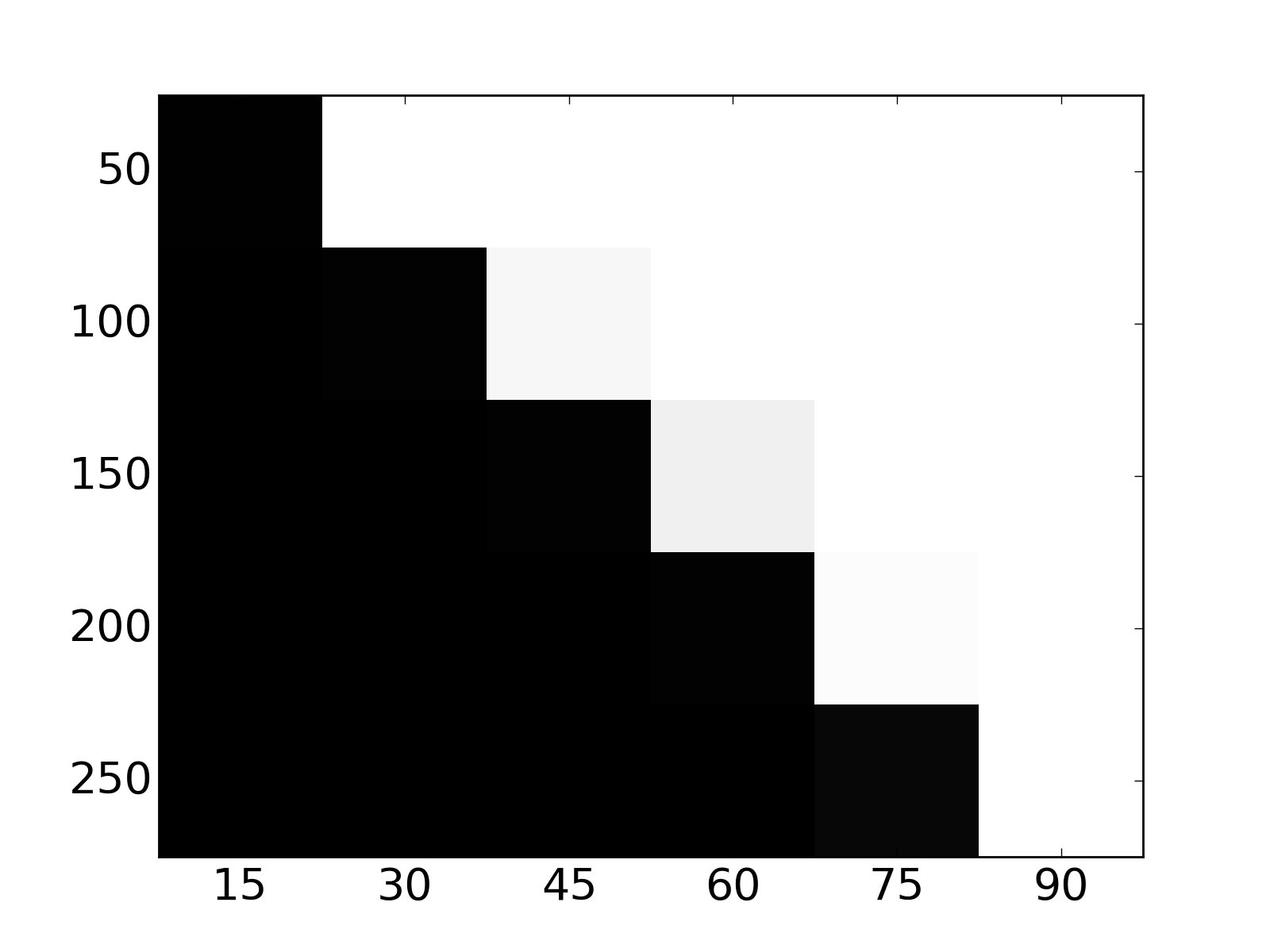}
            {Flatten Algorithm}
            \label{fig:flatten}
        \end{minipage}
        \caption{Success probabilities for the algorithms. $y$ axis is $n$ and $x$ axis is $\tau$. Black means fail.}
        \label{fig:success_rate}
   \end{figure*}

    \begin{figure*}[htbp]
        \begin{minipage}[b]{0.33\linewidth}
            \centering
            \includegraphics[width=0.9\linewidth]{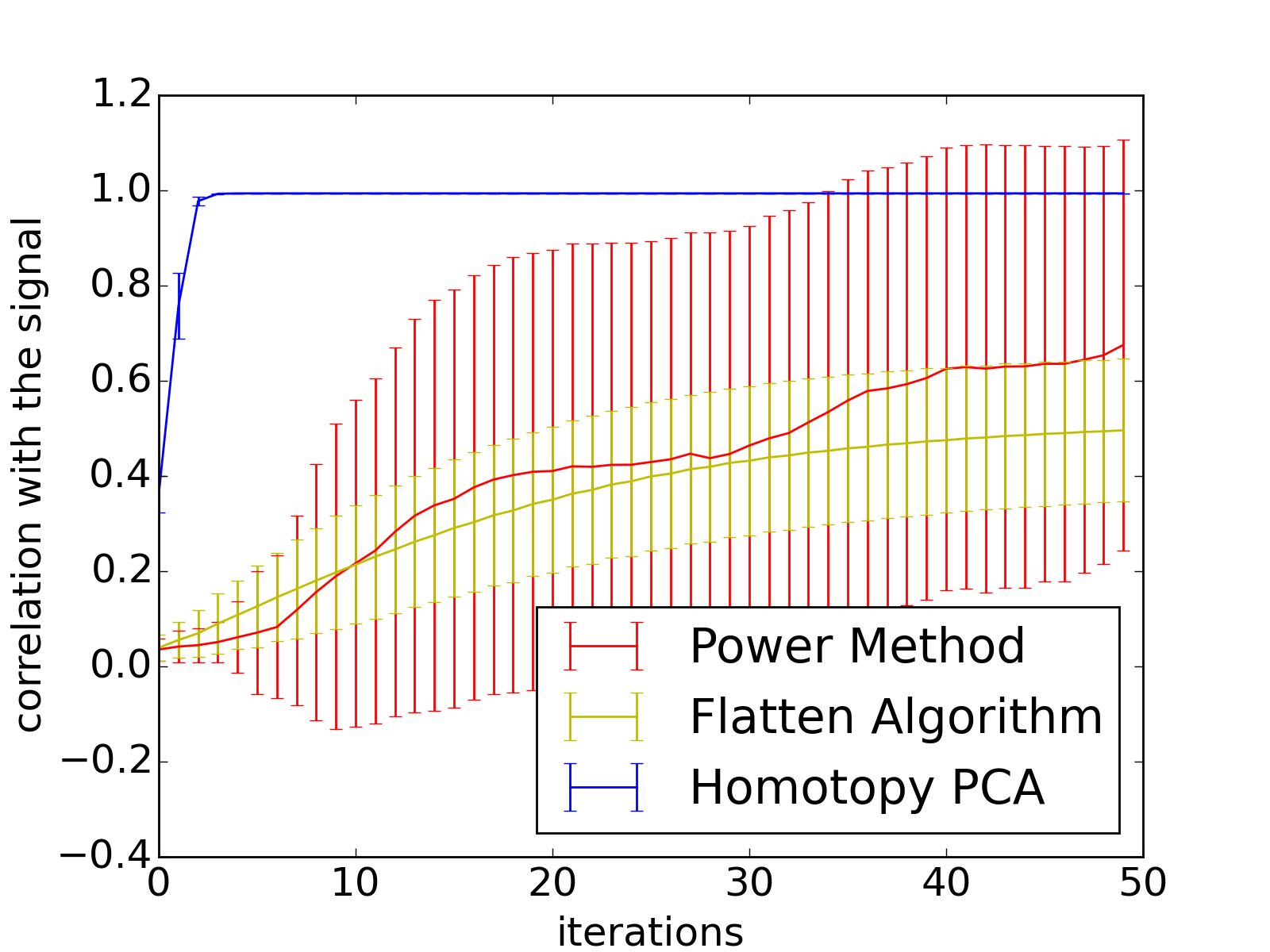}
            {$\alpha = 1.1$}
        \end{minipage}
        \begin{minipage}[b]{0.33\linewidth}
            \centering
            \includegraphics[width=0.9\linewidth]{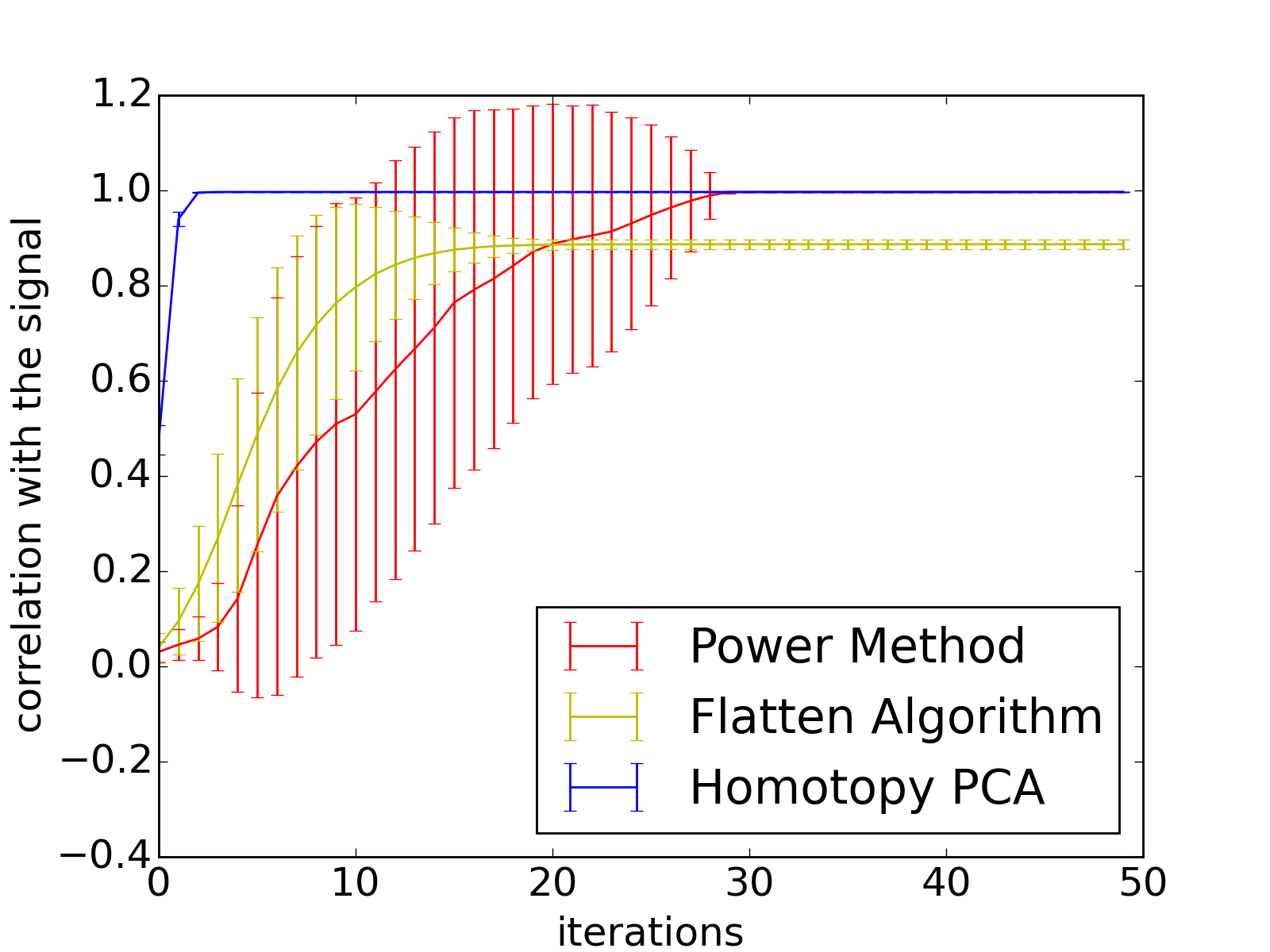}
            {$\alpha = 1.5$}
        \end{minipage}
        \begin{minipage}[b]{0.33\linewidth}
            \centering
            \includegraphics[width=0.9\linewidth]{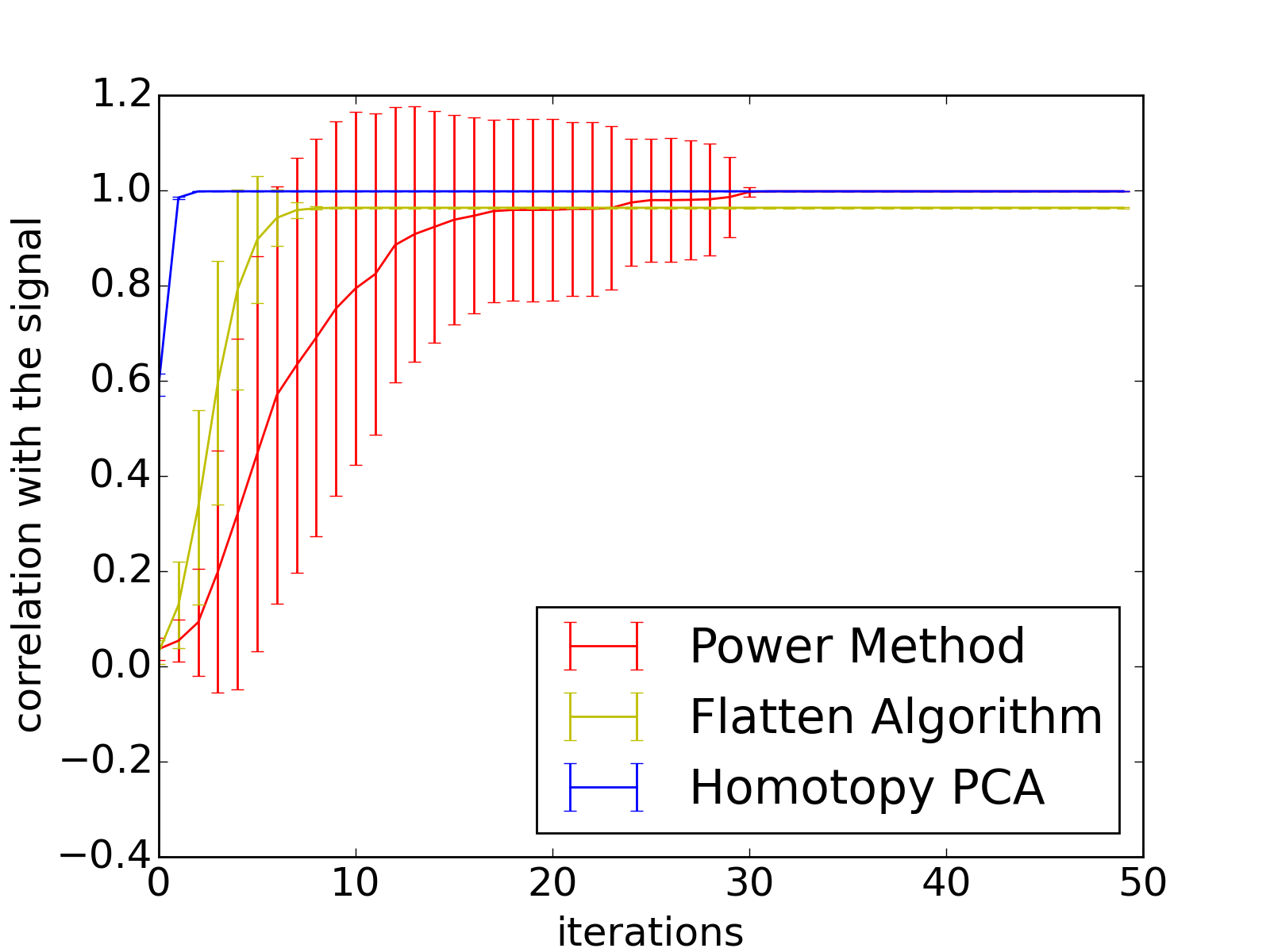}
            {$\alpha = 2$}
        \end{minipage}
        \caption{Rate of Convergence. $\tau = \alpha n^{\frac 3 4}$, $x$ axis is the number of iterations, $y$ axis is the expected correlation with signal $\bm v$ (with variance represented as error bars)}
        \label{fig:iteration_comparison}
    \end{figure*}

For brevity we refer to our Tensor PCA with homotopy initialization method (Algorithm~\ref{algo:simple}) as HomotopyPCA. We compare that with two other algorithms: the Flatten algorithm and the Power method. The Flatten algorithm was originally proposed by \cite{richard2014statistical}, where they show it works when $\tau = \Omega(n)$. \cite{hopkins2015tensor} accelerated the Flatten algorithm to near-linear time, and improved the analysis to show it works when $\tau = \tilde{\Omega}(n^{3/4})$. The Power method is similar to our algorithm, except it does not use intuitions from homotopy, and initialize at a random vector. Note that there are other algorithms proposed in \cite{hopkins2015tensor}, however they are based on the Sum-of-Squares SDP hierarchy, and even the fastest version runs in time $O(n^5)$ (much worse than the $O(n^3)$ algorithms compared here).

We first compare how often these algorithms successfully find the signal vector $v$, given different values of $\tau$ and $n$. The results are in Figure~\ref{fig:success_rate}, in which $y$-axis represents $n$ and $x$-axis represents $\tau$. We run 50 experiments for each values of $(n,\tau)$, and the grayness in each grid shows how frequent each algorithm succeeds: black stands for ``always fail'' and white stands ``always succeed''. For every algorithm, we say it fails if (1) when it converges, i.e., the result at two consecutive iterations are very close, the correlation with the signal $\bm v$ is less than $80\%$; (2) the number of iterations exceeds 100. In the experiments for Power Method, we observe there are many cases where situation (1) is true, although our new algorithms can always find the correct solution. In these cases the function indeed have a local maximizer. From Figure~\ref{fig:success_rate}, our algorithm outperforms both Power Method and the Flatten algorithm in practice. This suggests the constant hiding in our algorithm is possibly smaller. 

Next we compare the number of iterations to converge with $n = 500$ and $\tau = \alpha n^{\frac 3 4}$, where $\alpha$ varies in $[1.1, 1.5, 2]$. In Figure~\ref{fig:iteration_comparison}, the x-axis is the number of iterations, and the $y$ axis is the correlation with the signal $\bm v$ (error bars shows the distribution from 50 independent runs). For all $\alpha$, Homotopy PCA performs well --- converges in less than $5$ iterations and finds the signal $\bm v$. The Power Method converges to a result with good correlations with the signal $\bm v$, but has large variance because it sometimes gets trapped in local optima. As for the Flatten algorithm, the algorithm always converges. However, it takes more iterations compared to our algorithm. Also when $\alpha$ is small, the converged result has bad correlation with $\bm v$.

\bibliographystyle{unsrt}
\bibliography{bib}

\newpage
\appendix

\section{Omitted Proofs}

\subsection{Omitted Proof in Section~\ref{sec:prelim}}\label{app:lemma1}

\begin{lemma} [Lemma~\ref{lem:smooth} restated]
$$ g(\bm x, t) = \tau \langle \bm v, \bm x \rangle^3 + t^2 \langle 3 \tau \bm v + \bm u, \bm x \rangle + \bm A(\bm x,\bm x,\bm x),$$
where the vector $\bm u$ is defined by $\bm u_j = \sum_{i=1}^n (\bm A_{iij}+\bm A_{iji} + \bm A_{jii})$.
Moreover, let $\bm z$ be a vector where $\bm z_j = \sum_{i=1}^d(\bm T_{iij}+\bm T_{iji}+\bm T_{jii})$, then we have $\bm z = 3\tau \bm v+\bm u$.
\end{lemma}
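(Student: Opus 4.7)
The plan is to compute $g(\bm x,t) = [f\star k_t](\bm x)$ by the standard identity $[f\star k_t](\bm x) = \E_{\bm y \sim \N(\bm 0, t^2 \bm I_n)}[f(\bm x+\bm y)]$ (the Gaussian is even, so convolution is equivalent to this expectation), then expand $f(\bm x+\bm y) = \tau\langle \bm v, \bm x+\bm y\rangle^3 + \bm A(\bm x+\bm y,\bm x+\bm y,\bm x+\bm y)$ and take the expectation term by term, using only the Gaussian moments $\E[\bm y_i]=0$, $\E[\bm y_i\bm y_j]=t^2\delta_{ij}$, $\E[\bm y_i\bm y_j\bm y_k]=0$.

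For the signal part, binomial expansion gives $\tau(\langle\bm v,\bm x\rangle+\langle\bm v,\bm y\rangle)^3$. The two terms linear and cubic in $\langle\bm v,\bm y\rangle$ vanish in expectation (odd moments of a centered Gaussian), and the quadratic term contributes $3\tau\langle\bm v,\bm x\rangle\,\E[\langle\bm v,\bm y\rangle^2] = 3\tau t^2 \langle\bm v,\bm x\rangle$ because $\|\bm v\|=1$. So the signal part yields $\tau\langle\bm v,\bm x\rangle^3 + 3\tau t^2\langle\bm v,\bm x\rangle$.

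For the noise part, expand $\bm A(\bm x+\bm y,\bm x+\bm y,\bm x+\bm y) = \sum_{i,j,k}\bm A_{ijk}(\bm x_i+\bm y_i)(\bm x_j+\bm y_j)(\bm x_k+\bm y_k)$ into eight groups by which factors carry $\bm y$. The three-$\bm x$ group is exactly $\bm A(\bm x,\bm x,\bm x)$. The four groups with an odd number of $\bm y$'s vanish by odd-moment cancellation. The three groups with exactly two $\bm y$'s are the only surviving contributions; for each of them $\E[\bm y_a\bm y_b] = t^2\delta_{ab}$ collapses the double sum to a single sum. After careful bookkeeping of the indices, the $j$-th coordinate of the coefficient vector of $\bm x$ is $\sum_i(\bm A_{iij}+\bm A_{iji}+\bm A_{jii}) = \bm u_j$, giving $t^2\langle \bm u,\bm x\rangle$. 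Combining this with the signal computation produces the stated formula $g(\bm x,t) = \tau\langle\bm v,\bm x\rangle^3 + t^2\langle 3\tau\bm v+\bm u,\bm x\rangle + \bm A(\bm x,\bm x,\bm x)$.

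For the closing claim about $\bm z$, substitute $\bm T_{ijk} = \tau\bm v_i\bm v_j\bm v_k + \bm A_{ijk}$ into $\bm z_j=\sum_i(\bm T_{iij}+\bm T_{iji}+\bm T_{jii})$; the signal contribution becomes $3\tau\bm v_j\|\bm v\|^2 = 3\tau\bm v_j$ and the noise contribution is $\bm u_j$ by definition. The only nontrivial part of the whole argument is the careful index relabeling in the two-$\bm y$ groups to see that the three cyclic patterns $iij, iji, jii$ all appear and that no terms are double-counted; I do not anticipate any conceptual obstacle beyond that bookkeeping.
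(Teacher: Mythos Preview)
Your proposal is correct and follows essentially the same approach as the paper: write the convolution as $\E_{\bm y}[f(\bm x+\bm y)]$, expand the degree-3 polynomial, and use the Gaussian moments to kill the odd-order terms and collapse the quadratic-in-$\bm y$ terms. For the identity $\bm z = 3\tau\bm v+\bm u$, the paper instead reads off the coefficient of the linear-in-$\bm x$ term via $\E_{\bm y}[\bm T(\bm e_j,\bm y,\bm y)+\bm T(\bm y,\bm e_j,\bm y)+\bm T(\bm y,\bm y,\bm e_j)]$, but your direct substitution of $\bm T_{ijk}=\tau\bm v_i\bm v_j\bm v_k+\bm A_{ijk}$ is equally valid and arguably more transparent.
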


\begin{proof}
We can write $g(x,t)$ as an expectation
    \begin{align*}
        g(\bm x, t) = \int_{\R^n} f(\bm x + \bm y) k_{t}(\bm y) dy & = \mathbb E_{y \sim N(\bm 0, t^2 \mathbf I_n)} [f(\bm x + \bm y)] = \mathbb E_{y \sim N(\bm 0, \mathbf I_n)} [ f(\bm x + t\bm y)]
    \end{align*}
    Since $f$ is just a degree 3 polynomial, we can expand it and use the lower moments of Gaussian distributions:
    \begin{align*}
        g(\bm x, t) &~= \mathbb E [f(\bm x + t \bm y)] \\
        &~= \mathbb E [\tau \langle \bm v, (\bm x + \bm t y) \rangle^3 + \bm A(\bm x + t \bm y, \bm x + t \bm y, \bm x + t \bm y)] \\
        &~= \tau \langle \bm v, \bm x \rangle^3 + 3 \tau t^2 \langle \bm v, \bm x \rangle \cdot \mathbb E[\langle \bm v, \bm y \rangle^2] +  \mathbb E[\bm A(\bm x + t \bm y, \bm x + t \bm y, \bm x + t \bm y)] \\
        &~= \tau \langle \bm v, \bm x \rangle^3 + 3 \tau t^2 \langle \bm v, \bm x \rangle + t^2 \sum_{i,j} (\bm A_{iij} + \bm A_{iji} + \bm A_{jii}) \bm x_j + \bm A(\bm x,\bm x,\bm x)  
    \end{align*}
    Therefore the first part of the lemma holds if we define $\bm u$ to be the vector $\bm u_j = \sum_i \bm A_{iij} + \bm A_{iji} + \bm A_{jii}$. 
    In order to compute the vector $3\tau \bm v+\bm u$, notice that the term $\inner{3\tau \bm v+\bm u, \bm x}$ is the linear term on $\bm x$, and it is equal to
    $$\inner{3\tau \bm v+\bm u, \bm x} = \E_{y\sim N(\bm 0,\mathbf I_n)}[\bm T(\bm x,\bm y,\bm y)+\bm T(\bm y,\bm x,\bm y) + \bm T(\bm y,\bm y, \bm x)].$$
     This means 
     $$(3\tau \bm v+\bm u)_j = \E_{y\sim N(\bm 0,\mathbf I_n)}[\bm T(\bm e_j,\bm y,\bm y)+\bm T(\bm y,\bm e_j,\bm y) + \bm T(\bm y,\bm y, \bm e_j)] = \sum_{i=1}^d (\bm T_{iij}+\bm T_{iji}+\bm T_{jii}).$$
\end{proof}

\subsection{Omitted Proof in Section~\ref{sec:simplealg}}\label{app:theorem3}

    \begin{theorem}[Theorem~\ref{thm:converge} restated]
    When $\tau \ge Cn^{3/4}\log n$ for a large enough constant $C$, under the Independence Condition (Condition~\ref{assump}), Algorithm~\ref{algo:simple} finds a vector $\bm x^m$ such that $\inner{\bm x^m, \bm v} \geq 0.8$ in $O(\log \log n)$ iterations.
    \end{theorem}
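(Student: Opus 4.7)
The plan is to track the quantity $\rho_k := \inner{\bm x^k, \bm v}$ across iterations and show it grows doubly-exponentially until it exceeds $0.8$. The two key decompositions I will use repeatedly are $\bm z = 3\tau \bm v + \bm u$ (for the initializer) and
\[
\hat{\bm x}^{k+1} \;=\; \bm T(\bm x^k,\bm x^k,:) + \bm T(\bm x^k,:,\bm x^k) + \bm T(:,\bm x^k,\bm x^k) \;=\; 3\tau\,\rho_k^2\,\bm v + \delta(\bm x^k),
\]
with $\bm x^{k+1} = \hat{\bm x}^{k+1}/\|\hat{\bm x}^{k+1}\|$. Both pieces can be controlled because Condition~\ref{assump} gives both the norm and the inner product with $\bm v$ for $\bm u$ and $\delta(\bm x^k)$ in terms of $n$ and $m=O(\log\log n)$.

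First I would compute $\rho_0$. Writing $\|\bm z\|^2 = 9\tau^2 + 6\tau\inner{\bm u,\bm v} + \|\bm u\|^2$ and plugging in $\|\bm u\|=O(n\sqrt m)$ and $|\inner{\bm u,\bm v}|=O(\sqrt{nm\log n})$, one checks that for $\tau=\tilde\Theta(n^{3/4})$ the term $\|\bm u\|^2$ dominates, so $\|\bm z\|=\Theta(n\sqrt m)$. Since $\inner{\bm z,\bm v} = 3\tau + O(\sqrt{nm\log n}) = (3+o(1))\tau$, this yields
\[
\rho_0 \;=\; \frac{\inner{\bm z,\bm v}}{\|\bm z\|} \;=\; \Theta\!\left(\frac{\tau}{n\sqrt m}\right) \;\geq\; \frac{c\,\tau}{n\sqrt m}
\]
for some absolute constant $c>0$. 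This is already $\tilde\Omega(n^{-1/4})$, well above the $n^{-1/2}$ baseline of a random vector.

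Next I would prove the induction: in the ``noise-dominated'' regime ($\rho_k = o(1)$), set $\beta := 3\tau/\|\delta(\bm x^k)\| = \Theta(\tau/\sqrt{nm})$. Condition~\ref{assump} gives $\|\delta(\bm x^k)\|=\Theta(\sqrt{nm})$ and $|\inner{\delta(\bm x^k),\bm v}|=O(\sqrt{m\log n})$, so
\[
\inner{\hat{\bm x}^{k+1},\bm v} \;=\; 3\tau\rho_k^2 \pm O(\sqrt{m\log n}),
\qquad
\|\hat{\bm x}^{k+1}\| \;\leq\; 3\tau\rho_k^2 + \|\delta(\bm x^k)\|.
\]
As long as $3\tau\rho_k^2 \gg \sqrt{m\log n}$ (which is true for $k\geq 0$ because $\rho_0$ alone already gives $3\tau\rho_0^2 = \tilde\Omega(n^{1/4})$), one obtains $\rho_{k+1}\geq (1-o(1))\beta \rho_k^2$. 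Letting $\alpha_k := \beta\rho_k$, the recursion collapses to $\alpha_{k+1} \geq (1-o(1))\,\alpha_k^2$, so $\alpha_k \gtrsim \alpha_0^{2^k}$.

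The initial multiplier is $\alpha_0 = \beta\rho_0 = \Theta(\tau^2/(n^{3/2}m))$; with $\tau\geq Cn^{3/4}\log n$ and $m=O(\log\log n)$ this gives $\alpha_0 \gtrsim \log^2 n/\log\log n$, which is $\geq 2$ for large $n$. The doubly-exponential growth reaches $\alpha_k = \Theta(\beta)$, i.e.\ $\rho_k = \Theta(1)$, once $2^k \log \alpha_0 \geq \log\beta = \Theta(\log n)$, that is after at most $k = O(\log\log n)$ iterations. This is precisely the $m$ in the statement.

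Finally, once $\rho_k \geq c_0$ for an absolute constant $c_0$, we switch regimes: $3\tau\rho_k^2 = \Omega(\tau)$ overwhelms $\|\delta(\bm x^k)\| = O(\sqrt{nm})$ since $\tau \gg \sqrt{nm}$. A one-line calculation then gives
\[
\rho_{k+1} \;=\; \frac{3\tau\rho_k^2 + O(\sqrt{m\log n})}{\sqrt{9\tau^2\rho_k^4 + 6\tau\rho_k^2\inner{\bm v,\delta(\bm x^k)} + \|\delta(\bm x^k)\|^2}} \;\geq\; 1-o(1),
\]
so one more iteration (absorbed into the $O(\log\log n)$ budget) drives $\rho_m \geq 0.8$. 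The main obstacle I anticipate is bookkeeping the transition between the two regimes cleanly, in particular certifying that $3\tau\rho_k^2$ stays the dominant term in the numerator throughout and that the lower bound $\|\delta(\bm x^k)\|=\Omega(\sqrt{nm})$ (needed to pin down $\beta$) can be used — but these are handed to us directly by Condition~\ref{assump} together with the explicit choice $\tau \geq Cn^{3/4}\log n$.
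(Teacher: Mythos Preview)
Your proposal is correct and follows essentially the same route as the paper: track $\rho_k=\inner{\bm x^k,\bm v}$, use the decomposition $\hat{\bm x}^{k+1}=3\tau\rho_k^2\,\bm v+\delta(\bm x^k)$, bound the numerator and denominator of $\rho_{k+1}$ via Condition~\ref{assump}, and extract a squaring recursion that terminates in $O(\log\log n)$ steps. Your substitution $\alpha_k=\beta\rho_k$ is a clean way to expose the doubly-exponential growth; the paper carries out the same idea via an explicit induction hypothesis of the form $\rho_p\ge n^{-1/4}\log^{3^p}n/m^{2^p-1/2}$ and a case split on whether $3\tau\rho_p^2$ already dominates $\|\delta(\bm x^p)\|$.

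One small correction: Condition~\ref{assump} only asserts $\|\delta(\bm x^k)\|=O(\sqrt{nm})$, not $\Theta(\sqrt{nm})$, so the lower bound you mention at the end is not actually ``handed to you.'' Fortunately you do not need it: only the \emph{upper} bound on $\|\delta(\bm x^k)\|$ enters, since it is used to bound $\|\hat{\bm x}^{k+1}\|\le 3\tau\rho_k^2+O(\sqrt{nm})$ from above, which gives a \emph{lower} bound on $\beta$ and hence on $\rho_{k+1}$. With that adjustment your argument goes through unchanged.
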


    \begin{algorithm2e}[h] 
        \KwIn{Tensor $\bm T = \tau \cdot v^{\otimes 3} + \bm A$;}
        \KwOut{Approximation of $\bm v$;}
        $m = O(\log \log n)$;\\

        $\forall~j, \bm x^0_j = \sum_i \bm T_{iij} + \bm T_{iji} + \bm T_{jii}$;\\
        $\bm x^0 = \bm x^0 / \|\bm x^0\|$; \hfill//$\bm x^0 = \bm x^\dag$\\
        \For {$k = 0$ to $m$} {
            $\bm x^{k+1} = \bm T(\bm x^{k},\bm x^{k},:) + \bm T(\bm x^k,:,\bm x^k) + \bm T(:,\bm x^k,\bm x^k)$; \\
            $\bm x^{k+1} = \bm x^{k+1} / \|\bm x^{k+1}\|$;
        }
        {\Return $\bm x^m$};
        \caption{Tensor PCA by Homotopy Initialization}
    \end{algorithm2e}   

    \begin{proof}
        We first show the initial maximizer $\bm x^0$ already has a nontrivial correlation with $\bm v$. Recall $\bm x^0 = \frac{3\tau \bm v+\bm u}{\|3\tau \bm v+\bm u\|_2}$. Note that if $\tau$ is very large such that $\|3\tau \bm v\|_2 \ge 10 \|u\|_2$, then we already have $\inner{\bm x^0, \bm v} \ge 0.8$. Later we will show that whenever $\inner{\bm x^i, \bm v} \ge 0.8$ all later iterations have the same property.
        
Therefore, we are left with the case when $\|\bm u\|_2 \ge 0.1 \|3\tau \bm v\|_2$ (this implies $\tau \le O(n)$). In this case, by Condition~\ref{assump} we know $|\inner{\bm u, \bm v}| = O(\sqrt{nm\log n})$ and $\|\bm u\|_2 = O(n\sqrt{m})$, therefore 
    \begin{align*}
        \|3 \tau \bm v + \bm u\|_2 \in \left[\sqrt{\|\bm u\|_2^2+\|3\tau \bm v\|_2^2 - O(\tau \sqrt{nm\log n})}, \sqrt{\|\bm u\|_2^2+\|3\tau \bm v\|_2^2 + O(\tau \sqrt{nm\log n})}\right]    
    \end{align*}
    Therefore, $\|3 \tau \bm v + \bm u\|_2 = \Theta(n\sqrt{m})$. Assume $\tau \ge Cn^{3/4}\log^c n$ for large enough $C$ (where we will later show $c = 1$ suffices)
        \begin{align*}
                \langle \bm x^0, \bm v \rangle = \frac{3 \tau + \langle \bm u, \bm v \rangle} {\|3 \tau \bm v + \bm u\|_2} = \frac 1 {O(n \sqrt m)} \Theta(n^{\frac 3 4} \cdot \log^c n) \ge \frac{n^{-\frac 1 4} \cdot \log^c n} {\sqrt m}.
        \end{align*}
        
        Now let us consider the first step of power method.
        Let $\hat{\bm x}^1$ be the vector before normalization. Observe that  $\hat{\bm x}^1 = 3 \tau \langle \bm v, \bm x^0 \rangle^2 \bm v + \delta(\bm x^0)$. By Condition~\ref{assump} we have bounds on $\|\delta(\bm x^0)\|$ and $|\inner{\delta(\bm x^0),v}|$, therefore we have 
       \begin{align*}
            \langle \hat{\bm x}^{1}, \bm v \rangle = 3 \tau \langle \bm v, \bm x^0 \rangle^2 + \inner{\delta(\bm x^0),\bm v} \in \left[3 \tau \langle \bm v, \bm x^0 \rangle^2 - O(\sqrt{m \log n}), 3 \tau \langle \bm v, \bm x^0 \rangle^2 + O(\sqrt{m \log n})\right].
       \end{align*}
        Note that when $\tau \ge Cn^{3/4}\log^c n$ and $\log^c n \gg m$, the first term is much larger than $\sqrt{m\log n}$. Hence for the first iteration, we have $\langle \hat{\bm x}^{1}, \bm v \rangle \ge (3-o(1))\tau \inner{\bm v,\bm x^0}^2\ge 2Cn^{\frac 1 4} \cdot \log^{3c} n / m$.

Similar as before, when $\|\delta(\bm x^0)\|_2 \le 0.1 \|3\tau \inner{v,\bm x^0}^2 v\|_2$, we already have $\inner{\bm x^1, \bm v} \ge 0.8$. On the other hand, if $\|\delta(\bm x^0)\|_2 \ge 0.1\|3\tau \inner{v,\bm x^0}^2 v\|_2$, in this case, by Condition~\ref{assump} we know $\|\delta(\hat{\bm x}^0)\| = O(\sqrt{nm})$. We again have $\|\hat{\bm x}^1\|_2 \in \sqrt{\|\delta(\bm x^0)\|_2^2+\|3\tau \inner{\bm v, \bm x^0}^2 \bm v\|_2^2 \pm O(\tau \inner{\bm v, \bm x^0}^2 \sqrt{nm})}$. Therefore, $\|\hat{\bm x}^1\|_2 = O(\sqrt {n m})$.
    Combining the bounds for the norm of $\hat{\bm x}^1$ and its correlation with $\bm v$,
        \[
            \langle \frac{\hat{\bm x}^{1}}{\|\hat{\bm x}^{1}\|}, \bm v \rangle \ge n^{-\frac 1 4} \cdot \log^{3c} n / m^{\frac 3 2}.
        \]
        Therefore, when $\log^c n \gg m$, the correlation between $\bm x^1$ and $\bm v$ is larger than the correlation between $\bm x^0$ and $\bm v$. This shows the first step makes an improvement.
        

In order to show this for the future steps, we do induction over $p$. The induction hypothesis is for every $p$, either $\inner{\bm x^p, \bm v} \ge 0.8$ or
        \[
            \langle \bm x^p, \bm v \rangle \ge n^{-\frac 1 4} \log^{3^p c} n / m^{2^p - \frac 1 2}.
        \]
        Initially, for $p = 0$, we have already proved the induction hypothesis. 
        
        Now assume the induction hypothesis is true for $p$. In the next iteration, let $\hat{\bm x}^{p+1}$ be the vector before normalization. Similar as before we have $\hat{\bm x}^{p+1} = 3 \tau \langle \bm v, \bm x^p \rangle^2 \bm v + \delta(\bm x^p)$.

When $\inner{\bm x^p,\bm v} \ge 0.8$, by Condition~\ref{assump} we know the norm of $\|\delta(\bm x^p)\|$ is much smaller than $3\tau \inner{\bm x^p,\bm v}^2$. Therefore we still have $\inner{\bm x^{p+1},\bm v} \ge 0.8$. 
        
      In the other case, we follow the same strategy as the first step. By Condition~\ref{assump} we can compute the correlation between $\hat{\bm x}^{p+1}$ and $\bm v$:
        \begin{align*}
            \langle \hat{\bm x}^{p+1}, \bm v \rangle &= 3 \tau \langle \bm v, \bm x^p \rangle^2 \pm O(\sqrt{m \log n}) \\ &\ge 2Cn^{\frac 1 4} \log^{3^{p+1}c} n / m^{2^{p+1} - 1}.
        \end{align*}
For the norm of $\hat{\bm x}^{p+1}$, notice that the first term $3 \tau \langle \bm v, \bm x^p \rangle^2 \bm v$ has norm $3 \tau \langle \bm v, \bm x^p \rangle^2$, and the second term $\delta(\bm x^p)$ has norm $\Theta(\sqrt{nm})$. Note that these two terms are almost orthogonal by Independence Condition, therefore
        \[
            \|\hat{\bm x}^{p+1}\|_2 = \Theta(\tau \langle \bm v, \bm x^p \rangle^2) + O(\sqrt {n m})
        \]
        If $3\tau \langle \bm v, \bm x^p \rangle^2 \ge \Delta \sqrt {n m}$, then $\|\hat{\bm x}^{p+1}\|_2 \le (3+\Delta')\tau \langle \bm v, \bm x^p \rangle^2$, where $\Delta'$ is a constant that is smaller than $0.1$ when $\Delta$ is large enough. Therefore in this case $\langle \frac{\hat{\bm x}^{p+1}}{\|\hat{\bm x}^{p+1}\|_2}, \bm v \rangle \ge 0.8$. Thus we successfully recover $\bm v$ in the next step.
        
         Otherwise, we know $\|\hat{\bm x}^{p+1}\|_2 = O(\sqrt {n m})$. Then,
        \[
            \langle \frac{\hat{\bm x}^{p+1}}{\|\hat{\bm x}^{p+1}\|_2}, \bm v \rangle \ge n^{-\frac 1 4} \cdot \log^{3^{p+1}c} n / m^{2^{p+1} - \frac 1 2}
        \]
        If we select $c = 1$, after $m = O(\log \log n)$ rounds, we have $\inner{\bm x^p, \bm v} \ge n^{-\frac 1 4} \log^{3^p c} n / m^{2^p - \frac 1 2} \ge 0.8$, therefore we must always be in the first case. As a result $\langle \bm x^m, \bm v \rangle \ge 0.8$. 
    \end{proof}

    \begin{lemma}[Lemma \ref{lem:resample} restated]
        Let the sequence $\bm T^0, \cdots, \bm T^{m-1}$ be generated according to Section~\ref{sec:resampling}. Let $\bm Q^i = \tau \bm v^{\otimes 3}+\bm C^i$, where $\bm C^i$'s are tensors with independent Gaussian entries. Each entry in $\bm C^i$ is distributed as $N(0,m)$. The two sets of variables $\{\bm T^i\}$ and $\{\bm Q^i\}$ has the same distribution. 
    \end{lemma}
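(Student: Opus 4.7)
The plan is to invoke the standard fact that two jointly Gaussian families with identical means and covariances have identical distributions. Both $\{\bm T^i\}_{i=0}^{m-1}$ and $\{\bm Q^i\}_{i=0}^{m-1}$ are equal to the deterministic signal $\tau\bm v^{\otimes 3}$ plus a jointly Gaussian noise family: the former because $\bm A^p := \bm T^p - \tau\bm v^{\otimes 3} = \bm A - \overline{\bm B} + \bm B^p$ is a fixed linear combination of the independent Gaussian tensors $\bm A, \bm B^0, \dots, \bm B^{m-1}$, and the latter by construction. Thus the entire lemma reduces to matching the first two moments of $(\bm A^0,\dots,\bm A^{m-1})$ and $(\bm C^0,\dots,\bm C^{m-1})$.

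The mean check is immediate: $\bm A^p$ is a centered linear combination of centered Gaussians, and $\bm C^p$ is centered by definition. For the covariance, observe that entries at distinct positions $(i,j,k) \neq (i',j',k')$ are independent on both sides, since $\bm A$ and every $\bm B^p$ have independent entries across positions, and the same holds for $\bm C^p$. So it suffices to fix one coordinate and verify the covariance of the scalar vector $(\bm A^0_{ijk},\dots,\bm A^{m-1}_{ijk})$ equals that of $(\bm C^0_{ijk},\dots,\bm C^{m-1}_{ijk})$. Writing $a = \bm A_{ijk}$ and $b_p = \bm B^p_{ijk}$, so $\bm A^p_{ijk} = a + b_p - \bar b$ with $\bar b = \frac{1}{m}\sum_{p=0}^{m-1} b_p$, I would carry out the short algebra $\mathrm{Var}(b_p - \bar b) = m - 2 + 1 = m - 1$ and $\mathrm{Cov}(b_p - \bar b,\, b_q - \bar b) = -1$ for $p \neq q$. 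Combined with $\mathrm{Var}(a) = 1$ and the independence of $a$ from all $b_p$, this yields $\mathrm{Var}(\bm A^p_{ijk}) = m$ and $\mathrm{Cov}(\bm A^p_{ijk},\bm A^q_{ijk}) = 0$, which matches the right-hand side $\mathrm{Cov}(\bm C^p_{ijk},\bm C^q_{ijk}) = m\,\mathbf{1}[p=q]$.

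There is no real conceptual obstacle; the proof is essentially bookkeeping about moments of Gaussians. The only point worth highlighting is why the injection variance is chosen to be exactly $m$: this is the unique value for which $\mathrm{Var}(b_p - \bar b) = m - 1$ exactly compensates for the unit contribution of the shared noise $a$ to yield total variance $m$, while simultaneously $\mathrm{Cov}(b_p - \bar b,\, b_q - \bar b) = -1$ cancels $\mathrm{Var}(a) = 1$ across different indices $p \neq q$, restoring independence. This calibration is the whole content of the noise-injection trick.
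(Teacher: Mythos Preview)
Your proposal is correct and follows essentially the same route as the paper: both arguments observe that the two families are jointly Gaussian with a common deterministic shift, then verify equality of means and of the covariance structure entry by entry, splitting into the cases $(i,j,k)\neq(i',j',k')$, $p\neq q$ at the same position, and the diagonal variance. Your reduction to the scalar variables $a,b_0,\dots,b_{m-1}$ and the observation that the injected variance $m$ is exactly calibrated so that $\mathrm{Var}(b_p-\bar b)=m-1$ and $\mathrm{Cov}(b_p-\bar b,b_q-\bar b)=-1$ compensate the shared unit-variance noise is a slightly cleaner presentation of the same computation the paper carries out directly.
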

    \begin{proof}
    Note that both distributions are multivariate Gaussians. Therefore we only need to show that they have the same first and second moments.    
    
    For the first moment, this is easy, we have $\mathbb E [\bm T^p] = \tau \cdot v^{\otimes 3}$ and $\mathbb E [\bm Q^p] = \tau \cdot v^{\otimes 3}$ for all $p$.
    
    For the second moment (covariance), we consider the covariance between $T^p_{ijk}$ and $T^q_{i'j'k'}$. Note that for the distribution $Q$, as long as the 4 tuple $(p,i,j,k)\ne (q,i',j',k')$ the correlation is 0. We first show when $(i,j,k) \neq (i',j',k')$ we have
    \begin{align*}
         \mathrm{Cov}(\bm T^p_{ijk}, \bm T^q_{i'j'k'}) 
        = & ~\mathbb E [(\bm T^p_{ijk} - \tau \bm v_i \bm v_j \bm v_k) (\bm T^q_{i'j'k'} - \tau \bm v_{i'} \bm v_{j'} \bm v_{k'})] \\
        = & ~\mathbb E [(\bm B^p_{ijk} - \overline{\bm B_{ijk}} + \bm A_{ijk}) (\bm B^q_{i'j'k'} - \overline{\bm B_{i'j'k'}} + \bm A_{i'j'k'})] \\
        = & ~\mathbb E [\bm B^p_{ijk} - \overline{\bm B_{ijk}} + \bm A_{ijk}] \mathbb E [\bm B^q_{i'j'k'} - \overline{\bm B_{i'j'k'}} + \bm A_{i'j'k'}] \\
        = & ~0 \\
    \end{align*}
Hence for these variables the two distributions have the same covariance.
    
    Next we consider the case $p \neq q$, 
    \begin{align*}
        \mathrm{Cov}(\bm T^p_{ijk}, \bm T^q_{ijk})
        = & ~\mathbb E [(\bm T^p_{ijk} - \tau \bm v_i \bm v_j \bm v_k) (\bm T^q_{ijk} - \tau \bm v_i \bm v_j \bm v_k)] \\
        = & ~\mathbb E [(\bm B^p_{ijk} - \overline{\bm B_{ijk}} + \bm A_{ijk}) (\bm B^q_{ijk} - \overline{\bm B_{ijk}} + \bm A_{ijk})] \\
        = & ~-\frac {m - 1} {m^2} \mathbb E[(\bm B^p_{ijk})^2 + (\bm B^q_{ijk})^2] + \sum_{l \neq p, q} \frac {1} {m^2} \mathbb E[(\bm B^l_{ijk})^2] + \mathbb E[\bm A_{ijk}^2] \\
        = & ~-\frac {2(m - 1)} {m^2} \cdot m + \frac {m - 2} {m^2} \cdot m + 1 = 0
    \end{align*}
    The covariance for these entries also match.
    
    Finally we need to consider the variance for each entry of $\bm T^p$ and $\bm Q^p$. To do that we compute the Variance of $\bm T^p_{ijk}$
    \begin{align*}
        \mathrm{Var}(\bm T^p_{ijk})
        = & ~\mathbb E [(\bm T^p_{ijk} - \tau \bm v_i \bm v_j \bm v_k) (\bm T^p_{ijk} - \tau \bm v_i \bm v_j \bm v_k)] \\
        = & ~\mathbb E [(\bm B^p_{ijk} - \overline{\bm B_{ijk}} + \bm A_{ijk}) (\bm B^p_{ijk} - \overline{\bm B_{ijk}} + \bm A_{ijk})] \\
        = & ~\frac {(m - 1)^2} {m^2} \mathbb E[(\bm B^p_{ijk})^2] + \sum_{l \neq p} \frac {1} {m^2} \mathbb E[(\bm B^l_{ijk})^2] + \mathbb E[\bm A_{ijk}^2] \\
        = & ~\frac {(m - 1)^2} {m^2} \cdot m + \frac {m - 1} {m^2} \cdot m + 1 = m
    \end{align*}
    This is also the same as the variance of $Q^p_{ijk}$. Therefore the two multivariate Gaussians have the same mean and covariance, and must be the same distribution.
        \end{proof}

\begin{lemma}[Lemma~\ref{lem:resamplingassump} restated] Let $\bm T^p$ be generated according to Algorithm~\ref{algo:resample} and $\bm A^p = \bm T^p - \tau \bm v^{\otimes 3}$. Let $\bm u^0$ be a vector such that $\bm u^0_j = \sum_i \bm A_{iij}^0 + \bm A_{iji}^0 + \bm A_{jii}^0$, and $\delta^p(\bm x^p) = \bm A^p(\bm x^p, \bm x^p, :)+\bm A^p(\bm x^p, :, \bm x^p)+\bm A^p(:, \bm x^p, \bm x^p)$. With high probability, (1) $\|\bm u^0\|_2 = \Theta(n \sqrt{m})$ and $|\langle \bm u^0, \bm v \rangle| = O(\sqrt {n m \log n})$; (2) for the sequence computed by Algorithm \ref{algo:resample}, $\bm x^0, \bm x^1, \cdots, \bm x^{m-1}$, $\forall~ 0 \leq p \leq m-1$, $\|\delta^p(\bm x^p)\|_2 = \Theta(\sqrt{n m}) \|\bm x^p\|_2^2$ and $|\langle \delta^p(\bm x^p), \bm v \rangle| = O(\sqrt{m \log n}) \|\bm x^p\|_2^2$. As a result Condition~\ref{assump} is satisfied.
\end{lemma}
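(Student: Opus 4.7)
The proof hinges on Lemma \ref{lem:resample}, which establishes that the joint distribution of $(\bm T^0, \ldots, \bm T^{m-1})$ coincides with that of a collection of independent signal-plus-noise tensors whose noise components are iid Gaussian with entry variance $m$. For all distributional questions I may therefore treat $\bm A^0, \ldots, \bm A^{m-1}$ as mutually independent Gaussian tensors with $\mathcal{N}(0,m)$ entries. The key structural consequence is that in Algorithm \ref{algo:resample}, the iterate $\bm x^p$ is a deterministic function of earlier tensors and is thus independent of the noise component $\bm A^p$ that enters $\delta^p(\bm x^p)$ in the iteration that produces the next iterate. All bounds then follow by conditioning on $\bm x^p$ and applying standard one-dimensional Gaussian or chi-square tail estimates.

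For part (1), $\bm u^0$ depends only on $\bm A^0$, so no conditioning is needed. Each coordinate $\bm u^0_j = \sum_i (\bm A^0_{iij} + \bm A^0_{iji} + \bm A^0_{jii})$ is a sum of $3n - 2$ independent $\mathcal{N}(0,m)$ variables together with a single degenerate $3\bm A^0_{jjj}$ term from $i = j$, so $\bm u^0_j \sim \mathcal{N}(0, (3n + O(1))m)$. Moreover, the sets of entries of $\bm A^0$ touched by $\bm u^0_j$ and $\bm u^0_{j'}$ are disjoint for $j \ne j'$, hence these coordinates are independent. Therefore $\|\bm u^0\|_2^2$ is a scaled chi-square with $n$ degrees of freedom and mean $\Theta(n^2 m)$, which concentrates to $\Theta(n\sqrt{m})$ in norm by a standard chi-square tail bound. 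The inner product $\inner{\bm u^0, \bm v} = \sum_j \bm v_j \bm u^0_j$ is a single mean-zero Gaussian with variance $\Theta(nm)\|\bm v\|_2^2 = \Theta(nm)$, yielding $|\inner{\bm u^0, \bm v}| = O(\sqrt{nm \log n})$ with high probability.

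For part (2), fix $p$ and condition on $\bm x^p$. Since $\bm A^p$ is independent of $\bm x^p$ under the conditioning, $\delta^p(\bm x^p)$ is a Gaussian vector in $\mathbb{R}^n$. Its $j$-th coordinate equals $\sum_{i,k}(\bm A^p_{ikj} + \bm A^p_{ijk} + \bm A^p_{jik})\,\bm x^p_i \bm x^p_k$, which has conditional variance $\Theta(m)\|\bm x^p\|_2^4$ (the three summands share indices only on a negligible set of diagonal entries and thus behave as independent Gaussians after aggregation). Hence $\E[\|\delta^p(\bm x^p)\|_2^2 \mid \bm x^p] = \Theta(nm)\|\bm x^p\|_2^4$, and a conditional chi-square concentration bound yields both upper and lower bounds $\|\delta^p(\bm x^p)\|_2 = \Theta(\sqrt{nm})\|\bm x^p\|_2^2$ with high probability. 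Similarly, $\inner{\delta^p(\bm x^p), \bm v}$ is, conditional on $\bm x^p$, a mean-zero Gaussian with variance $\Theta(m)\|\bm x^p\|_2^4$, so a Gaussian tail gives $|\inner{\delta^p(\bm x^p), \bm v}| = O(\sqrt{m \log n})\|\bm x^p\|_2^2$. A union bound over $p \in \{0, 1, \ldots, m-1\}$ costs only an $O(\log m) = O(\log \log n)$ factor in the failure probability, which is absorbed into the ``with high probability'' quantifier.

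The only genuinely delicate ingredient is establishing the independence of $\bm x^p$ from the noise $\bm A^p$ entering $\delta^p(\bm x^p)$; once this is in hand, everything reduces to textbook Gaussian and chi-square concentration. The independence is supplied entirely by Lemma \ref{lem:resample} together with the noise-injection schedule, which is exactly the reason the injection step was introduced. The conclusion that Condition \ref{assump} holds is then immediate from the bounds just derived.
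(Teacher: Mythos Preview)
Your proposal is correct and follows essentially the same route as the paper: invoke Lemma~\ref{lem:resample} to reduce to independent $\mathcal{N}(0,m)$ noise tensors, observe that conditioning on $\bm x^p$ leaves the relevant noise fresh, compute the second moments of $\bm u^0$ and $\delta^p(\bm x^p)$, and finish with standard Gaussian and chi-square tail bounds together with a union bound over $p$. The paper carries out the moment calculations explicitly whereas you summarize the orders, but the logic is identical. One small remark: the coordinates of $\delta^p(\bm x^p)$ are not quite independent (the covariance matrix is $3m\|\bm x^p\|^4 I$ plus a rank-one term $6m\|\bm x^p\|^2\bm x^p(\bm x^p)^\top$), so the norm is not literally a chi-square; the paper glosses over this in the same way by calling the vector ``spherical Gaussian,'' and the rank-one perturbation does not affect the $\Theta(\sqrt{nm})\|\bm x^p\|^2$ conclusion.
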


\begin{proof}

Since by Lemma~\ref{lem:resample}, we know the noise tensors $\bm A^p$ used in $p$-th step behave exactly the same as independent Gaussian tensors. The vectors $\bm u^0$ and $\delta(x^p)$ are therefore spherical Gaussian random variables conditioned on any value of $\bm x^i$. Therefore we can prove this lemma by standard Gaussian concentration results.
\begin{claim}\label{clm:gaussian}\citep{laurent2000adaptive}
Suppose $\bm x$ is a $d$-dimensional spherical Gaussian, then
$$
\Pr[|\|\bm x\|^2 - \E[\|\bm x\|^2]| \ge \frac{1}{2}\E[\|\bm x\|^2]] \le e^{-\Omega(d)}.
$$
Also, for any fixed vector $\bm v$, $\inner{\bm x, \bm v}$ is also a Gaussian distribution that satisfies
$$
\Pr[|\inner{\bm x,\bm v}| \ge t\sqrt{\E[\inner{\bm x,\bm v}^2]}] \le e^{-\Omega(t^2)}.
$$
\end{claim}

 For terms like $\|\bm u^p\|$ and $\|\delta(\bm x^p)\|$, we know the norm of a Gaussian random variable obeys the $\chi^2$ distribution and is highly concentrated to its expectation. For terms like $\inner{\bm u^p,\bm v}$ and $\inner{\delta(\bm x^p), \bm v}$, we know they are just Gaussian distributions and is always bounded by $O(\sigma\sqrt{\log n})$ with high probability. Therefore we only need to compute the expected norms of these vectors.

    \begin{align*}
        \mathbb E[\|\bm u^p\|_2^2] &~= \mathbb E[\sum_j (\sum_i \bm A^p_{iij} + \bm A^p_{iji} + \bm A^p_{jii})^2] \\
        &~= \mathbb E[\sum_{j} (\sum_{i \neq j} (\bm A^p_{iij})^2 + (\bm A^p_{iji})^2 + (\bm A^p_{jii})^2) + 9 (\bm A^p_{jjj})^2] \\
        &~= 3 n (n-1) m + 9 n m \\
        &~= \Theta(n^2 m)
    \end{align*}
    
Therefore by Claim~\ref{clm:gaussian} we have $\|u\|_2 = \Theta(n\sqrt{m})$ with high probability.    
    
    \begin{align*}
        \mathbb E[\langle \bm u^p, \bm v \rangle] = \mathbb E[\sum_j (\sum_i \bm A^p_{iij} + \bm A^p_{iji} + \bm A^p_{jii}) \bm v_j] = 0
    \end{align*}

    \begin{align*}
        \mathbb E[\langle \bm u^p, \bm v \rangle^2] &~= \mathbb E[\sum_j ((\sum_i \bm A^p_{iij} + \bm A^p_{iji} + \bm A^p_{jii}) \bm v_j)^2] \\
        &~= \mathbb E[\sum_j \bm v_j^2 (9 (\bm A^p_{jjj})^2 + \sum_{i \neq j} (\bm A^p_{iij})^2 +  (\bm A^p_{iji})^2 + (\bm A^p_{jii})^2)] \\
        &~=9 m + 3(n - 1) m \\
        &~=\Theta(n m)
    \end{align*}
    
    This means $\inner{u,v}$ is a Gaussian random variable with variance $\sigma^2 = \Theta(nm)$, therefore for any constant $C'$, with probability at least $1-n^{-C'}$ we know $|\inner{u,v}| \le O(\sqrt{nm \log n})$. We can apply union bound over all $p$ and get the desired result.
    
    Similarly we can compute the expected square norm of $\delta(\bm x^p)$ as below
    
    \begin{align*}
        \mathbb E[\|\delta(\bm x^p)\|_2^2] &~= \Theta(1) \mathbb E[\|\bm A^p(\bm x^p, \bm x^p, :)\|_2^2] \\
        &~= \Theta(1) \mathbb E[\sum_k (\sum_{i,j} \bm A^p_{ijk} \bm x^p_i \bm x^p_j)^2] \\
        &~= \Theta(1) \mathbb E[\sum_k (\sum_{i,j} (\bm A^p_{ijk})^2 (\bm x^p_i)^2 (\bm x^p_j)^2)] \\
        &~= \Theta(1) n m \|x^p\|_2^4
    \end{align*}
    \begin{align*}
        & \mathbb E[\langle \delta(\bm x^p), \bm v \rangle] = \sum_{i,j,k} \mathbb E[\bm A^p_{ijk} (\bm x^p_i \bm x^p_j \bm v_k + \bm x^p_i \bm v_j \bm x^p_k + \bm v_i \bm x^p_j \bm x^p_k)] = 0
    \end{align*}
    \begin{align*}
        \mathbb E[\langle \delta(\bm x^p), \bm v \rangle^2] &~= \sum_{i,j,k} \mathbb E[(\bm A^p_{ijk})^2 (\bm x^p_i \bm x^p_j \bm v_k + \bm x^p_i \bm v_j \bm x^p_k + \bm v_i \bm x^p_j \bm x^p_k)^2] \\
        &~= 3 m \sum_k \bm v_k^2 \sum_{i,j} (\bm x_i^p)^2 (\bm x_j^p)^2 + 6 m \sum_i (\bm x_i^p)^2 \sum_{j,k} \bm v_i \bm v_j \bm x_j^p \bm x_k^p \\
        &~= 3 m \|\bm x^p\|_2^4 + 6 \|\bm x^p\|_2^2 \langle \bm v, \bm x^p \rangle^2 \\
        &~\leq 9 m \|\bm x^p\|_2^4
    \end{align*}
    
    The bounds on $\|\delta(\bm x^p)\|$ and $\inner{\delta(\bm x^p),v}$ follows immediately from these expectations.
\end{proof}

\subsection{Omitted Proof in Section~\ref{sec:path}}\label{app:path}

    \begin{lemma}[Lemma~\ref{lem:smoothalternative} restated]
    The smoothed version of the alternative objective is
    \begin{align*}
        g_r(\bm x, t) = \tau \langle \bm v, \bm x \rangle^3 + t^2 \langle 3 \tau \bm v + \bm u, \bm x \rangle + \bm A(\bm x,\bm x,\bm x) - \frac {3 \tau} 4 \left(\|x\|_2^4 + 2 t^2 (n+2) \|x\|_2^2 + t^4 (n^2+2n)\right)
    \end{align*}
    Its gradient and Hessian are equal to
\begin{align*}
        \nabla g_r(\bm x, t) = 3 \tau \langle \bm v, \bm x \rangle^2 \bm v + t^2 (3 \tau \bm v + \bm u) 
         + \delta(\bm x) - 3 \tau (\|\bm x\|_2^2 \bm x + t^2 (n+2) \bm x).
    \end{align*}
    and
    \begin{align*}
        \nabla^2 g_r(\bm x, t) =  -3 \tau ((\|\bm x\|_2^2 + t^2 (n+2)) \bm I - 2 \langle \bm v, \bm x \rangle \bm v \bm v^T + 2 \bm x \bm x^T) + P_{sym}[\bm A(\bm x, :, :) + \bm A(:, \bm x, :) + \bm A(:, :, \bm x)].
    \end{align*}    
    \end{lemma}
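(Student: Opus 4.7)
The plan is to exploit linearity of the Gaussian convolution operator: since smoothing is a linear operator, I can smooth $f_r(\bm x) = \tau\langle\bm v,\bm x\rangle^3 + \bm A(\bm x,\bm x,\bm x) - \tfrac{3\tau}{4}\|\bm x\|_2^4$ term by term. The first two terms have already been smoothed in Lemma~\ref{lem:smooth}, contributing exactly $\tau\langle\bm v,\bm x\rangle^3 + t^2\langle 3\tau\bm v+\bm u,\bm x\rangle + \bm A(\bm x,\bm x,\bm x)$. So the only new work is to compute the smoothing of the quartic penalty $-\tfrac{3\tau}{4}\|\bm x\|_2^4$.

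Following the same template as the proof of Lemma~\ref{lem:smooth}, I would write the convolution as an expectation $\E_{\bm y \sim \N(\bm 0, \mathbf I_n)}[\|\bm x + t\bm y\|_2^4]$, expand $\|\bm x+t\bm y\|_2^4 = (\|\bm x\|_2^2 + 2t\langle \bm x,\bm y\rangle + t^2\|\bm y\|_2^2)^2$, and use three elementary Gaussian moment identities: $\E[\langle\bm x,\bm y\rangle^2] = \|\bm x\|_2^2$, $\E[\|\bm y\|_2^2]=n$, and $\E[\|\bm y\|_2^4] = n^2 + 2n$ (using that $\|\bm y\|_2^2 \sim \chi^2_n$ has variance $2n$). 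All cross terms linear in $\bm y$ vanish by symmetry. Collecting terms yields $\|\bm x\|_2^4 + 2t^2(n+2)\|\bm x\|_2^2 + t^4(n^2+2n)$, which after multiplying by $-\tfrac{3\tau}{4}$ matches the claim.

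For the gradient and Hessian, these follow by direct differentiation of the closed-form expression for $g_r(\bm x,t)$. The gradient of the cubic signal term is $3\tau\langle\bm v,\bm x\rangle^2\bm v$; the linear term in $t^2$ contributes $t^2(3\tau\bm v+\bm u)$; the cubic noise term $\bm A(\bm x,\bm x,\bm x)$ differentiates to exactly $\delta(\bm x)$ by definition of $\delta$; and the quartic penalty contributes $-3\tau(\|\bm x\|_2^2\bm x + t^2(n+2)\bm x)$ (the $t^4$ term is a constant and drops out). Differentiating the gradient once more, the $3\tau\langle\bm v,\bm x\rangle^2\bm v$ term yields $6\tau\langle\bm v,\bm x\rangle\bm v\bm v^\top$; the constant $t^2$ term vanishes; $\delta(\bm x)$ contributes the symmetrized tensor slices $P_{sym}[\bm A(\bm x,:,:)+\bm A(:,\bm x,:)+\bm A(:,:,\bm x)]$; and the penalty contributes $-3\tau((\|\bm x\|_2^2 + t^2(n+2))\bm I + 2\bm x\bm x^\top)$. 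Combining these gives the stated Hessian.

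There is no real obstacle here, only bookkeeping. The one spot that calls for care is the Gaussian moment calculation $\E[\|\bm y\|_2^4]=n^2+2n$ (easy to slip a factor) and the symmetrization of the tensor derivative, where the six terms obtained by differentiating $\sum_{ijk}A_{ijk}x_ix_jx_k$ twice must be correctly grouped into the $P_{sym}$ form for the three slices of $\bm A$.
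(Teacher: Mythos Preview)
Your proposal is correct and follows essentially the same approach as the paper: both use linearity of the smoothing operator to reduce to computing $\E_{\bm y\sim\N(\bm 0,\mathbf I_n)}[\|\bm x+t\bm y\|_2^4]$, expand the square, drop the odd-in-$\bm y$ terms, and invoke the same Gaussian moments to obtain $\|\bm x\|_2^4 + 2t^2(n+2)\|\bm x\|_2^2 + t^4(n^2+2n)$. The paper then simply says the gradient and Hessian follow because $g_r$ is a polynomial, whereas you spell out each derivative term explicitly; this extra detail is fine and matches the claimed formulas.
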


\begin{proof}
    Similar to Lemma~\ref{lem:smooth}, we can write the smoothing operation as an expectation. By linearity of expectation we know 
$$
g_r(\bm x,t) = g(\bm x, t)+\mathbb E[\|\bm x + t \bm y\|_2^4]
$$    
    
  We can compute the new terms by the moments of Gaussians:
    \begin{align*}
        \mathbb E[\|\bm x + t \bm y\|_2^4] & = \E[(\|\bm x\|_2^2+2t\inner{\bm x, \bm y} + t^2\|\bm y\|_2^2)^2] \\
        & = \E[\|\bm x\|_2^4+4t^2\inner{\bm x, \bm y}^2 + t^4\|\bm y\|_2^4 + 2t^2\|x\|^2\|y\|^2] \\
        & = \|x\|_2^4 + t^2 (2 n + 4) \|x\|_2^2 + t^4 (n^2+2n) = \|x\|_2^4 + 2 t^2 (n+2) \|x\|_2^2 + t^4 (n^2+2n).
    \end{align*}
    
    Here in the second equation we omitted all the odd order terms for $\bm y$ because those terms have expectation 0. The final step uses the moments of Gaussians.
    
    The equation for $g_r(\bm x,t)$ follows immediately, and since it is a polynomial it is easy to compute its gradient and Hessian.
\end{proof}

Before trying to characterize the local maxima on the homotopy path, let us first prove the following property for the matrix $P_{sym}[\bm A(\bm x, :, :) + \bm A(:, \bm x, :) + \bm A(:, :, \bm x)]$.

\begin{lemma}
Let $H(\bm x) = P_{sym}[\bm A(\bm x, :, :) + \bm A(:, \bm x, :) + \bm A(:, :, \bm x)]$, there exists constants $c^-,c^+$ such that with probability at least $1-\exp(-\Omega(n))$, for any unit vector $\bm x$ we have
$$
c^-\sqrt{n} \le \lambda_{max} \le c^+\sqrt{n}.
$$
\label{lem:hessianprop}
\end{lemma}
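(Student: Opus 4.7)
The plan is to combine a single-point analysis of $H(\bm x)$ as a symmetric Gaussian matrix with $\epsilon$-net arguments on $S^{n-1}$, exploiting the identity
\begin{equation*}
\bm y^\top H(\bm x)\, \bm y \;=\; \bm A(\bm x,\bm y,\bm y) + \bm A(\bm y,\bm x,\bm y) + \bm A(\bm y,\bm y,\bm x),
\end{equation*}
which shows that $\|H(\bm x)\|_{\mathrm{op}}$ is controlled by the injective norm of $\bm A$ restricted to triples of the form $(\bm x,\bm y,\bm y)$ and its cyclic permutations. Note also that $H(\cdot)$ is linear in its argument, so $H(\bm x)-H(\bm x_0)=H(\bm x-\bm x_0)$, which will be crucial for the Lipschitz step.

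For the uniform upper bound I would invoke the classical bound on the injective norm of a standard Gaussian tensor. Each evaluation $\bm A(\bm x,\bm y,\bm z)$ with unit $\bm x,\bm y,\bm z$ is distributed as $\mathcal N(0,1)$, so Gaussian tails together with a product $\epsilon$-net on $S^{n-1}\times S^{n-1}\times S^{n-1}$ of cardinality $(3/\epsilon)^{3n}$ yield $\sup |\bm A(\bm x,\bm y,\bm z)| \le (c^+/3)\sqrt n$ on an event of probability at least $1-e^{-\Omega(n)}$ for a suitable constant $c^+$ and sufficiently small $\epsilon$. The displayed identity then gives $\sup_{\|\bm x\|=1}\|H(\bm x)\|_{\mathrm{op}} \le c^+\sqrt n$ uniformly on the same event.

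For the lower bound, fix a unit vector $\bm x$. A direct computation of the entry variances of $H(\bm x)$, in the same spirit as the variance bookkeeping for $\delta(\bm x^p)$ in the proof of Lemma~\ref{lem:resamplingassump}, shows that each of the $\Theta(n^2)$ off-diagonal entries has variance $\Theta(1)$, so $\mathbb E\|H(\bm x)\|_F^2 = \Theta(n^2)$. Since $\|H(\bm x)\|_F^2$ is a positive-semidefinite quadratic form in the i.i.d.\ Gaussian entries of $\bm A$ whose coefficient matrix has operator norm $O(1)$ and Frobenius norm $\Theta(n)$, Hanson–Wright concentration gives $\|H(\bm x)\|_F \ge \Omega(n)$ with probability $1-e^{-\Omega(n)}$. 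The elementary inequality $\|M\|_{\mathrm{op}} \ge \|M\|_F/\sqrt n$ for $n\times n$ symmetric $M$ then yields $\lambda_{\max}(H(\bm x)) \ge c^-\sqrt n$ at that single $\bm x$. To promote this to a uniform bound I would take an $\epsilon$-net $\mathcal N_\epsilon\subset S^{n-1}$ of cardinality $(3/\epsilon)^n$, union-bound the single-point estimate over $\mathcal N_\epsilon$ (still leaving probability $1-e^{-\Omega(n)}$), and for an arbitrary unit $\bm x$ pick its nearest $\bm x_0\in\mathcal N_\epsilon$ and use linearity together with the already-established uniform upper bound to estimate $\|H(\bm x)-H(\bm x_0)\|_{\mathrm{op}} = \|H(\bm x-\bm x_0)\|_{\mathrm{op}} \le \epsilon\cdot c^+\sqrt n$. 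Choosing $\epsilon$ a small absolute constant makes this perturbation strictly less than $c^-\sqrt n$, completing the uniform lower bound.

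The main obstacle is arranging each concentration step to deliver failure probability $e^{-\Omega(n)}$, so that the union bounds over nets of size $e^{O(n)}$ remain harmless; this is routine here because both the injective-norm bound (Gaussian tails at scale $\sqrt n$) and the Hanson–Wright bound on $\|H(\bm x)\|_F$ (deviations of order $n$ around a mean of order $n^2$) enjoy exactly this rate. A smaller subtlety is that the uniform lower-bound argument relies on the uniform upper bound to control the Lipschitz perturbation $H(\bm x)-H(\bm x_0)$, which is why the two halves must be executed in this order.
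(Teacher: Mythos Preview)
Your upper-bound argument is fine and matches the paper's: both reduce $\|H(\bm x)\|_{\mathrm{op}}$ to the injective norm of the Gaussian tensor $\bm A$ via the identity $\bm y^\top H(\bm x)\,\bm y = \bm A(\bm x,\bm y,\bm y)+\bm A(\bm y,\bm x,\bm y)+\bm A(\bm y,\bm y,\bm x)$ and then invoke the standard $O(\sqrt n)$ bound on that norm.

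The lower-bound argument, however, has a genuine gap. The inequality $\|M\|_{\mathrm{op}}\ge \|M\|_F/\sqrt n$ controls the spectral radius $\max_i|\lambda_i(M)|$, not $\lambda_{\max}(M)$. For an indefinite symmetric matrix these can differ drastically: a large Frobenius norm is perfectly compatible with $\lambda_{\max}$ being small or even negative, provided $\lambda_{\min}$ is very negative. Since $H(\bm x)$ is a centered Gaussian matrix (in particular, $H(\bm x)\stackrel{d}{=}-H(\bm x)$), it is indefinite with overwhelming probability, so you cannot identify $\|H(\bm x)\|_{\mathrm{op}}$ with $\lambda_{\max}(H(\bm x))$. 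Your Hanson--Wright step therefore proves only that some eigenvalue is large in absolute value, not that the \emph{top} eigenvalue is of order $\sqrt n$; the subsequent net argument inherits this defect.

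The paper circumvents this by exploiting the explicit random-matrix structure: for a fixed unit $\bm x$, the projection $P_{\bm x^\perp}H(\bm x)P_{\bm x^\perp}$ is (after rotating so that $\bm x=\bm e_1$) exactly a rescaled $(n-1)\times(n-1)$ Gaussian Orthogonal Ensemble. One then invokes the large-deviation lower tail for the top eigenvalue of GOE, which gives $\Pr[\lambda_{\max}\le \sqrt n/2]\le e^{-\Omega(n^2)}$ --- a much stronger rate than $e^{-\Omega(n)}$ --- and this easily survives the union bound over an $\epsilon$-net. Finally $\lambda_{\max}(H(\bm x))\ge\lambda_{\max}(P_{\bm x^\perp}H(\bm x)P_{\bm x^\perp})$ transfers the bound. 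If you want to stay closer to your approach, a correct alternative is to view $\lambda_{\max}(H(\bm x))=\sup_{\|\bm y\|=1}\bm y^\top H(\bm x)\bm y$ as a supremum of a centered Gaussian process, lower-bound its expectation by $\Omega(\sqrt n)$ (e.g.\ via comparison inequalities), and then apply Borell--TIS concentration to get the $e^{-\Omega(n)}$ pointwise tail needed for the net; the Frobenius-norm route does not do this.
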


\begin{proof}
For the upperbound, we use the bound on tensor spectral norm. \citet{tomioka2014spectral} proved that for a random Gaussian tensor $\bm A$, with probability at least $1-\exp(-\Omega(n))$ we know for any vectors $\bm x, \bm y, \bm z$, $|\bm A(\bm x, \bm y, \bm z)| \le O(\sqrt{n})$. Therefore for any unit vector $\bm y$, $|\bm y^\top H(\bm x) \bm y| = |\bm A(\bm x, \bm y, \bm y) + \bm A(\bm y, \bm x, \bm y) + \bm A(\bm y, \bm y, \bm x)| \le O(\sqrt{n})$. 

For the lowerbound, we use the distribution of the largest eigenvalue of Gaussian Orthogonal Ensemble. Suppose $\bm M$ is a random matrix whose entries are i.i.d. standard Gaussians, then the symmetric matrix $\frac{\bm M + \bm M^\top}{\sqrt{2}}$ is distributed according to the Gaussian Orthogonal Ensemble. Let $P_{\bm x^\perp}$ be the projection operator to the orthogonal subspace of $\bm x$, then the key observation is $P_{\bm x^\perp} H(\bm x) P_{\bm x^\perp}$ is (up to a constant scaling) distributed as a Gaussian Orthogonal Ensemble of dimension $(n-1)\times (n-1)$. To see this, the easiest way is to observe that Gaussians are invariant under rotation, so we can take $\bm x = \bm e_1$. Now for $i,j = \{2,3,...,n\}$, $[P_{\bm x^\perp} H(\bm x) P_{\bm x^\perp}]_{i,j} = \bm A_{1ij}+\bm A_{1ji}+\bm A_{i1j}+\bm A_{j1i}+\bm A_{ij1}+\bm A_{ji1}$. The random entries $1ij,i1j,ij1$ do not overlap because $i,j\ne 1$. Therefore the matrix is the sum of three Gaussian Orthogonal Ensembles, and by property of Gaussians that is equivalent to $\sqrt{3}$ times a Gaussian Orthogonal Ensemble. Now, using the result in \cite{ledoux2007deviation}, we know for any fixed $\bm x$, $\Pr[\lambda_{max}(P_{\bm x^\perp} H(\bm x) P_{\bm x^\perp}) \le \sqrt{n}/2] \le 1-\exp(-\Omega(n^2))$. By standard covering argument (the $\epsilon$-net for $n$ dimensional vectors have size $(n/\epsilon)^{O(n)}$ which is much smaller than $\exp(-\Omega(n^2))$), we know with high probability for all $\bm x$ $\lambda_{max}(P_{\bm x^\perp} H(\bm x) P_{\bm x^\perp}) \ge \sqrt{n}/2$. The lemma follows immediately because $\lambda_{max}(H(\bm x)) \ge \lambda_{max}(P_{\bm x^\perp} H(\bm x) P_{\bm x^\perp})$.
\end{proof}

Now we are ready to prove Theorem~\ref{thm:homotopypath}. To capture the properties of the homotopy path, we break it into three lemmas.

    \begin{lemma}
        When $\tau = n^{3/4}\log n$, $t \ge Cn^{-1}$ for large enough constant $C$, there exists a local maximizer $\bm x^t$ of $g_r(\bm x, t)$ such that $\|\bm x^t - \bm x^\dag\|_2 = o(1) \|\bm x^\dag\|_2$.
    \end{lemma}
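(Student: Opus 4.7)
The plan is to exploit the exact structural cancellation that occurs at $\bm x^\dag = (3\tau\bm v + \bm u)/(3\tau(n+2))$. Since by construction $3\tau(n+2)\bm x^\dag = 3\tau\bm v + \bm u$, the two would-be ``large'' terms $t^2(3\tau\bm v+\bm u)$ and $-3\tau t^2(n+2)\bm x^\dag$ in formula~(\ref{eq:gradient}) cancel identically, leaving
\[
\nabla g_r(\bm x^\dag,t) \;=\; 3\tau\inner{\bm v,\bm x^\dag}^2\bm v + \delta(\bm x^\dag) - 3\tau\|\bm x^\dag\|_2^2\,\bm x^\dag.
\]
Using Conjecture~\ref{assump2} to bound $\|\bm u\|_2 = \Theta(n)$ and $|\inner{\bm u,\bm v}| = O(\sqrt{n\log n})$, I would first show $\|\bm x^\dag\|_2 = \Theta(1/\tau)$ and $\inner{\bm v,\bm x^\dag} = \Theta(1/n)$ (the $3\tau$ signal dominates in the numerator since $\tau = n^{3/4}\log n \gg \sqrt{n\log n}$). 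Plugging in, each of the three residual terms has norm $O(\tau/n^2)$, $O(\sqrt{n}\,\|\bm x^\dag\|_2^2)$, and $O(\tau\|\bm x^\dag\|_2^3)$, respectively, so $\|\nabla g_r(\bm x^\dag,t)\|_2 = O(n^{-1}\log^{-2} n)$.

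Next, I would estimate the curvature at $\bm x^\dag$ using~(\ref{eq:hessian}). The dominant contribution is $-3\tau t^2(n+2)\bm I$, whose magnitude is at least $\Theta(C^2\tau/n) = \Theta(C^2 n^{-1/4}\log n)$ when $t\ge Cn^{-1}$. The remaining signal pieces $-3\tau\|\bm x^\dag\|_2^2\bm I + 6\tau\inner{\bm v,\bm x^\dag}\bm v\bm v^\top - 6\tau\bm x^\dag(\bm x^\dag)^\top$ have operator norm at most $O(\tau/n) = O(n^{-1/4}\log n)$, and the noise piece $P_{sym}[\bm A(\bm x^\dag,:,:)+\bm A(:,\bm x^\dag,:)+\bm A(:,:,\bm x^\dag)]$ has norm at most $O(\sqrt{n})\|\bm x^\dag\|_2 = O(n^{-1/4}/\log n)$ by Lemma~\ref{lem:hessianprop}. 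For $C$ a large enough constant, the main negative term wins, giving $\nabla^2 g_r(\bm x^\dag,t) \preceq -\lambda\bm I$ with $\lambda = \Theta(C^2 n^{-1/4}\log n)$.

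To convert these pointwise bounds into existence of a nearby critical point, I would run a Newton-type fixed-point argument. Define $\Phi(\bm x) = \bm x - H^{-1}\nabla g_r(\bm x,t)$ with $H = \nabla^2 g_r(\bm x^\dag,t)$. On a ball $B_r(\bm x^\dag)$ of radius $r = O(\|\nabla g_r(\bm x^\dag,t)\|/\lambda) = O(n^{-3/4}/(C^2\log^3 n))$, the Hessian varies by at most $O(\tau r + \sqrt{n}\,r)$, which is $o(\lambda)$ for our $r$; hence the Hessian remains $\preceq -\lambda/2\bm I$ throughout $B_r$ and $\Phi$ is a strict contraction there. Its unique fixed point $\bm x^t$ is a critical point of $g_r(\cdot,t)$ with negative-definite Hessian, i.e.\ a local maximizer, and by construction $\|\bm x^t - \bm x^\dag\|_2 \le 2\|\nabla g_r(\bm x^\dag,t)\|/\lambda = O(n^{-3/4}/(C^2\log^3 n))$. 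Since $\|\bm x^\dag\|_2 = \Theta(n^{-3/4}/\log n)$, the ratio is $O(1/(C^2\log^2 n)) = o(1)$, establishing the claim.

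The main obstacle is controlling the noise-dependent Hessian \emph{uniformly} over the ball $B_r(\bm x^\dag)$ rather than at the single point $\bm x^\dag$. This needs a Lipschitz-type extension of Lemma~\ref{lem:hessianprop}, bounding $\|P_{sym}[\bm A(\Delta\bm x,:,:)+\cdots]\|_{op}$ in terms of $\|\Delta\bm x\|_2\sqrt{n}$; a standard $\epsilon$-net argument using the tail bound $|\bm A(\bm y,\bm z,\bm w)| \le O(\sqrt{n})$ for all unit vectors $\bm y,\bm z,\bm w$ from~\citet{tomioka2014spectral} (already invoked in the proof of Lemma~\ref{lem:hessianprop}) should suffice, but one must check that the radius $r$ is small enough for this perturbation to remain dominated by $\lambda$, which is where the assumption $t \ge Cn^{-1}$ with $C$ sufficiently large gets used.
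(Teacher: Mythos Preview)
Your gradient estimate at $\bm x^\dag$ and your identification of $-3\tau t^2(n+2)\bm I$ as the dominant Hessian piece are both correct. The gap is in the contraction step. You assert that on the ball $B_r(\bm x^\dag)$ with $r=\Theta(n^{-3/4}/(C^2\log^3 n))$ the Hessian varies by $O(\tau r+\sqrt{n}\,r)=o(\lambda)$. But at the threshold $t=Cn^{-1}$ one has $\lambda=\Theta(C^2 n^{-1/4}\log n)$ while $\tau r=\Theta(1/(C^2\log^2 n))$, so
\[
\frac{\tau r}{\lambda}\;=\;\Theta\!\left(\frac{n^{1/4}}{C^4\log^3 n}\right)\;\longrightarrow\;\infty,
\]
not $o(1)$. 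The culprit is the rank-one term $6\tau\inner{\bm v,\bm x}\bm v\bm v^\top$ in~(\ref{eq:hessian}): moving by $r$ in an arbitrary direction can raise $\inner{\bm v,\bm x}$ from $\Theta(1/n)$ to $\Theta(r)=\Theta(n^{-3/4}/\log^3 n)$, and then this positive contribution already exceeds $\lambda$. So strong concavity, and hence the contraction, fails on an isotropic ball of that radius.

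The paper's proof avoids exactly this trap by working not on a Euclidean ball but on the slab $\mathcal{B}=\{\|\bm x-\bm x^\dag\|\le\tfrac12\|\bm x^\dag\|\}\cap\{\inner{\bm v,\bm x}\le 10/n\}$, which pins down the $\bm v$-component separately. On $\mathcal{B}$ the dangerous term is only $6\tau\cdot 10/n$, which \emph{is} dominated by $3\tau t^2(n+2)$ for $C$ large. Existence of a maximizer inside $\mathcal{B}$ is then obtained not by a one-shot Newton argument but by continuation: start from $t_0=n^{10}$, step $t$ down, and at each stage use the stationary-point identity
\[
\bm x^{t}=\frac{3\tau\inner{\bm v,\bm x^t}^2\bm v+t^2(3\tau\bm v+\bm u)+\delta(\bm x^t)}{3\tau(\|\bm x^t\|_2^2+t^2(n+2))}
\]
to show $\inner{\bm v,\bm x^t}\le 2/n$ (not merely $O(r)$). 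Your fixed-point idea can be repaired along the same lines---use Conjecture~\ref{assump2}'s bound $|\inner{\bm v,\delta(\bm x)}|=O(\sqrt{\log n})\|\bm x\|^2$ to show the $\bm v$-component of each Newton step is only $O(1/n)$, and run the contraction on an anisotropic region that constrains $\inner{\bm v,\bm x}$ to stay $O(1/n)$---but as written the isotropic argument does not close.
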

    \begin{proof}
Recall according to the objective we chose, the maximizer at infinity $\bm x^\dag$ can be computed explicitly and we know $\bm x^\dag = \frac{3\tau \bm v+\bm u}{3\tau(n+2)}$. By Conjecture~\ref{assump2}, we can estimate the norm and correlation with $\bm v$:
$$
\|\bm x^\dag\|_2 = \Theta(n^{-3/4}\log^{-1} n), \quad \inner{\bm x^\dag, \bm v} = (1\pm o(1))/n.
$$

We shall first prove in the region $\mathcal{B} = \{\bm x: \|\bm x - \bm x^\dag\|_2 \le \frac 1 2 \|\bm x^\dag\|_2, \inner{\bm x, \bm v} \le 10/n\}$, the Hessian of the objective function is always negative definite. By standard analysis in convex optimization, this in particular implies two things: 1. There can be at most one local maximizer in this region; 2. If the function is $\mu$-strongly-concave ($\nabla^2 g(\bm x,t)\succeq -\mu I$), and a point $\bm x$ has $\|\nabla g(\bm x,t)\| \le \epsilon$, then there is a local maximizer within $\epsilon/\mu$. This particular implies if there is a point $\bm x$ in the interior $\mathcal{B}' = \{\bm x: \|\bm x - \bm x^\dag\|_2 \le \frac 1 4 \|\bm x^\dag\|_2, \inner{\bm x, \bm v} \le 2/n\}$ such that $\nabla g(\bm x,t)$ is very small, then there must exist a local maximizer in $\mathcal{B}$.

By Lemma~\ref{lem:smoothalternative}, we know the Hessian is equal to:
$$
\nabla^2 g_r(\bm x, t) =  -3 \tau ((\|\bm x\|_2^2 + t^2 (n+2)) \bm I - 2 \langle \bm v, \bm x \rangle \bm v \bm v^T + 2 \bm x \bm x^T) + P_{sym}[\bm A(\bm x, :, :) + \bm A(:, \bm x, :) + \bm A(:, :, \bm x)].
$$

In the region we are interested in, since $\inner{\bm v,\bm x} \le 10/n \le t^2(n+2)/2$ when $C$ is large enough, we have the first term
$$
-3\tau((\|\bm x\|_2^2 + t^2 (n+2)) \bm I - 2 \langle \bm v, \bm x \rangle \bm v \bm v^T + 2 \bm x \bm x^T) \preceq -1.5\tau t^2(n+2) \bm I.
$$

On the other hand, for the second part we know by Lemma~\ref{lem:hessianprop} $$P_{sym}[\bm A(\bm x, :, :) + \bm A(:, \bm x, :) + \bm A(:, :, \bm x)] \preceq \frac{1}{2}c^+\sqrt{n}\|\bm x^\dag\|_2 I.$$

By our choice of parameters, $ \tau t^2(n+2) = \Omega(n^{-1/4}\log n)$, and $\sqrt{n}\|\bm x^\dag\|_2 = \Theta(n^{-1/4}\log^{-1}n)$, therefore the first term dominates and we know the Hessian $\nabla^2 g_r(\bm x, t) \preceq -\tau t^2(n+2) \bm I$.

%

When $t$ is a large polynomial of $n$ (e.g. $t = n^{10}$), simple calculation shows the optima $\bm x^t$ is very close to $\bm x^\dag$, and we have $\bm x^t \in \mathcal{B'}$. When $C/n \le t < n^{10}$, let $t_0 = n^{10}$, select $t_1,t_2,...,t_q$ such that $t_q = t$, and $t_i,t_{i+1}$ are close enough that if $\bm x^{t_i} \in \mathcal{B'}$, by strong concavity we can get $\bm x^{t_{i+1}}$ exists and $\bm x^{t_{i+1}} \in \mathcal{B}$. We will prove $\bm x^{t_i}\in \mathcal{B}'$ by induction. The base case is already done.

Suppose $\bm x^{t_{i-1}}\in \mathcal{B}'$, we know that $\bm x^{t_i} \in \mathcal{B}$. We will use the first order condition to refine our knowledge about $\bm x^{t_i}$ and show $\bm x^{t_i} \in \mathcal{B}'$. From (\ref{eq:gradient}), we can derive the expression of stationary points,
        \begin{equation}\label{eq:stationary}
            \bm x^{t_i} = \frac{3 \tau \langle \bm v, \bm x^{t_i} \rangle^2 \bm v + t^2 (3 \tau \bm v + \bm u) + \delta(\bm x^{t_i})}{3 \tau (\|\bm x^{t_i}\|_2^2 + t^2 (n+2))}
        \end{equation}
        
Note that $\bm x^{t_i}$ is a stationary point on homotopy path, so it should satisfy Conjecture~\ref{assump2}. We also know it is in $\mathcal{B}$. 
        
        Since $t \ge Cn^{-1}$, $\|\tau \langle \bm v, \bm x^{t_i} \rangle^2 \bm v\|_2 = \Theta(n^{-\frac 5 4} \log n)$, $\|t^2 (3 \tau \bm v + \bm u)\|_2 \ge \Omega(n^{-1})$ and $\|\delta(\bm x^{t_i})\|_2 = \Theta(n^{-1}\log^{-2}n)$. Therefore, if we let $\bm w = 3 \tau \langle \bm v, \bm x^{t_i} \rangle^2 \bm v + \delta(\bm x^{t_i})$ we know  $\|\bm w\|_2 \le o(1) \|t^2 (3 \tau \bm v + \bm u)\|_2$. The middle term dominates the numerator. Moreover, $t^2 (n+2) \geq \Omega(n^{-1})$ and $\|\bm x^{t_i}\|_2^2 = \Theta(n^{-3 / 2} \log^{-2} n)$, and thus, $t^2 n$ dominates the denominator. Now we have 
        \begin{align*}
            \bm x^{t_i} &~= \frac{3 \tau \langle \bm v, \bm x^{t_i} \rangle^2 \bm v + t^2 (3 \tau \bm v + \bm u) + \delta(\bm x^{t_i})}{3 \tau (\|\bm x^{t_i}\|_2^2 + t^2 (n+2))} \\&~= \frac{t^2 (3 \tau \bm v + \bm u)+\bm w}{3 \tau t^2 (n+2)(1+\epsilon)} \\
            &~= \frac{3\tau \bm v+\bm u}{3\tau (n+2)} \cdot \frac{1}{1+\epsilon} +  \frac{\bm w}{3\tau t^2(n+2)(1+\epsilon)}. \\
            &~= \bm x^\dag + \bm x^\dag (\frac{1}{1+\epsilon} - 1) + \frac{\bm w}{3\tau t^2(n+2)(1+\epsilon)}.
        \end{align*}
        Since $\epsilon = o(1)$ and $\|w\|_2 \le o(1) \|t^2 (3 \tau \bm v + \bm u)\|_2$, we know the two additional term has norm $o(1)\|\bm x^\dag\|_2$, therefore $\bm x^{t_i}$ is very close to $\bm x^\dag$.
        
        Next we bound the correlation with $\bm v$. We know $\inner{\bm v, \bm x^{t_i}} \le 10/n$ because $\bm x^{t_i} \in \mathcal{B}$. Also, the correlation between $|\inner{\bm u,\bm v}| = O(\sqrt{n \log n})$ and $|\inner{\delta(\bm x^{t_i}), \bm v}| = O(n^{-3/2} \log^{-3/2} n) $ are negligible due to Conjecture~\ref{assump2}, therefore we have
        $$
        \inner{\bm x^{t_i}, \bm v} \le \frac{3\tau(10/n)^2 + 3\tau t^2}{3\tau t^2(n+2)(1+\epsilon)} \approx \frac{10^2 + C^2}{C^2 n} \le 2/n. 
        $$
        
        Here the inequality holds as long as $C$ is large enough. Therefore $x^{t_i} \in \mathcal{B}'$ and we finish the induction.

    \end{proof}    

Next lemma shows what happens after the phase transition, when $t$ is small.

    \begin{lemma}
        When $\tau = n^{3/4}\log n$, $t = n^{-1} \varepsilon(n)$, where $\varepsilon(n) = O(\log^{-2} n)$, the local maximizers (excluding saddle points) $\bm x^t$ of $g_r(\bm x, t)$ are of the following types:
        \begin{itemize}
            \item good maximizers: $\|\bm x^t\|_2^2 = \Theta(1)$ and $\langle \bm v, \bm x^t \rangle = \Theta(1)$;
            \item bad maximizers: $\|\bm x^t\|_2^2 = \Theta(n^{-\frac 1 2} \log^{-2} n)$ and $\langle \bm v, \bm x^t \rangle \leq O(n^{-\frac 1 2} \log^{-2} n)$;
        \end{itemize}
    \end{lemma}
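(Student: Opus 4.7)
My plan is to start from the first-order condition $\nabla g_r(\bm x^t,t)=0$, which by \eqref{eq:gradient} rearranges to
\begin{equation}
    3\tau\bigl(\|\bm x^t\|_2^2+t^2(n+2)\bigr)\bm x^t \;=\; 3\tau\langle \bm v, \bm x^t\rangle^2\,\bm v + t^2\bigl(3\tau\bm v+\bm u\bigr) + \delta(\bm x^t). \label{eq:stat-prop}
\end{equation}
Write $r=\|\bm x^t\|_2$ and $a=\langle \bm v, \bm x^t\rangle$, and decompose $\bm x^t=a\bm v+\bm x^t_\perp$, $\delta(\bm x^t)=\delta_\parallel\bm v+\bm w_\perp$ with $\delta_\parallel=\langle \delta(\bm x^t),\bm v\rangle$. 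Conjecture~\ref{assump2} gives $\|\delta(\bm x^t)\|_2=\Theta(\sqrt n)\,r^2$, $|\delta_\parallel|=O(\sqrt{\log n})\,r^2$, hence $\|\bm w_\perp\|_2=\Theta(\sqrt n)\,r^2$. Because $t=n^{-1}\varepsilon(n)$ with $\varepsilon(n)=O(\log^{-2}n)$, the two $t^2$ contributions in \eqref{eq:stat-prop} have combined norm at most $O(1/(n^{1/4}\log^3 n))$; this is negligible compared to the other terms in either scale regime and will be carried as error throughout.

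The crucial step is to project \eqref{eq:stat-prop} onto the subspace orthogonal to $\bm v$: this gives $3\tau r^2\bm x^t_\perp\approx \bm w_\perp$, so
\[
    \|\bm x^t_\perp\|_2 \;=\; \Theta\bigl(\sqrt n/\tau\bigr) \;=\; \Theta\bigl(n^{-1/4}\log^{-1}n\bigr).
\]
Remarkably, the perpendicular component has the \emph{same} size at every stationary point, independent of $r$. This forces $r\ge \Omega(n^{-1/4}\log^{-1}n)$ everywhere and retroactively justifies dropping the $t^2$ terms. Projecting \eqref{eq:stat-prop} onto $\bm v$ next yields $3\tau r^2 a\approx 3\tau a^2+\delta_\parallel$, i.e.\ the quadratic
\[
    a^2 - r^2 a + O\bigl(\sqrt{\log n}\,r^2/\tau\bigr) \;=\; 0,
\]
whose discriminant is dominated by $r^4$, so the two roots are $a\approx r^2$ (the ``large'' root) and $a=O(\sqrt{\log n}/\tau)=O(n^{-3/4}\log^{-1/2}n)$ (the ``small'' root).

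Combining with the Pythagorean identity $r^2=a^2+\|\bm x^t_\perp\|_2^2=a^2+\Theta(n^{-1/2}\log^{-2} n)$, the large-root branch becomes $r^2=(r^2)^2+\Theta(n^{-1/2}\log^{-2} n)$, a quadratic in $r^2$ with exactly two solutions: $r^2\approx 1$ with $a\approx 1$, the good-maximizer scale; and $r^2\approx\Theta(n^{-1/2}\log^{-2}n)$ with $a\approx r^2$, a bad-maximizer branch. The small-root branch forces $r^2\approx\Theta(n^{-1/2}\log^{-2}n)$ with $|a|=O(n^{-3/4}\log^{-1/2}n)$, the other bad-maximizer branch. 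In both bad branches one has $|a|\le O(n^{-1/2}\log^{-2}n)$, matching the lemma's upper bound.

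Finally, restricting to genuine local maxima requires a second-order check via \eqref{eq:hessian}. At the good scale, the damping $-3\tau r^2\bm I = -3\tau\bm I$ dominates the noise-Hessian term (operator norm $O(\sqrt n\,r)=O(\sqrt n)$ by Lemma~\ref{lem:hessianprop}) because $\tau=n^{3/4}\log n\gg\sqrt n$, so every such stationary point is a strict local maximum. At the bad scale, the damping $-\Theta(\tau r^2)\bm I=-\Theta(n^{1/4}\log^{-1}n)\bm I$ and the noise-Hessian operator norm $\Theta(\sqrt n\,r)=\Theta(n^{1/4}\log^{-1}n)$ are of the same order, so the ``excluding saddle points'' clause is essential here: bad-scale stationary points whose Hessian has a positive eigenvalue are discarded, and the remaining local maxima automatically obey the stated $(r,a)$ scalings because these were already forced by \eqref{eq:stat-prop} alone. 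The main obstacle is making the bootstrap rigorous at the bad scale, where the $t^2$ perturbation is not overwhelmingly smaller than the damping; the fixed perpendicular-scale estimate from the key step above is what keeps the self-consistency argument intact.
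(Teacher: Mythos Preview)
Your decomposition $\bm x^t=a\bm v+\bm x^t_\perp$ and the perpendicular projection of \eqref{eq:stat-prop} is a genuinely different route from the paper, which instead takes inner products with $\bm v$ and with $\bm x^t$ to derive scalar equations for $a$ and $r^2$ separately, and then case-splits on whether $r^2\gtrless t^2n$. Your observation that $\|\bm x^t_\perp\|_2=\Theta(\sqrt n/\tau)$ is independent of $r$ is elegant and, once established, makes the rest of the classification shorter than the paper's.

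However, the bootstrap as you wrote it is circular. You assume the $t^2$ contributions are negligible, deduce $\|\bm x^t_\perp\|_2=\Theta(n^{-1/4}\log^{-1}n)$ and hence $r\ge\Omega(n^{-1/4}\log^{-1}n)$, and then say this ``retroactively justifies'' the assumption. That only shows the branch $r^2\gg t^2n$ is \emph{self-consistent}; it does not prove there is no local maximizer with $r^2\lesssim t^2n$, where the $t^2(n{+}2)$ term in the denominator and the $t^2(3\tau\bm v+\bm u)$ term on the right would both be live. The paper closes exactly this gap by invoking the second-order necessary condition at the \emph{start} of the argument: since $\nabla^2 g_r(\bm x^t,t)\preceq 0$, the damping $3\tau(r^2+t^2n)$ must dominate $\lambda_{\max}$ of the noise Hessian, which by Lemma~\ref{lem:hessianprop} is $\ge c^-\sqrt n\,r$. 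In the regime $r^2\ge t^2n$ this immediately gives $r\ge\Omega(\sqrt n/\tau)$, the same lower bound you obtain, but now non-circularly; and the paper then separately shows the regime $r^2<t^2n$ leads to a contradiction.

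There are two clean fixes. Either (i) import the paper's Hessian step up front: from $\nabla^2 g_r\preceq 0$ and Lemma~\ref{lem:hessianprop} derive $r\ge\Omega(n^{-1/4}\log^{-1}n)$ before you ever project, which legitimises dropping the $t^2$ terms and lets the rest of your argument go through unchanged. Or (ii) keep your first-order-only approach but carry the $t^2$ terms honestly through the perpendicular projection: write $3\tau(r^2+t^2n)\|\bm x^t_\perp\|_2=\|\bm w_\perp+t^2\bm u_\perp\|_2$ and combine with $\|\bm x^t_\perp\|_2\le r$ to rule out $r^2\lesssim t^2n$ directly (the resulting inequalities do force a contradiction when $\varepsilon(n)=O(\log^{-2}n)$, so this route also works, but it needs to be written out rather than asserted by bootstrap). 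Your final paragraph uses the Hessian only to discard saddles among the stationary points you already found; that is not where the second-order information is needed.
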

    
    \begin{proof}
    
Now we use the second order necessary conditions. For all local maximizer, their gradient should be $\bm 0$ and their Hessian should be negative semidefinite.
    
        First, from (\ref{eq:stationary}), we can compute the inner product between $\bm v$ and $\bm x^t$:
        \[
            \langle \bm v, \bm x^t \rangle = \frac{3 \tau \langle \bm v, \bm x^t \rangle^2 + 3 \tau t^2 + t^2 \langle \bm u, \bm v \rangle + \langle \delta(\bm x^t), \bm v \rangle}{3 \tau (\|\bm x^t\|_2^2 + t^2 (n + 2))}
        \]
Note that $\bm x^t$ should satisfy the conditions in Conjecture~\ref{assump2}, in particular $|\inner{\delta(\bm x^t, \bm v}| \le O(1)\|\bm x^t\|_2^2$. Also, by Conjecture~\ref{assump2} we know $t^2 \langle \bm u, \bm v \rangle = t^2 O(\sqrt{n \log n}) \ll 3 \tau t^2$, so it is negligible in scale analysis. Therefore,
        \begin{equation}\label{eq:inner}
            \langle \bm v, \bm x^t \rangle = \frac{3 \tau \langle \bm v, \bm x^t \rangle^2 + (3 \pm o(1)) \tau t^2 \pm O(\sqrt{\log n}) \|\bm x^t\|_2^2}{3 \tau (\|\bm x^t\|_2^2 + t^2 n)}
        \end{equation}
        From (\ref{eq:stationary}), we can also compute the square of the norm of $\bm x$:
        \begin{align*}
            \|\bm x^t\|_2^2 = \frac{9 \tau^2 \langle \bm v, \bm x^t \rangle^4 + t^4 \|3 \tau \bm v + \bm u\|_2^2 + \|\delta(\bm x^t)\|_2^2 + \eta(\bm x^t)}{9 \tau^2 (\|\bm x^t\|_2^2 + t^2 (n + 2))^2} 
        \end{align*}    
        where the cross term $\eta(\bm x^t)$
        \[
            \eta(\bm x^t) = 6 \tau \langle \bm v, \bm x^t \rangle^2 \langle \bm v, \delta(\bm x^t) \rangle + 6 \tau t^2 \langle \bm v, \bm x^t \rangle^2 (3 \tau + \langle \bm v, \bm u \rangle) + 2 t^2 \langle 3 \tau \bm v + \bm u, \delta(\bm x^t) \rangle
        \]
        is negligible compared to the other terms. We again have the bound on $\|\delta(\bm x^t)\|_2$ from Conjecture~\ref{assump2} and therefore
        \begin{equation}\label{eq:norm}
            \|\bm x^t\|_2^2 = \frac{9 \tau^2 \langle \bm v, \bm x^t \rangle^4 + t^4 \Theta(n^2) + \Theta(n \log n)  \|\bm x^t\|_2^4}{9 \tau^2 (\|\bm x^t\|_2^2 + t^2 n)^2} 
        \end{equation}      
        We proceed the proof via a case analysis on the relative order between $\|\bm x^t\|_2^2$ and $t^2 n$.
        
        \paragraph{Case 1: $\|\bm x^t\|_2^2 \ge t^2 n$:} \quad \\
            
            First, recall that the Hessian at $\bm x^t$ must be a negative semidefinite. Therefore, $\tau \|\bm x^t\|_2^2$ must be larger than $\lambda_{max}(P_{sym}(\bm A(\bm x^t, :, :) + \bm A(:, \bm x^t, :) + \bm A(:, :, \bm x^t)))$. By Lemma~\ref{lem:hessianprop} we have $\tau \|\bm x^t\|_2^2 > \Theta(\sqrt n) \|\bm x^t\|_2$, which implies $\|\bm x^t\|_2 = \Omega(n^{-\frac 1 4} \log^{-1} n)$. As a result, $\Theta(n)  \|\bm x^t\|_2^4$ dominates $t^4 \Theta(n^2)$ in the nominator of (\ref{eq:norm}). Henceforth, we have
            \[
                \|\bm x^t\|_2^2 = \Theta(1)\frac{\langle \bm v, \bm x^t \rangle^4}{\|\bm x^t\|_2^4} + \Theta(n^{-\frac 1 2} \log^{-1} n)
            \]
            
We know $\|\bm x^t\|_2^2$ must be within constant factor to either $\frac{\langle \bm v, \bm x^t \rangle^4}{\|\bm x^t\|_2^4}$ or $n^{-\frac 1 2} \log^{-1} n$. These two cases are discussed below
            
            (1) If $\|\bm x^t\|_2^2 = \Theta(n^{-\frac 1 2} \log^{-1} n)$, plug it into (\ref{eq:inner}), we have
            \[
                \langle \bm v, \bm x^t \rangle = \Theta(1)\frac{\langle \bm v, \bm x^t \rangle^2}{\|\bm x^t\|_2^2} + \Theta(1)\frac{t^2}{\|\bm x^t\|_2^2} \pm \frac{O(\sqrt{\log n})}{\tau}
            \]
            Therefore, the largest possible $\langle \bm v, \bm x^t \rangle$ is $\Theta(n^{-\frac 1 2} \log^{-1} n)$. 
            
            (2) If $\|\bm x^t\|_2^3 = \Theta(1)\langle \bm v, \bm x^t \rangle^2$, plug it into (\ref{eq:inner}):
            \[
                \langle \bm v, \bm x^t \rangle = \Theta(1)\|\bm x^t\|_2^2 + \Theta(1)\frac{t^2}{\|\bm x^t\|_2^2} \pm \frac{O(\sqrt{\log n})}{\tau} = \Theta(1)\|\bm x^t\|_2^2
            \]
            Thus, we can conclude both $\|\bm x^t\|_2$ and $\langle \bm v, \bm x^t \rangle$ are bounded by absolute constants.
            
        \paragraph{Case 2: $t^2 n\ge \|\bm x^t\|_2^2$} \quad \\

We will show this case cannot happen. Recall that the Hessian at $\bm x^t$ must be a negative semidefinite. Therefore, $\tau t^2 n$ must be larger than $\lambda_{max}(P_{sym}(\bm A(\bm x^t, :, :) + \bm A(:, \bm x^t, :) + \bm A(:, :, \bm x^t)))$. By Lemma~\ref{lem:hessianprop}, we have $\tau t^2 n > \Theta(\sqrt n) \|\bm x^t\|_2$, which implies $\|\bm x^t\|_2 = t^2 \tau O(n^{1/2})$. As a result, $3 \tau t^2$ dominates $O(\sqrt{\log n})\|\bm x^t\|_2^2$ in the nominator of (\ref{eq:inner}). Henceforth, we have
            \[
                \langle \bm v, \bm x^t \rangle = C_1\frac{\langle \bm v, \bm x^t \rangle^2}{t^2 n} + C_2\frac 1 n,
            \]
            where both $C_1,C_2$ are constants within $1\pm 2/3$.
            Notice that if $\frac{\langle \bm v, \bm x^t \rangle^2}{t^2 n} \ge n^{-1}$, then $\langle \bm v, \bm x^t \rangle = \Theta(t^2 n)$, implying $\frac{\langle \bm v, \bm x^t \rangle^2}{t^2 n} = \Theta(t^2 n) =\Theta( \frac{\varepsilon^2(n)}{n}) \ll n^{-1}$. This is a contradiction, so we know, $\langle \bm v, \bm x^t \rangle$ can only be $\Theta(n^{-1})$. 
            
            Moreover, notice that $t^4 \Theta(n^2) \gg \Theta(n \log n) \|\bm x^t\|_2^4 = t^8 \tau^4 O(n^3 \log n)$.
             Therefore, from (\ref{eq:norm}),
            \[
                \|\bm x^t\|_2^2 = \frac {1}{t^4} \Theta(n^{-6}) + \Theta(\frac 1 {\tau^2}) \Rightarrow \|\bm x^t\|_2 = \Theta(n^{-\frac 3 4} \log^{-1} n)
            \]
            This contradicts with $\|\bm x^t\|_2 = t^2 \tau O(\sqrt n) = O(n^{-\frac 3 4} \log^{-3} n)$. There cannot be a local maximizer in this case.
    \end{proof}

Finally we show that the Hessian is correlated with the correct vector $\bm v$ near the threshold.

    \begin{lemma}
        For $t = n^{-1} \varepsilon(n)$, where $\varepsilon(n) = O(\log^{-2} n)$, let $\bm b$ be the top eigenvector of $\nabla^2 (g_r(\bm x^\dag, t))$, we know $\sin\theta(\bm b, \bm v) \le 1/\log^2 n$.
    \end{lemma}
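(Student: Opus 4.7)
The plan is to evaluate the Hessian formula from Lemma~\ref{lem:smoothalternative} at $\bm x=\bm x^\dag$ and show that, up to a multiple of the identity and a small perturbation, it is a rank-one matrix in the direction $\bm v$, then appeal to Davis--Kahan. First I would plug in the explicit form $\bm x^\dag=(3\tau\bm v+\bm u)/(3\tau(n+2))$ together with the bounds from Conjecture~\ref{assump2} to obtain $\|\bm x^\dag\|_2=\Theta(n^{-3/4}\log^{-1}n)$ and $\langle\bm v,\bm x^\dag\rangle=(1\pm o(1))/n$, and then estimate the magnitude of each term appearing in $\nabla^2 g_r(\bm x^\dag,t)$.

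The key size computations (for $\tau=n^{3/4}\log n$ and $t=n^{-1}\varepsilon(n)$ with $\varepsilon(n)=O(\log^{-2}n)$) are: the scalar multiplying the identity is $3\tau(\|\bm x^\dag\|_2^2+t^2(n+2))=\Theta(\tau t^2 n)=\Theta(n^{-1/4}\log^{-3}n)$ (since $t^2 n\gg\|\bm x^\dag\|_2^2$); the rank-one term $6\tau\langle\bm v,\bm x^\dag\rangle\,\bm v\bm v^\top$ has coefficient $\Theta(\tau/n)=\Theta(n^{-1/4}\log n)$; the unwanted rank-one term $-6\tau\,\bm x^\dag(\bm x^\dag)^\top$ has operator norm $\Theta(\tau\|\bm x^\dag\|_2^2)=\Theta(n^{-3/4}\log^{-1}n)$; and the noise term $P_{sym}[\bm A(\bm x^\dag,:,:)+\bm A(:,\bm x^\dag,:)+\bm A(:,:,\bm x^\dag)]$ has operator norm $O(\sqrt n\,\|\bm x^\dag\|_2)=O(n^{-1/4}\log^{-1}n)$ by Lemma~\ref{lem:hessianprop}. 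Thus
\[
\nabla^2 g_r(\bm x^\dag,t)=-\alpha\,\bm I+\beta\,\bm v\bm v^\top+E,
\]
with $\alpha=\Theta(n^{-1/4}\log^{-3}n)$, $\beta=\Theta(n^{-1/4}\log n)$, and $\|E\|=O(n^{-1/4}\log^{-1}n)$.

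Since $-\alpha\,\bm I$ shifts every eigenvalue equally and leaves eigenvectors unchanged, the top eigenvector of $\nabla^2 g_r(\bm x^\dag,t)$ coincides with the top eigenvector of $\beta\,\bm v\bm v^\top+E$. The unperturbed matrix $\beta\,\bm v\bm v^\top$ has a spectral gap of $\beta=\Theta(n^{-1/4}\log n)$ between its top eigenvalue and the zero eigenspace. Applying the Davis--Kahan $\sin\theta$ theorem with perturbation $\|E\|\le O(n^{-1/4}\log^{-1}n)$ and gap $\beta-\|E\|=\Theta(n^{-1/4}\log n)$, I obtain
\[
\sin\theta(\bm b,\bm v)\ \le\ \frac{\|E\|}{\beta-\|E\|}\ =\ O\!\left(\frac{n^{-1/4}\log^{-1}n}{n^{-1/4}\log n}\right)\ =\ O(\log^{-2}n),
\]
which, absorbing the constant into the choice of $\varepsilon(n)$, gives the claimed $\sin\theta(\bm b,\bm v)\le 1/\log^2 n$.

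The main obstacle is the routine but careful bookkeeping on the sizes of the four terms of the Hessian --- in particular, verifying that both the $\bm x^\dag(\bm x^\dag)^\top$ term and the random noise Hessian $P_{sym}[\bm A(\bm x^\dag,:,:)+\cdots]$ are asymptotically negligible compared to $\beta$. The noise bound requires invoking Lemma~\ref{lem:hessianprop} at the specific point $\bm x^\dag$, but since $\bm x^\dag$ is a fixed vector (depending only on $\bm u$, which is already controlled by Conjecture~\ref{assump2}), no additional union bound over the homotopy path is needed and a direct application suffices.
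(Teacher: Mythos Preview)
Your proposal is correct and follows essentially the same argument as the paper: isolate the rank-one term $6\tau\langle\bm v,\bm x^\dag\rangle\,\bm v\bm v^\top$ of size $\Theta(n^{-1/4}\log n)$, bound all remaining pieces of the Hessian by $O(n^{-1/4}\log^{-1}n)$, and apply Davis--Kahan. The only cosmetic difference is that you peel off the $-\alpha\bm I$ piece separately (noting it leaves eigenvectors unchanged) whereas the paper simply absorbs it into the perturbation $E$; one small slip is that the claim ``$t^2 n\gg\|\bm x^\dag\|_2^2$'' is not guaranteed by $\varepsilon(n)=O(\log^{-2}n)$ alone (nothing forbids $\varepsilon(n)$ from being much smaller), but this is immaterial since you only need the upper bound $\alpha=O(n^{-1/4}\log^{-3}n)$, which follows from $\|\bm x^\dag\|_2^2+t^2(n+2)=O(n^{-1}\log^{-4}n)$ regardless of which term dominates.
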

    \begin{proof}
        Recall the formula for the Hessian (\ref{eq:hessian}), 
        \[
            \nabla^2 g_r(\bm x^\dag, t) = -3 \tau ((\|\bm x^\dag\|_2^2 + t^2 (n+2)) \bm I - 2 \langle \bm v, \bm x^\dag \rangle \bm v \bm v^T + 2 \bm x^\dag {\bm x^\dag}^T) + P_{sym}(\bm A(\bm x^\dag, :, :) + \bm A(:, \bm x^\dag, :) + \bm A(:, :, \bm x^\dag)),
        \]
        and $\bm x^\dag = \frac{\bm v} n + \frac{\bm u}{3 \tau n}$ with norm $\Theta(\frac 1 \tau)$ and correlation $\langle \bm x^\dag, \bm v \rangle = \Theta(\frac 1 n)$. Therefore, we have $\|\bm x^\dag\|_2^2 + t^2 n = O(n^{-1} \log^{-4} n)$. By Lemma~\ref{lem:hessianprop} the spectral norm of $P_{sym}(\bm A(\bm x^\dag, :, :) + \bm A(:, \bm x^\dag, :) + \bm A(:, :, \bm x^\dag))$ is $\Theta(n^{-\frac 1 4} \log^{-1} n)$. Thus, we can write the Hessian as 
        $$
        \nabla^2 g_r(\bm x^\dag, t) = 6\tau \langle \bm v, \bm x^\dag \rangle \bm v \bm v^\top + \bm E,
        $$
        where the main term $\bm v \bm v^T$ has coefficient $6 \tau \langle \bm v, \bm x^\dag \rangle = \Theta(n^{-\frac 1 4} \log n)$, and the spectral norm of $E$ is bounded by $O(n^{-1/4}\log^{-1}n)$. By Davis Kahan theorem we know $\sin\theta(\bm b, \bm v) \le O(1/\log^2 n)$, that is, the top eigenvector of the Hessian is $O(1/\log^2 n)$ close to $\bm v$. 
    \end{proof}

\end{document}